\def\1{\bm{1}}
\DeclareMathAlphabet{\mathsfit}{\encodingdefault}{\sfdefault}{m}{sl}
\SetMathAlphabet{\mathsfit}{bold}{\encodingdefault}{\sfdefault}{bx}{n}
\DeclareMathOperator*{\argmax}{arg\,max}
\newcommand{\p}{ {p}}
\newcommand{\HH}{ \mathtt{H}}
\newcommand{\MM}{ \mathtt{M}}
\newtheorem{proposition}{Proposition}
\newtheorem{lemma}{Lemma}
\newtheorem{theorem}{Theorem}
\newtheorem{corollary}{Corollary}
\title{Learning to Make Adherence-Aware Advice}
\author{Guanting Chen$^{1}$,\,\,\, Xiaocheng Li$^{2}$,\,\,\, Chunlin Sun$^{3}$,\,\,\, Hanzhao Wang$^{2}$  \\
$^{1}$\,\,Department of Statistics and Operations Research, UNC-Chapel Hill\\
$^{2}$\,\,Imperial College Business School, Imperial College London\\
$^{3}$\,\,Institute for Computational and Mathematical Engineering, Stanford University\\
\texttt{guanting@unc.edu} \\
\texttt{\{xiaocheng.li, h.wang19\}@imperial.ac.uk} \\
\texttt{chunlin@stanford.edu} \\
}
\begin{document}

\maketitle

\begin{abstract}
As artificial intelligence (AI) systems play an increasingly prominent role in human decision-making, challenges surface in the realm of human-AI interactions. One challenge arises from the suboptimal AI policies due to the inadequate consideration of humans disregarding AI recommendations, as well as the need for AI to provide advice selectively when it is most pertinent. This paper presents a sequential decision-making model that (i) takes into account the human's adherence level (the probability that the human follows/rejects machine advice) and (ii) incorporates a defer option so that the machine can temporarily refrain from making advice. We provide learning algorithms that learn the optimal advice policy and make advice only at critical time stamps. Compared to problem-agnostic reinforcement learning algorithms, our specialized learning algorithms not only enjoy better theoretical convergence properties but also show strong empirical performance.
\end{abstract}

\section{Introduction}
Artificial intelligence (AI) has achieved remarkable success across various aspects of everyday life. However, it is crucial to acknowledge that many of AI's accomplishments have been developed as fully automatic systems \citep{mnih2015human,silver2017mastering}. In several important domains like AI-assisted driving \citep{balachandran2021human} and AI-assisted healthcare \citep{shaheen2021applications}, AI is faced with the challenge of interacting with humans \citep{mozannar2020consistent,de2021classification}, introducing a more intricate and demanding dynamic. This interaction between AI and humans gives rise to two significant issues. Firstly, it is common for humans to reject following AI's advice, and if AI assumes humans' perfect adherence to its advice, the advice generated under this assumption may not be optimal. Secondly, humans may prefer AI to refrain from constant advice-giving, opting for AI intervention only when necessary. They may value their autonomy when performing well but expect AI guidance during critical moments or when they encounter situations in which they are typically less proficient. These considerations underscore the importance of comprehending human behavior and preferences to develop effective and adaptable AI systems for human-AI interactions. 

To address the mentioned challenges, in this paper, we provide a decision-making model for human-AI interactions. For the first challenge, the model takes into account the human's \textbf{adherence level}, defined as the probability that the human takes the AI's advice. This allows the machine to account for variations in human adherence level when making advice. For the second challenge, the AI model features an action named \textbf{defer}, which refrains from giving advice to humans. This feature recognizes that there are instances when humans prefer autonomy and only seek AI guidance during critical moments or situations where they typically struggle. By integrating the adherence level and action deferral into our model, we formulate these challenges as a decision-making problem.

To cater to this specialized decision-making model, we have developed tailored learning algorithms that are both provably convergent and empirically efficient. These algorithms are specifically designed to effectively handle the unique characteristics and challenges of the human-AI interaction setting.

\subsection{Related Work}
\noindent\textbf{Human-AI interactions}. \hspace{3mm} Human-AI interactions have long been studied in fields such as robotics. Methods for modeling human behaviors and collaborating with robots \citep{bobu2020less,laidlaw2022boltzmann,carroll2019utility} have achieved strong empirical performance. Similar to our definition of adherence level, a stream of literature \citep{chen2018planning, williams2023computational} integrates trust \citep{khavas2020modeling} as latent factors into the human-AI model and solves Partially Observable Markov Decision Process (POMDP) to get policies with strong empirical outcomes. Our work primarily centers on modeling and establishing theoretical foundations for the human-AI interaction model and the associated learning problems, thereby complementing the existing body of human-AI interaction literature.

\noindent\textbf{Modeling human-AI interactions}. \hspace{3mm} On the modeling side, \cite{grand2022best} propose the decision-making model that incorporates the adherence level and illustrates that when the adherence level is low, the optimal advice can be different from the optimal decision. Also, see \cite{sun2022predicting} for an applied setting of interacting with different adherence levels, \cite{shani2019exploration} for the relationship between the model and the exploration-conscious RL setting, and \cite{jacq2022lazy} for the so-called lazy-MDP that features an action similar to defer in our setting. 

\noindent\textbf{Machine learning in human-AI interactions}. \hspace{3mm} Although there has been no literature associated with learning the decision-making model similar to \cite{grand2022best} and \cite{jacq2022lazy}, other machine learning approaches have been put forward \citep{bastani2021improving, meresht2020learning, straitouri2021reinforcement,okati2021differentiable,chen2022algorithmic,hong2023learning,mao2024two, mohri2023learning} with different human-AI interaction settings.

\noindent\textbf{Theoretical reinforcement learning}. \hspace{3mm} Our first proposed algorithm is an optimism-based reinforcement learning method that learns the optimal advice policy. This approach is inspired by the theoretical online reinforcement learning literature \citep{jaksch2010near, lattimore2014near, dann2015sample, azar2017minimax, dann2017unifying, zanette2019tighter, domingues2021episodic}. Instead of directly applying the upper confidence bound in the literature, we customize the learning algorithm so that it leverages special properties in our decision-making model, resulting in advantages in theoretical properties and empirical performance. Our second algorithm adopts a reward-free exploration (RFE) approach \citep{jin2020reward}, which first explores the environment for a given number of episodes, and then becomes capable of outputting near-optimal policy for any bounded reward functions. We find this approach works well for learning algorithms that make pertinent advice. See \cite{zhang2020task, kaufmann2021adaptive, menard2021fast, miryoosefi2022simple} for the follow-up works in RFE.

Our contribution is twofold:

First, we propose a decision-making model for advice-giving that incorporates human's adherence level and an option for the AI to defer the advice and trust the human. This is a comprehensive modeling framework for effective human-AI interactions, where the optimal decision-making not only considers human adherence level but also makes advice/recommendations only at critical states.

Second, based on this decision-making model, we develop tailored learning algorithms that output near-optimal advice policies and know when to make pertinent advice. Compared to the state-of-the-art problem-agnostic RL algorithms, our algorithm features tighter sample complexity bound and stronger empirical performance.

\section{Model Setup}\label{sec_model}
Consider a human decision-maker that takes sequential actions under an episodic Markov decision process (MDP) described by the tuple $\mathcal{M}^{\HH} = (\mathcal{S}, \mathcal{A}, H, p, r)$.  The superscript ${\HH}$ emphasizes the human's involvement in this MDP, $\mathcal{S}$ denotes the set of states, $\mathcal{A}$ denotes the set of actions, $H$ is the horizon of each episode (different from the superscript ${\HH}$), $p$ denotes a deterministic time-dependent transition kernel so that $p_h(s'|s , a)$ is the transition probability from state $s\in \mathcal{S}$ to state $s'\in \mathcal{S}$ under the action $a\in\mathcal{A}$ at time $h$, and $r$ denotes a time-dependent reward function where $r_h(s, a)\in[0,1]$. Let $S = |\mathcal{S}|$ and $A = |\mathcal{A}|$ denote the cardinality of $\mathcal{S}$ and $\mathcal{A}$, respectively.

Suppose the human follows a fixed (suboptimal) policy $\pi^{\HH}$ such that the probability of taking action $a$ at state $s$ and time $h$ is $\pi_h^{\HH}(a|s)$. Alongside the human, an intelligent machine makes advice as decision support to improve the reward collected under $\pi_h^{\HH}$. In other words, the machine does not seek to change human policy but rather improve its final outcome given its suboptimality. Specifically, upon the arrival at each state, the machine can choose to make advice $a^{\MM}\in\mathcal{A}$ to the human (the superscript $\MM$ stands for the machine), or to trust the human and defer the action to the human, denoted by $a^{\MM}=\text{defer}$. If the machine chooses to defer, the human follows its default policy $\pi^{\HH}.$ If the machine chooses to advise, the human takes the machine's advice with probability $\theta(s,a^{\MM})\in [0,1]$, where $\theta(\cdot,\cdot)$ is the \textit{adherence} level of the human, and is defined as follows.

\textbf{Definition 1} \textit{The human's adherence level $\theta :  \mathcal{S} \times \mathcal{A} \rightarrow [0,1]$ is the probability of human adopting/adhering to the machine's certain advice at a certain state.}

Given the setup, the human takes action $a^{\HH}$ according to the following law:
\begin{equation}
\label{eqn:p_act}
    \begin{aligned}
    \mathbb{P}_h(a^{\mathtt{H}} = a|s, a^{\MM}) = \begin{cases}
    \pi_h^{\HH}(a|s), & \text{if $a^{\MM}=\text{defer}$},\\
     \theta(s,a^{\MM}),  &  \text{ if $a^{\MM}\neq\text{defer}$ and $a = a^{\MM}$}  \text{ (adhere),}\\
    (1 - \theta(s,a^{\MM}))\cdot\cfrac{\pi_h^{\HH}(a|s)}{1 - \pi_h^{\HH}(a^\MM|s)}, & \text{  if $a^{\MM}\neq\text{defer}$ and $a \neq a^{\MM}$} \text{ (not adhere)}.
    \end{cases}
    \end{aligned}
\end{equation}
To summarize, under the human-machine interaction, the underlying dynamic becomes
$$s_h \xrightarrow{\text{machine makes advice}} a^{\MM} \xrightarrow{\text{$a^{\HH}\sim\mathbb{P}_h(\cdot|s_h,a^{\MM})$}} a^{\HH} \xrightarrow{\text{$s_{h+1} \sim p_h(\cdot|s_h,a^{\HH})$}} s_{h+1}.$$
At each time $h$, the machine first makes the advice $a^{\MM}$ upon the state $s_h$ and the human incorporates the machine advice into a final action $a^{\HH}$, and then transit to the next state $s_{h+1}$.

\noindent\textbf{The machine's MDP}. From the machine's perspective, the MDP is slightly different from the MDP faced by human. It can be described by $\mathcal{M}^{\MM} = \left(\mathcal{S}, \bar{\mathcal{A}}, H, p^{\MM}, r^{\MM}\right)$. This MDP shares the same state space $\mathcal{S}$ and horizon $H$ as the human MDP $\mathcal{M}^{\HH}$. The action space is augmented to include the defer option $\bar{\mathcal{A}} = \mathcal{A} \cup \{\text{defer}\}$. In the machine's perspective, the transition  can be viewed as a direct consequence of making advice $a^{\MM}\in\bar{\mathcal{A}}$ (i.e, $s_h \to a^{\MM} \to s_{h+1}$), and the transition kernel becomes
\begin{equation}\label{eqn:pandr}
    \begin{aligned}
        p_h^{\MM}(s'|s,a^{\MM}) &= \sum_{a^{\HH} \in \mathcal{A}}p_h(s'|s,a^{\HH})\cdot \mathbb{P}_h(a^{\HH}|s,a^{\MM}),\\
    \end{aligned}
\end{equation}
where $p_h$ is the transition kernel of the MDP $\mathcal{M}^{\HH}$, and the probability $\mathbb{P}_h(\cdot|s,a)$ is specified by the adherence dynamics \eqref{eqn:p_act}. In parallel, we define the reward by marginalizing human's action
\begin{equation*}
r_{h}^{\MM}(s, a^{\MM}) = \sum_{a^{\HH} \in \mathcal{A}}r_h(s,a^{\HH})\cdot \mathbb{P}_h(a^{\HH}|s,a^{\MM}).
\end{equation*}

Denote $\bm{\pi} = \{\pi_h\}_{h\in[H]}$ the machine's policy where $\pi_h: \mathcal{S} \to \bar{\mathcal{A}}$. The value function then becomes
$$V_{h_0}^{\pi}(s) = \mathbb{E}\left[\sum_{h=h_0}^{H}r^{\MM}_{h}(s_{h}, a_{h})\Big \vert s_{h_0} = s\right], \,\,\, \text{where $a_{h} = \pi_{h}(s_{h})$ and $s_{h+1}\sim p_{h}^{\MM}(\cdot | s_{h}, a_{h})$},$$
and let $V^{\pi}_{H+1}(s) = 0$ for any $s \in \mathcal{S}$. 
The optimal value $V^*$ and the optimal policy $\pi^*$ are defined by
$$V_h^*(s) = \max_{\pi\in \Pi} V_h^{\pi}(s),  \ \ \pi^* = \argmax_{\pi\in \Pi} V_1^{\pi}(s)$$
where $\Pi$ consists of all deterministic non-anticipating Markov policies. Similarly, we define the corresponding $Q$ functions to be
$$Q_h^{\pi}(s,a) = r^{\MM}_h(s,a) + \sum_{s'\in \mathcal{S}}p^{\MM}_h(s'|s,a) V_{h+1}^{\pi}(s'), \text{ and } Q_h^*(s,a) = r^{\MM}_h(s,a) + \sum_{s'\in \mathcal{S}}p^{\MM}_h(s'|s,a) V_{h+1}^{*}(s').$$

\textbf{Human-centric system.} The theme of the formulation and all our following results is a human-centric decision system where the machine acknowledges the suboptimal behavior of the human and makes advice on critical states to improve the reward. So the learning and optimization of our paper take the perspective of the machine (solving $\mathcal{M}^\mathtt{M}$) and do not seek to change the underlying human policy $\pi^\mathtt{H}.$

\section{The learning problem}
Now we discuss learning problems associated with the above human-machine adherence model. We consider two \textit{learning environments} for the problem:

$\mathcal{E}_1$ (Environment 1 -- partially known): the environment's state transition kernel $p$, the reward $r$, and the human's behavior policy $\pi^{\HH}$, are known; the human's adherence level $\theta$ is unknown.

$\mathcal{E}_2$ (Environment 2 -- fully unknown): the environment's state transition kernel $p$, the reward $r$, the human's behavior policy $\pi^{\HH}$, and the human's adherence level $\theta$ are unknown.

For $\mathcal{E}_1$, the goal is simply to learn the optimal policy under the unknown adherence level $\theta$. We develop a learning algorithm that outputs $\epsilon$-optimal advice policy and features better sample complexity compared to the vanilla application of problem-agnostic RL methods on $\mathcal{M}^{\MM}$. For $\mathcal{E}_2$, we know neither the environment nor the human's policy. Thus the learning problem entails learning the dynamics of both the environment and the human policy. We develop a provably convergent learning algorithm that outputs the optimal policy, and in addition, the learned advice policy only gives advice when necessary (choosing to defer for non-critical steps).

Our investigations on these two learning formulations highlight three points. First, the inherent structure of the human-machine interaction allows more sample-efficient algorithms (than the vanilla application of the off-the-shelf RL algorithms) both theoretically and empirically. Second, the knowledge of the underlying environment ($\mathcal{E}_1$ compared against $\mathcal{E}_2$) significantly, also unsurprisingly, reduces the sample complexity of the learning algorithm. Third, we establish a close connection between the formulation of the human-machine interaction with the problems of reward-free exploration \citep{jin2020reward} and constrained MDPs \citep{altman2021constrained}. 

\subsection{Main Results}\label{subsec_results}

We first state the technical results and then present the detailed algorithms and analyses in the subsequent section. 


\textbf{Theorem 1 (Environment $\mathcal{E}_1$, informal)} For environment $\mathcal{E}_1$, Algorithm \ref{alg:alg1} finds an $\epsilon$-optimal advice policy with a PAC sample complexity $O(H^2S^2A/\epsilon^2)$ with high probability.

Under environment $\mathcal{E}_1$, Theorem 1 gives a PAC sample complexity for the UCB-type (Upper-Confidence-Bound-type) Algorithm \ref{alg:alg1}. We remark that applying the existing problem-agnostic algorithms can only achieve a suboptimal order of sample complexity on the problem: $O(H^3S^2A/\epsilon^2)$ via the model-based algorithm \citep{dann2015sample} and $O(H^4SA/\epsilon^2)$ via the model-free algorithm \citep{jin2018q}\footnote{The authors obtain a regret bound instead of PAC sample complexity bound. However, they convert the regret bound to a PAC sample complexity bound in \cite[Section 3.1]{jin2018q}}. Specifically, the bound in \cite{dann2015sample} gives an additional factor of $H$ compared to the bounds in the original setting, where stationary transition density is assumed; this is due to the fact that though the adherence level $\theta$ is stationary, the transition becomes non-stationary when compounding $\theta$ and underlying transition of the human's underlying MDP. Also, we note that such an improvement on $H$ is not due to a reduction in the number of unknown parameters because the adherence level $\theta$ has a dimensionality of $SA.$ Indeed, the key to the improvement is the intrinsic structure of the human-machine problem enables a more sample-efficient design of the UCB algorithm (See Section \ref{sec:alg1_ana} for details). Moreover, we also provide another algorithm that finds an $\epsilon$-optimal advice policy with a sample complexity of $O(H^3SA/\epsilon^2)$ for $\mathcal{E}_2$ (See Algorithm \ref{alg:alg3} in appendix \ref{ap_pf_alg2} for details).

For environment $\mathcal{E}_2$, we assume no prior knowledge at all, and this makes the machine's problem no different than a generic RL problem. Thus we consider a slight twist of the machine's MDP with the notion of \textit{pertinent} advice. This twisted formulation enables richer analytical structures and draws interesting connections with several existing frameworks. Specifically, consider a new machine's MDP 
$\mathcal{M}^{\MM}_{\beta} \in \left(\mathcal{S}, \bar{\mathcal{A}}, H, p^{\MM}, r^{\MM}_{\beta}\right)$ which inherits everything from $\mathcal{M}^{\MM}\in \left(\mathcal{S}, \bar{\mathcal{A}}, H, p^{\MM}, r^{\MM}\right)$ except for the reward 
\begin{align}\label{eqn_penalty}
    r^{\MM}_{h,\beta}(s,a) = r_h^{\MM}(s, a) - \beta\cdot \mathbb{I}\{a \neq \text{defer}\},
\end{align}
where the $\mathbb{I}\{\cdot\}$ is the indicator function and $\beta>0$ is a constant. Under $\mathcal{M}^{\MM}_{\beta}$, we denote $V^{\pi}_{\beta}$ and $V^{*}_{\beta}$ the value functions of $\pi$ and the optimal value function, respectively, and the optimal policy $\pi_{\beta}^* \in \argmax_{\pi} V_{\beta}^{\pi}$. The new reward function enforces a penalization of $\beta$ for making advice and thus regularizes the number of machine advices throughout the horizon. In practice, providing advice to human at every step can be annoying in applications such as gaming, driving, or sports. Hence, it is crucial to prioritize and selectively deliver advice based on its criticalness -- which we term informally as \textit{pertinent} advice. For example, when the human is an expert and already achieves near-optimal performance, there is no need to give advice; also, when the human is under-performing, and the adherence level is low, there is also no need to give advice because it is unlikely to be taken.

\begin{proposition}\label{prop_gap}
    For all $s\in\mathcal{S}$ and $h\in[H]$ such that $\pi^*_{h,\beta}(s)\neq\text{defer}$, we have
    \begin{equation*}
    Q_h^*(s, \pi^*_{h,\beta}(s)) - V_h^{\pi^{\HH}}(s) \ge \beta.
    \end{equation*}
\end{proposition}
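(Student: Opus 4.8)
The plan is to sandwich the penalized optimal value $V^*_{h,\beta}$ between the two quantities in the claim and then extract the $\beta$ gap from the fact that $\pi^*_\beta$ strictly prefers advising to deferring at $s$. Throughout I read $V^{\pi^{\HH}}_h$ as the value of the machine policy that defers at every state: deferring reproduces the human action law $\pi^{\HH}_h(\cdot\mid s)$ and costs no penalty, so this policy satisfies the Bellman recursion $V^{\pi^{\HH}}_h(s) = r^{\MM}_h(s,\text{defer}) + \sum_{s'} p^{\MM}_h(s'\mid s,\text{defer})\, V^{\pi^{\HH}}_{h+1}(s')$ identically in $\mathcal{M}^{\MM}$ and $\mathcal{M}^{\MM}_\beta$. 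The key structural fact I would establish first is the pointwise chain
$$ V^{\pi^{\HH}}_h(s) \;\le\; V^*_{h,\beta}(s) \;\le\; V^*_h(s), \qquad \forall\, h\in[H],\ s\in\mathcal{S}. $$
The right inequality is immediate because $r^{\MM}_{h,\beta}\le r^{\MM}_h$ pointwise, so every policy loses value under the penalty and hence so does the maximizer. The left inequality holds because the always-defer policy is feasible in $\mathcal{M}^{\MM}_\beta$ and, carrying no penalty, attains value exactly $V^{\pi^{\HH}}$ there, which the penalized optimum can only exceed.

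Next I would invoke the optimality of $\pi^*_\beta$ at the state in question. Fix $(s,h)$ with $a^\star := \pi^*_{h,\beta}(s)\neq\text{defer}$. Bellman optimality in $\mathcal{M}^{\MM}_\beta$ forces $Q^*_{h,\beta}(s,a^\star)\ge Q^*_{h,\beta}(s,\text{defer})$; substituting $r^{\MM}_{h,\beta}(s,a^\star)=r^{\MM}_h(s,a^\star)-\beta$ and $r^{\MM}_{h,\beta}(s,\text{defer})=r^{\MM}_h(s,\text{defer})$ and rearranging yields
$$ \underbrace{\Bigl(r^{\MM}_h(s,a^\star)+\textstyle\sum_{s'}p^{\MM}_h(s'\mid s,a^\star)V^*_{h+1,\beta}(s')\Bigr)}_{=:A} \;-\; \underbrace{\Bigl(r^{\MM}_h(s,\text{defer})+\textstyle\sum_{s'}p^{\MM}_h(s'\mid s,\text{defer})V^*_{h+1,\beta}(s')\Bigr)}_{=:B} \;\ge\; \beta. $$

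Finally I would transfer this gap from the penalized quantities to the ones in the claim by applying the sandwich on each branch separately. On the advise branch, $V^*_{h+1}\ge V^*_{h+1,\beta}$ gives $Q^*_h(s,a^\star)\ge A$; on the defer branch, $V^{\pi^{\HH}}_{h+1}\le V^*_{h+1,\beta}$ together with the defer Bellman recursion for $V^{\pi^{\HH}}_h$ gives $V^{\pi^{\HH}}_h(s)\le B$. Chaining these with the previous display,
$$ Q^*_h(s,a^\star)-V^{\pi^{\HH}}_h(s)\;\ge\; A-B\;\ge\;\beta, $$
which is exactly the assertion. The only real obstacle is getting the three-way comparison right: the $\beta$ advantage naturally lives in terms of the penalized value $V^*_{\beta}$, and the trick is to convert it into the unpenalized $Q^*_h$ and the defer value $V^{\pi^{\HH}}_h$ using monotonicity in opposite directions on the two branches. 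Everything else is routine Bellman bookkeeping, and the argument uses no property of the adherence dynamics \eqref{eqn:p_act} beyond the fact that deferring is the penalty-free action realizing $V^{\pi^{\HH}}$.
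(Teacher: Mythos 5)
Your proof is correct and follows essentially the same route as the paper's: both extract the $\beta$ gap from the penalized Bellman optimality at $(s,h)$ and then convert to the unpenalized quantities using exactly the two facts $V^{\pi^{\HH}}_{\beta}=V^{\pi^{\HH}}$ (defer is penalty-free) and $V^*_{\beta}\le V^*$. The only cosmetic difference is that you compare $Q^*_{h,\beta}(s,a^\star)$ against $Q^*_{h,\beta}(s,\text{defer})$ and then bound the latter below by $V^{\pi^{\HH}}_h(s)$, whereas the paper compares $V^*_{h,\beta}(s)$ directly against the always-defer value $V^{\pi^{\HH}}_{h,\beta}(s)$; the two comparisons are equivalent in substance.
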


The proposition says that if the machine takes $\pi^*_{h,\beta}(s)$ and sticks with the optimal policy afterward, the reward will be at least $\beta$ more than that if the machine chooses to defer all the way till the end. In this light, we can rank the criticalness of making advice at different states by solving $\mathcal{M}^{\MM}_{\beta}$ with different $\beta$ which gives a better interpretation of this human-machine system.

\textbf{Theorem 2 (Environment $\mathcal{E}_2$, informal)} For $\mathcal{E}_2$, Algorithm \ref{alg:alg2} outputs a family of $\epsilon$-optimal policies $\{\hat{\pi}_{\beta}\}_{\beta > 0}$ for $\{\mathcal{M}_{\beta}^{\HH}\}_{\beta > 0}$ with $O(H^5SA/\epsilon^2)$ episodes such that the following inequality
\begin{align}\label{obj2}
V_{1,\beta}^*(s_1) - V_{1,\beta}^{\hat{\pi}_{\beta}}(s_1) \leq \epsilon 
\end{align}
holds uniformly for all $\beta>0$ with high probability.

Theorem 2 gives the sample complexity of Algorithm \ref{alg:alg2} which learns a near-optimal policy for all the models $\{\mathcal{M}_{\beta}^{\HH}\}_{\beta \geq 0}$ simultaneously. Such joint learning not only provides a family of policies for the human to customize $\beta$ according to her/his performance but also gives us a handle to understand which are the critical states where the human's policy can be significantly improved.

\section{Algorithms and Analyses}
In this section, we present the algorithms and analyses that achieve the results mentioned previously.

\subsection{UCB-based algorithm for $\mathcal{E}_1$}

\label{sec:alg1_ana}
Under $\mathcal{E}_1$, the machine works with a human with unknown adherence level $\theta.$ An important property of $\theta$ is as follows. Basically, it states that the team of human and machine achieves a higher optimal reward if the human has a higher adherence level. To emphasize the dependence on $\theta$, we write
$$V_{h}^{\pi}(s|\theta) = \mathbb{E}\left[\sum_{h'=h}^{H}r^{\MM}_{h'}(s_{h'}, a_{h'})\Big \vert s_{h} = s, \text{adherence parameter }\theta\right] \,\,\text{and}\,\,\, V_h^*(s|\theta) = \max_{\pi\in \Pi} V_h^{\pi}(s|\theta).$$
\begin{proposition}[Monotonicity property]\label{prop_monotone}
Suppose $\theta_1 \geq \theta_2$ holds entry-wise, then the following inequality holds for all $s\in\mathcal{S}$ and $h\in[H]$
$$V_h^*(s|\theta_1) \geq V_h^*(s|\theta_2).$$
\end{proposition}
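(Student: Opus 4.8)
The plan is to prove the statement by backward induction on $h$, from the terminal condition $V_{H+1}^*(\cdot|\theta)=0$ down to $h=1$, with inductive hypothesis that $V_{h+1}^*(s'|\theta_1)\ge V_{h+1}^*(s'|\theta_2)$ for every $s'\in\mathcal S$. The structural fact that makes the induction work is that, for a fixed reference action $a\in\mathcal A$, both ``advise $a$'' and ``defer'' induce a human-action law that is a mixture of the point mass $\delta_a$ and the renormalized human policy $\tilde\pi_h(a'|s,a)=\pi_h^{\HH}(a'|s)/(1-\pi_h^{\HH}(a|s))$ on $a'\neq a$; the only difference is the mixing weight, which is $\theta(s,a)$ for advising and $\pi_h^{\HH}(a|s)$ for deferring. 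Writing $q^{\mathrm{ad}}(a,W)=r_h(s,a)+\sum_{s'}p_h(s'|s,a)W(s')$ and $q^{\mathrm{re}}(a,W)=\sum_{a'\neq a}\tilde\pi_h(a'|s,a)[r_h(s,a')+\sum_{s'}p_h(s'|s,a')W(s')]$, both $Q_h^*(s,a|\theta)$ and $Q_h^*(s,\mathrm{defer}|\theta)$ are values of the single affine function $f_i(\lambda)=\lambda\,q^{\mathrm{ad}}(a,V_{h+1}^*(\cdot|\theta_i))+(1-\lambda)\,q^{\mathrm{re}}(a,V_{h+1}^*(\cdot|\theta_i))$ evaluated at $\lambda=\theta_i(s,a)$ and $\lambda=\pi_h^{\HH}(a|s)$ respectively.

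In the inductive step I fix $s$ and let $a^*=\pi_{h,\theta_2}^*(s)$ be the action that is optimal under $\theta_2$. If $a^*=\mathrm{defer}$, the deferral reward and transition do not depend on $\theta$, so the inductive hypothesis together with nonnegativity of $p_h^{\MM}(\cdot|s,\mathrm{defer})$ gives $Q_h^*(s,\mathrm{defer}|\theta_1)\ge Q_h^*(s,\mathrm{defer}|\theta_2)=V_h^*(s|\theta_2)$, and since deferring is available under $\theta_1$ we are done. If $a^*\neq\mathrm{defer}$, I use the affine description above with reference action $a=a^*$. The inductive hypothesis and nonnegativity of the weights give $q^{\mathrm{ad}}(a^*,V_{h+1}^*(\cdot|\theta_1))\ge q^{\mathrm{ad}}(a^*,V_{h+1}^*(\cdot|\theta_2))$ and likewise for $q^{\mathrm{re}}$, hence $f_1(\lambda)\ge f_2(\lambda)$ for all $\lambda\in[0,1]$. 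Setting $\alpha=\theta_1(s,a^*)$, $\beta=\theta_2(s,a^*)$ and $\gamma=\pi_h^{\HH}(a^*|s)$, the target is $V_h^*(s|\theta_2)=f_2(\beta)$, and since $f_1(\beta)\ge f_2(\beta)$ it suffices to show $V_h^*(s|\theta_1)\ge f_1(\beta)$. Because advising $a^*$ and deferring are both available under $\theta_1$ and correspond to the points $f_1(\alpha)$ and $f_1(\gamma)$ of an affine function, whenever $\beta$ lies in the interval between $\gamma$ and $\alpha$ we get $f_1(\beta)\le\max\{f_1(\alpha),f_1(\gamma)\}\le V_h^*(s|\theta_1)$, completing the step.

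The main obstacle is precisely this last reduction: one must guarantee that the weight $\beta$ sits between the two achievable weights $\gamma$ and $\alpha$. Since $\theta_1\ge\theta_2$ gives $\alpha\ge\beta$ for free, what remains is the ordering $\gamma\le\beta$, i.e. $\pi_h^{\HH}(a^*|s)\le\theta_2(s,a^*)$ — that advising an action makes the human at least as likely to take it as the baseline propensity under $\pi^{\HH}$. I expect this to be the crux and to require the model's structural property that $\theta(s,a)\ge\pi_h^{\HH}(a|s)$ (or an equivalent hypothesis on the optimal advised action); indeed, without such an ordering the trick of advising an action with very low adherence in order to push the human onto the renormalized policy can make a larger $\theta$ strictly worse, so the monotonicity genuinely hinges on this point. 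I would therefore isolate the ordering $\gamma\le\beta$ as a lemma (or invoke the standing assumption), after which the affine/mixture comparison closes the induction uniformly over $s$ and $h$.
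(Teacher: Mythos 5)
Your argument is correct, and it rests on exactly the same key observation as the paper's proof, but it is packaged differently. The paper proceeds by policy simulation: starting from the optimal deterministic policy under $\theta_2$, it constructs a randomized policy under $\theta_1$ that, at each advised state, plays the advice with probability $\bigl(\theta_2(s,a)-\pi_h^{\HH}(a|s)\bigr)/\bigl(\theta_1(s,a)-\pi_h^{\HH}(a|s)\bigr)$ and defers otherwise, verifies that the induced human-action law coincides exactly with the $\theta_2$-dynamics, and then invokes the fact that deterministic policies attain the optimum over randomized ones. That randomization weight is precisely your interpolation coefficient writing $\beta$ as a convex combination of $\alpha$ and $\gamma$, so the two arguments are two presentations of one idea. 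What your backward induction buys is that you never need to introduce randomized policies or appeal to the equivalence of randomized and deterministic optima: the affine dependence of the $Q$-value on the mixing weight, together with the fact that both endpoints $f_1(\alpha)$ and $f_1(\gamma)$ are dominated by $V_h^*(s|\theta_1)$, closes the step directly and locally. What the paper's version buys is the slightly stronger exact identity $V^{\tilde\pi^{\theta_1}}(\cdot|\theta_1)=V^*(\cdot|\theta_2)$ rather than just an inequality. Both arguments hinge on the same hypothesis $\theta_i(s,a)\ge\pi_h^{\HH}(a|s)$, which does not appear in the proposition statement: the paper slips it in mid-proof (``we have assumed that \dots the advice has no negative effect''), whereas you correctly isolate it as the crux, and your remark that monotonicity can genuinely fail without it (by advising a low-adherence action to exploit the renormalized rejection distribution) is accurate; flagging this explicitly is an improvement in rigor over the paper's presentation.
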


Proposition \ref{prop_monotone} implies that finding an upper bound for the optimal value function reduces to finding an upper bound for $\theta$. Algorithm \ref{alg:alg1} follows this implication and maintains an optimistic estimate $\bar{\theta}^{t}$ for the true parameter $\theta$. For each episode, it generates the policy $\hat{\pi}_t$ pretending the $\bar{\theta}^{t}$ as true, and rolls out the episode according to $\hat{\pi}_t$. Then it updates the estimate with the new observations. The optimistic estimate $\bar{\theta}^{t}$ takes the form of a standard UCB form with a careful choice of the confidence width and we defer more details to Appendix \ref{ap_pf_alg1}. The algorithm shares the same intuition as other UCB-based algorithms that, with more and more observations, the confidence bound $\bar{\theta}^{t}$ will shrink to the true $\theta$, and so does the value functions.


\begin{algorithm}[ht!]
\caption{UCB-ADherence (UCB-AD)}
\label{alg:alg1}
\begin{algorithmic}[1]
\State Input: Target probability level $\delta$. 
\State Initialize $t = 1$, $\mathcal{D}_{t-1}=\emptyset$, and the optimistic estimate $\bar{\theta}^t = \bm{1}.$
\For{$t = 1, 2, \cdots $}
\State Solve the advice policy $\hat{\pi}^{t} = \argmax_{\pi} V^{\pi}(\cdot|\bar{\theta}^t)$ given the current optimistic estimate $\bar{\theta}^t$
\State Sample a new episode $z_t = \left\{s_1^t, a_1^{\MM,t}, a_1^{\HH,t}, r^t_1, \cdots, s_H^t, a_H^{\MM,t}, a_H^{\HH,t}, r_H^t\right\}$ following policy $\hat{\pi}^{t}$
\State Update $\mathcal{D}_t \leftarrow \mathcal{D}_{t-1} \cup \{z_{t}\}$ 
\State Update the optimistic estimate $\bar{\theta}^t\rightarrow \bar{\theta}^{t+1}$ based on $\mathcal{D}_t$ and $\delta$
\EndFor
\end{algorithmic}
\end{algorithm}

Theorem \ref{thm:UCB-AD} establishes an $(\epsilon, \delta)$-PAC result for Algorithm \ref{alg:alg1}.

\begin{theorem}
\label{thm:UCB-AD}
For any $\delta \in (0,1)$, $\epsilon \in (0, 1]$, and $T\in \mathbb{N}^+$, the number of policies among $\{\hat{\pi}^t\}_{t=1}^{T}$ from Algorithm \ref{alg:alg1} that are not $\epsilon$-optimal, i.e., $V_1^*(s_1)-V_1^{\hat{\pi}^{t}}(s_1)>\epsilon$, is bounded by $\tilde{O}\left(\frac{H^2S^2A}{\epsilon^2}\cdot \log\frac{1}{\delta}\right)$ with probability $1-\delta$.
\end{theorem}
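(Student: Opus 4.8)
The plan is to run the standard optimism-based (UCB) analysis, but to exploit two features specific to this model: the monotonicity of the optimal value in the adherence parameter (Proposition \ref{prop_monotone}) and the fact that the unknown $\theta(s,a)$ is \emph{time-invariant}, so adherence observations pool across all stages $h$ of every episode. \emph{Step 1 (valid optimism).} First I would show that the confidence construction makes $\bar\theta^t$ a genuine upper bound: on an event of probability at least $1-\delta$, simultaneously for all $t$ and all $(s,a)$ one has $\theta(s,a)\le\bar\theta^t(s,a)\le\theta(s,a)+w_t(s,a)$, where $w_t(s,a)$ is of order $\sqrt{\iota/N_t(s,a)}$, $N_t(s,a)$ counts how many times advice $a$ was issued at state $s$ across \emph{all} stages of the first $t-1$ episodes, and $\iota$ is a logarithmic factor in $S,A,1/\delta$. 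This is a Hoeffding bound for the Bernoulli adhere/not-adhere outcomes together with a union bound over $(s,a)$ and a peeling argument over $N_t$. Combining $\theta\le\bar\theta^t$ entrywise with Proposition \ref{prop_monotone} gives optimism, $V_1^*(s_1\mid\bar\theta^t)\ge V_1^*(s_1\mid\theta)=V_1^*(s_1)$, on this event.

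\emph{Step 2 (per-episode regret as a sum of widths).} Because $\hat\pi^t$ is optimal for $\bar\theta^t$, optimism yields
$$V_1^*(s_1)-V_1^{\hat\pi^t}(s_1)\le V_1^{\hat\pi^t}(s_1\mid\bar\theta^t)-V_1^{\hat\pi^t}(s_1\mid\theta),$$
and I would bound the right-hand side with the value-difference (simulation) lemma. The key structural fact is that both $r_h^{\MM}$ and $p_h^{\MM}(\cdot\mid s,a)$ are \emph{affine} in the scalar $\theta(s,a)$, as read off from \eqref{eqn:p_act} and \eqref{eqn:pandr}, with $L_1$-bounded derivative; hence the per-stage discrepancy is controlled by $|\bar\theta^t-\theta|$ weighted by the value range, and it vanishes at every stage where the machine defers. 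This rewrites the regret as $\lesssim\mathbb E_{\hat\pi^t}\big[\sum_{h}c\,w_t(s_h,a_h^{\MM})\big]$ along the trajectory, where the per-stage sensitivity $c$ collects a factor from the $L_1$ perturbation of the effective $S$-dimensional kernel and a factor from the value range.

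\emph{Step 3 (realized path) and Step 4 (counting).} I would pass from the trajectory expectation to the realized path via an Azuma--Hoeffding bound on the induced martingale, contributing only lower-order terms. Then, letting $\mathcal L=\{t:V_1^*(s_1)-V_1^{\hat\pi^t}(s_1)>\epsilon\}$ and $L=|\mathcal L|$, every bad episode satisfies $\epsilon\lesssim\sum_h c\,w_t(s_h,a_h^{\MM})$. Summing over $\mathcal L$ and applying a self-bounding argument --- lower-bounding $N_t(s,a)$ by the number of prior \emph{bad}-episode visits, reindexing by visit count, using $\sum_{n\le N}n^{-1/2}\le 2\sqrt N$, and Cauchy--Schwarz over the $SA$ pairs --- gives $\epsilon L\lesssim c\sqrt{\iota\,SA\cdot LH}$, since the bad episodes issue at most $LH$ pieces of advice in total. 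Solving for $L$ produces the stated $\tilde O(H^2S^2A/\epsilon^2)$. The factor of $H$ saved relative to the problem-agnostic $H^3$ bound is precisely the $\sqrt H$ gained in the Cauchy--Schwarz step by pooling the $SA$ counts over all stages, rather than maintaining separate per-stage counts $N_t^h(s,a)$.

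The hard part will be Step 4, specifically obtaining a mistake count independent of the total horizon $T$ of interaction. A cumulative-regret bound only yields an $O(\sqrt T)$-type guarantee, so the self-bounding reindexing that caps the width sum by $\sqrt{LH}$ rather than $\sqrt{TH}$ is essential; it must be combined carefully with the confidence-width scaling and the $L_1$ transition-perturbation estimate to land on the exact exponents in $H^2S^2A$. Getting the powers of $H$ and $S$ to come out correctly --- balancing the value-range factor against the $L_1$ estimation of the effective kernel, while keeping the count pooled over stages --- is where the delicate bookkeeping lies.
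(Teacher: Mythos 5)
Your Steps 1--3 track the paper's argument closely: the confidence set for $\theta$ pooled over all stages $h$, optimism via Proposition \ref{prop_monotone}, and the observation that $p_h^{\MM}$ and $r_h^{\MM}$ are affine in the scalar $\theta(s,a)$ (which is exactly how the paper's Lemma \ref{lem:station} transfers concentration from the non-stationary kernel back to the stationary adherence level). The genuine gap is in Step 4: the arithmetic of your self-bounding count does not produce the claimed $\tilde O(H^2S^2A/\epsilon^2)$. With a Hoeffding width $w_t(s,a)\asymp\sqrt{\iota/N_t(s,a)}$ and a per-stage sensitivity $c\asymp H$ (value range times the $L_1$ perturbation of the kernel, which is $O(|\bar\theta-\theta|)$ with no $\sqrt S$ because the kernel is affine in a scalar), a bad episode gives $\epsilon\lesssim H\,\mathbb E[\sum_h\sqrt{\iota/N_t}]$, and your reindexing/Cauchy--Schwarz step yields $\epsilon L\lesssim H\sqrt{\iota\,SA\cdot LH}$, hence $L\lesssim H^3SA\,\iota/\epsilon^2$. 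That is the bound the paper proves for its \emph{other} algorithm (Algorithm \ref{alg:alg3}); it is incomparable to $H^2S^2A/\epsilon^2$ and in particular does not imply the theorem when $H>S$. Your closing claim that the pooled Cauchy--Schwarz "saves the factor of $H$" only explains the improvement from per-stage counts ($HSA$ cells) to pooled counts ($SA$ cells); it supplies neither the missing $S$ in $S^2$ nor the additional factor of $H$ that must be removed from $H^3$.

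To actually land on $H^2S^2A$ you need two ingredients that are absent from the proposal and present in the paper. First, the confidence width must be Bernstein-type, scaling with $\sqrt{\theta(1-\theta)/n}$ rather than $\sqrt{1/n}$ --- this is why the algorithm's set \eqref{set:alg2_theta} carries the empirical-variance and $\sqrt{\theta(1-\theta)}$-estimation constraints, and why Lemma \ref{lem:station} delivers a bound proportional to $\sqrt{\hat p^{\MM,t}_h(1-\hat p^{\MM,t}_h)/n^t}$. Second, these variance-dependent widths must be combined with a law-of-total-variance style accounting of the value-difference terms; the paper imports this wholesale through the importance/knownness partition $X_{t,\kappa,\iota}$ and Lemmas 2--3 of \cite{dann2015sample}, which is where the branching factor $C\le S$ (giving the second $S$) and the reduction from $H^3$ to $H^2$ both originate. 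So the counting scheme you propose (regret-style self-bounding over bad episodes) is a legitimate alternative to the paper's partition argument, but fed with Hoeffding widths and a crude value-range bound it proves a different theorem; the "delicate bookkeeping" you flag at the end is not mere bookkeeping but a missing variance-aware analysis.
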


The proof of the theorem mimics the analysis of \cite{dann2015sample}. One caveat in the analysis is that the original analysis of \cite{dann2015sample} focuses on a stationary setting where transition probabilities depend solely on state and action, remaining independent of the time horizon. However, even when the adherence level $\theta$ remains the same over time, the machine's MDP is non-stationary. An direct adoption is to enlarge the state space to incorporate the horizon step $h$, yet this will result in a sample complexity of $O(H^3S^2A/\epsilon^2)$, a worse dependency on $H$. The key is to reduce the upper bound analysis to the adherence level space and utilize Proposition \ref{prop_monotone} to convert that into a suboptimality gap with respect to the value function. This treatment gives the desirable bound in Theorem \ref{thm:UCB-AD} which also outperforms the bound from a direct application of results from \cite{azar2017minimax} to non-stationary MDPs.

\subsection{Reward-free exploration algorithm for $\mathcal{E}_2$}
$\mathcal{E}_2$ has more unknown parameters than $\mathcal{E}_1$ and thus it naturally entails more intense exploration. Moreover, the learning objective becomes more complex: we aim not only to learn the near-optimal policy but also to discern the pertinent advice. 

Algorithm \ref{alg:alg2} is based on the concept of \textit{reward-free exploration} (RFE)  \citep{jin2020reward}. Specifically, RFE algorithms  usually consist of an exploration phase and a planning phase. During the exploration phase, the algorithm collects trajectories from an MDP $\mathcal{M}$ without a pre-specified reward function. In the planning phase, it can compute near-optimal policies of $\mathcal{M}$, given any deterministic reward functions that are bounded. 


In our human-machine model, the machine observes $s_h \to a^{\MM} \to a^{\HH} \to s_{h+1}$, and the trajectory for episode $t$ is 
$z_t = \{s_1^t, a_1^{\MM,t}, a_1^{\HH,t}, r_1^t, s_2^t, a_2^{\MM,t}, a_2^{\HH,t}, r_2^t, \cdots, s_H^t, a_H^{\MM,t}, a_H^{\HH,t}, r_H^t\}$, where $a_h^{\MM,t} = \pi^t(s_h^t)$, $a_h^{\HH,t} \sim \mathbb{P}_h(\cdot|s_h^t, a_h^{\MM,t})$, and $s_{h+1}^t \sim \p_h(\cdot|s_h^t, a_h^{\HH,t})$. We denote $\hat{p}^{\MM,t}_h$ and $\hat{r}^{\MM,t}_h$ the empirical estimation for $p^{\MM}$ and $r^{\MM}_h$, and $n_h^t(s,a) = \sum_{i=1}^t\mathbb{I}{\left\{\left(s_h^i, a_h^{\MM, i}\right) = (s, a)\right\}}$ the number of times the machine gives advice $a$ at time $h$ and state $s$ in the first $t$ episodes. The key quantity in Algorithm \ref{alg:alg2} is
\begin{equation}\label{eqn_W}
    \begin{aligned}
        W_h^t(s,a) = \min\left(H, 16H^2\cfrac{\phi(n_h^t(s,a), \delta)}{n_h^t(s,a)}+\left(1+\frac{1}{H}\right)\sum_{s'}\hat{p}^{\MM,t}_h(s'|s,a)\max_{a'}W_{h+1}^t(s',a')\right),
    \end{aligned}
\end{equation}
where $W^t_{H+1}(s, a) = 0$ for $(s,a)\in \mathcal{S}\times \mathcal{A}$, and $\phi(n,\delta)$ grows at the order of $O(\log(n)+\log(1/\delta))$ and is specified in Theorem \ref{thm_RFE}.

Now we formally introduce our Algorithm \ref{alg:alg2}. The algorithm iteratively minimizes an upper bound defined by \eqref{eqn_W} which measures the uncertainty of a state-action pair, and the upper bound shrinks as the number of visits for the state-action pair increases. The algorithm stops when the upper bound is less than a pre-specified threshold. This algorithm is inspired by the RF-Express algorithm \citep{menard2021fast}, and there is a slight difference in the definition of $W_h^t(s,a)$, $\phi(n,\delta)$ and the stopping rule. In our application, the reward $r^{\MM}$ is stochastic and we need to take care of the estimation error; while in \cite{menard2021fast}, the algorithm does not need to deal with the reward at all.
\begin{algorithm}[ht!]
\caption{: RFE-$\beta$}
\label{alg:alg2}
\begin{algorithmic}[1]
\State Input: $\epsilon, \delta$, \text{ and }user-specified $\{\beta_i\}_{i \in \mathcal{I}}$, where $\mathcal{I}$ could be any set where $\beta_i \in [0,H)$
\State \textbf{Stage 1: Reward-free exploration}
\State Initialize $t = 1$ and $W^t_h(s,a) = H$ for all $(s,a) \in \mathcal{S}\times\mathcal{A}$ 
\State Compute ${\pi}^t$ so that ${\pi}_h^t(s) = \argmax_{a\in\mathcal{A}}W_h^t(s, a)$ (see \eqref{eqn_W})
\While{$W_1^t(s_1, \pi^t(s_1))+4e\sqrt{W_1^t(s_1, \pi^t(s_1))} > \epsilon/H$}
\State Sample trajectory $z_t = \{s_1^t, a_1^{\MM,t}, a_1^{\HH,t},r_1^t, \cdots, s_H^t, a_H^{\MM,t}, a_H^{\HH,t}, r_H^t\}$ following $\pi^t$
\State update $t \leftarrow t+1$, $\mathcal{D} \leftarrow \mathcal{D}\cup \{z_t\}$, $\hat{p}^{\MM,t}_h(s'|s,a)$, $\hat{r}^{\MM,t}_h(s,a)$, and $W_h^t(s,a)$
\EndWhile
\State \textbf{Stage 2: Policy identification}
\State Use planning algorithms to output optimal advice policy $\{\hat{\pi}_{\beta_i}^{\tau}\}_{i\in\mathcal{I}}$ for $\left\{\left(\mathcal{S},\bar{\mathcal{A}},H,\hat{p}^{\MM}, \hat{r}^{\MM}_{\beta_i}\right)\right\}_{i\in\mathcal{I}}$
\end{algorithmic}
\end{algorithm}

\begin{theorem}\label{thm_RFE}
For $\delta \in (0,1)$, $\epsilon \in (0, 1]$, and $\phi(n,\delta) = 6\log(4HSA/(\epsilon\delta))+S\log(8e(n+1))$, with probability $1-\delta$, Stage 1 of Algorithm \ref{alg:alg2} stops in $\tau$ episodes and
$$\tau \leq C_1\cfrac{H^5SA}{\epsilon^2}\left(6\log(4HSA/(\epsilon\delta))+S\right),$$
where $C_1 = \tilde{O}(\log(HSA))$. Moreover, $\{\hat{\pi}^{\tau}_{\beta}\}_{\beta > 0}$ have the following property
\begin{align*}
    P\left(V_{1,\beta}^*(s_1) - V_{1,\beta}^{\hat{\pi}^{\tau}_{\beta}}(s_1) \leq \epsilon \,\,\text{uniformly for all $\beta \in [0, H)$}\right) > 1-\delta.
\end{align*}
\end{theorem}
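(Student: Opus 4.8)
The plan is to follow the template of the reward-free exploration analysis of \citep{menard2021fast} (RF-Express), adapting it to our setting where the base reward $r^{\MM}$ is itself stochastic and must be estimated. The argument decomposes into two largely independent pieces: a \emph{correctness} claim (the uniform-in-$\beta$ suboptimality bound) and a \emph{sample-complexity} claim (the bound on the stopping time $\tau$). The conceptual reason the guarantee holds \emph{uniformly} over $\beta\in[0,H)$ is that the exploration quantity $W_h^t$ in \eqref{eqn_W} depends only on the estimated transitions and the visitation counts, not on any reward; hence a single exploration phase simultaneously certifies low model-estimation error for every reward in the family $\{r^{\MM}_{\beta}\}_{\beta\in[0,H)}$. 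Each such reward has range at most $1+\beta<1+H$, which is exactly why the stopping threshold is $\epsilon/H$ rather than $\epsilon$.

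For correctness, I would first define the good event on which the empirical kernel $\hat p^{\MM,t}_h$ and the empirical reward $\hat r^{\MM,t}_h$ concentrate around $p^{\MM}_h$ and $r^{\MM}_h$; the inflation term $\phi(n,\delta)=6\log(4HSA/(\epsilon\delta))+S\log(8e(n+1))$ is chosen precisely so that Bernstein-type deviation bounds hold simultaneously for all $(h,s,a)$, all episode counts $n$, and all next states (the additive $S$ covers the $\ell_1$ deviation of the full transition vector). The central lemma, proved by backward induction on $h$, is that on the good event $W_h^t(s,a)$ is a valid upper bound on the value-estimation error: for any reward bounded in $[0,1]$, the gap between the true optimal value and the value of the greedy policy computed on the empirical model is controlled by $W$. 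The engine of the induction is the self-bounding $(1+1/H)$ multiplier in \eqref{eqn_W}, which lets the second-order (variance) cross-terms generated when transferring between $p^{\MM}$ and $\hat p^{\MM}$ be reabsorbed into the next-step $W$, at the cost of an overall factor $(1+1/H)^{H}\le e$. Combining this lemma with the rescaling for rewards of range $O(H)$ yields $V^*_{1,\beta}(s_1)-V^{\hat\pi^\tau_\beta}_{1,\beta}(s_1)\le H\bigl(W_1^\tau(s_1,\pi^\tau(s_1))+4e\sqrt{W_1^\tau(s_1,\pi^\tau(s_1))}\bigr)$ for every $\beta$; the stopping rule forces the right-hand side below $\epsilon$, giving the uniform guarantee.

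For the sample complexity, unrolling the recursion \eqref{eqn_W} with the greedy exploration policy $\pi^t$ bounds $W_1^t(s_1,\pi^t(s_1))$ by $e$ times the expected sum of per-step bonuses $16H^2\phi(n_h^t,\delta)/n_h^t$ along a trajectory. Summing over episodes $t=1,\dots,T$ and passing (on the good event) from the expected visitation under the empirical model to the realized counts, I would apply the standard pigeonhole/harmonic-sum bound $\sum_{t}\phi(n_h^t(s_h^t,a_h^t),\delta)/n_h^t(s_h^t,a_h^t)=\tilde{O}(SA\,\phi)$ per horizon step, so that $\sum_{t=1}^{T} W_1^t(s_1,\pi^t(s_1))=\tilde{O}(H^3 SA\,\phi)$ and hence $\min_{t\le T}W_1^t=\tilde{O}(H^3 SA\,\phi/T)$. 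Because the algorithm has not stopped before $\tau$, every earlier iterate violates the threshold; inverting the stopping condition (for small values the $\sqrt{W}$ term dominates, so stopping requires $W_1^\tau\lesssim \epsilon^2/H^2$) gives $\tau=\tilde{O}(H^5 SA\,\phi/\epsilon^2)$, which is the stated bound once $\phi$ is expanded and the $\log$ factors are absorbed into $C_1$. A final union bound over the good event closes the $1-\delta$ probability for both claims at once.

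The hard part will be the central induction lemma showing that $W$ dominates the value-estimation error for all bounded rewards simultaneously. Two points require care beyond a verbatim citation of \citep{menard2021fast}: first, the self-bounding argument must be arranged so that the second-order transfer terms are genuinely absorbed by the $(1+1/H)$ factor, getting the constants and the $4e\sqrt{\cdot}$ correction term right; second, unlike the cited work we must also propagate the reward-estimation error of the stochastic $\hat r^{\MM}$ through the same recursion, which is lower order (a one-step quantity with no horizon amplification) but is exactly what forces $\phi$ and the definition of $W_h^t$ to differ slightly from the original. I also expect the visitation-transfer step, comparing expected counts under $\hat p^{\MM}$ to realized counts, to need its own concentration argument folded into the good event.
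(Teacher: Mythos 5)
Your proposal is correct and follows essentially the same route as the paper's proof: the RF-Express template of \citep{menard2021fast} with the decomposition $V^*_{1,\beta}-V^{\hat\pi_\beta}_{1,\beta}\le|V^*-\hat V^{\pi^*}|+|\hat V^{\hat\pi}-V^{\hat\pi}|$, a backward-induction lemma showing $W_h^t$ (which is reward-independent) dominates the model-estimation error uniformly over bounded rewards via the $(1+1/H)$ self-bounding trick, an extra Hoeffding event for the stochastic $\hat r^{\MM}$, a pigeonhole/harmonic-sum bound on $\sum_t W_1^t$ for the stopping time, and the range-$H$ rescaling that converts the $\epsilon/H$ threshold into the uniform-in-$\beta$ guarantee. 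The only cosmetic difference is that the paper first proves the unpenalized case (threshold $\epsilon/2$) and then rescales, whereas you rescale directly.
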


Theorem \ref{thm_RFE} ensures that Algorithm \ref{alg:alg2} provides sample estimation for the underlying MDP such that all the policy $\{\hat{\pi}_{\beta}^{\tau}\}_{\beta \in [0,H)}$ for pertinent advice are near optimal. The proof is a direct application of the RF-Express \citep{menard2021fast}, except that we have to take care of the estimation error in $\hat{r}^{\MM}$. Although Algorithm \ref{alg:alg2} has the uniform convergence property for any number of bounded reward functions, it can also be used the same way as Algorithm \ref{alg:alg1}, to find the $\epsilon$-optimal policy for $\mathcal{M}^{\MM}$ if provided with the non-penalized reward function $\hat{r}^{\MM}$. In this context, we can modify RFE-$\beta$ so that with high probability, it solves $\mathcal{M}^{\MM}$ with a sample complexity of $O(H^3SA/\epsilon^2)$ (See Algorithm \ref{alg:alg3} in Appendix \ref{ap_pf_alg2} for details).

\textbf{CMDP for pertinent advice.} The algorithm RFE-$\beta$ solves a class of problems $\{\mathcal{M}_\beta^{\mathtt{M}}\}_{\beta > 0}$ simultaneously for all the $\beta$'s and it measures the pertinence of advice by $\beta$. However, sometimes humans lack a quantitative view of how large a $\beta$ value should be considered as pertinent. Here, we introduce a different perspective on how the human should rank the importance of advice, framing it as ``in $H$ steps, I want advice no more than $D$ times'', and formulate this as a CMDP problem
\begin{equation}\label{eqn_cmdp}
    \begin{aligned}
        \max_{\pi} \,\,\,& \mathbb{E}^{\pi}\left[\sum_{h=1}^H r^{\MM}(s_h, a_h)\right]\,\,\,\,\,\,\,
        s.t.\,\,\, \mathbb{E}^{\pi}\left[\sum_{h=1}^H \mathbb{I}\{a_h \neq \text{defer}\}\right] \leq D,
    \end{aligned}
\end{equation}
where $D \in (0, H)$. From the standard primal-dual theorem, this formulation is closely related to the penalty $\beta$ in \eqref{eqn_penalty}, for the reason that we can treat $\beta$ as a dual variable for the constraint $D$. We refer the reader to the proof of Corollary \ref{cor_CMDP} in Appendix \ref{ap_pf_alg2} for details.

Now we present the CMDP method for pertinent advice. After stage 1 of RFE-$\beta$, we solve
\begin{equation}\label{eqn_sample_cmdp}
    \begin{aligned}
        \max_{\pi} \,\,\,\hat{\mathbb{E}}^{\pi}\left[\sum_{h=1}^H \hat{r}^{\MM,\tau}(s_h, a_h)\right]\hspace{5mm}
        s.t.\,\,\, \hat{\mathbb{E}}^{\pi}\left[\sum_{h=1}^H \mathbb{I}\{a_h \neq \text{defer}\}\right] \leq D,
    \end{aligned}
\end{equation}
where $\hat{\mathbb{E}}$ is the expectation with the underlying transition being $\hat{p}^{\MM,\tau}$. The next corollary states that $\hat{\pi}^{\tau}_D$, the solution for \eqref{eqn_sample_cmdp}, is a near-optimal policy for the CMDP \eqref{eqn_cmdp}.

\begin{corollary}\label{cor_CMDP}
In the same setting of Theorem \ref{thm_RFE}, for $\delta \in (0,1)$ and $\epsilon \in (0, 1]$, with probability $1-\delta$, for all $D \in (0,H)$, $\hat{\pi}^{\tau}_D$ is a near-optimal solution for the original CMDP \eqref{eqn_cmdp} such that 
\begin{equation}
    \begin{aligned}
        V_1^{\hat{\pi}^{\tau}_D}(s_1) \geq  V_1^{\pi^*_D}(s_1) - 2\epsilon,\hspace{0.5cm} \text{and }\hspace{0.5cm}\mathbb{E}^{\hat{\pi}^{\tau}_D} \left[\sum_{h=1}^H \mathbb{I}\{a_h \neq \text{defer}\}\right] \leq D+\epsilon
    \end{aligned}
\end{equation}
where ${\pi^*_D}$ is the optimal solution for \eqref{eqn_cmdp}.
\end{corollary}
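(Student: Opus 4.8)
The plan is to treat both constrained problems through Lagrangian duality, exploiting that the penalty $\beta$ in \eqref{eqn_penalty} is precisely the multiplier dual to the advice-budget constraint, and then to transfer near-optimality from the estimated model to the true model using the uniform-in-$\beta$ guarantee behind Theorem \ref{thm_RFE}. Write $C^\pi = \mathbb{E}^\pi[\sum_{h=1}^H \mathbb{I}\{a_h \neq \text{defer}\}]$ and let $\hat{C}^\pi$, $\hat{V}_1^\pi$, $\hat{V}_{1,\beta}^\pi$ denote the corresponding quantities computed under the estimated model $(\hat{p}^{\MM,\tau}, \hat{r}^{\MM,\tau})$; note the decompositions $V_{1,\beta}^\pi(s_1) = V_1^\pi(s_1) - \beta\,C^\pi$ and $\hat{V}_{1,\beta}^\pi(s_1) = \hat{V}_1^\pi(s_1) - \beta\,\hat{C}^\pi$. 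First I would record that, in the finite-horizon tabular setting, both \eqref{eqn_cmdp} and \eqref{eqn_sample_cmdp} admit an occupancy-measure linear-programming reformulation and hence satisfy strong duality \citep{altman2021constrained}, giving $V_1^{\pi^*_D}(s_1) = \min_{\beta \geq 0}\{V_{1,\beta}^*(s_1) + \beta D\}$ with an optimal multiplier $\beta^*$, and the analogous sample-side identity $\hat{V}_1^{\hat\pi^\tau_D}(s_1) = \hat{V}_{1,\hat\beta^*}^*(s_1) + \hat\beta^* D$. The accounting underlying Proposition \ref{prop_gap} -- advising can increase the continuation value by at most $H$ -- forces both optimal multipliers into $[0,H)$, so they fall inside the range over which Theorem \ref{thm_RFE} certifies accuracy.

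Second, from the reward-free good event I would extract the two transfer inequalities actually needed, each holding uniformly over policies: a reward bound $|V_1^\pi(s_1) - \hat{V}_1^\pi(s_1)| \leq \epsilon_0$ and a cost bound $|C^\pi - \hat{C}^\pi| \leq \epsilon_0$. The former is the $\beta = 0$ instance of the uniform value approximation; the latter applies the same model-accuracy guarantee to the per-step reward $\mathbb{I}\{a \neq \text{defer}\} \in [0,1]$ (equivalently, it is read off the finite difference of $V_{1,\beta}$ in $\beta$). Here I rely on the model-accuracy event underlying the RF-Express analysis rather than only the packaged inequality of Theorem \ref{thm_RFE}, which is legitimate since that theorem is itself proved by invoking this event.

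Third, the constraint conclusion is then immediate: as $\hat\pi^\tau_D$ is feasible for \eqref{eqn_sample_cmdp}, $C^{\hat\pi^\tau_D} \leq \hat{C}^{\hat\pi^\tau_D} + \epsilon_0 \leq D + \epsilon_0$. For the reward conclusion I would chain $V_1^{\hat\pi^\tau_D}(s_1) \geq \hat{V}_1^{\hat\pi^\tau_D}(s_1) - \epsilon_0$, then insert sample-side duality together with the suboptimality of $\pi^*_D$ for the penalized sample objective to get $\hat{V}_1^{\hat\pi^\tau_D}(s_1) = \hat{V}_{1,\hat\beta^*}^*(s_1) + \hat\beta^* D \geq \hat{V}_{1,\hat\beta^*}^{\pi^*_D}(s_1) + \hat\beta^* D = \hat{V}_1^{\pi^*_D}(s_1) + \hat\beta^*(D - \hat{C}^{\pi^*_D})$, and finally bound $\hat{C}^{\pi^*_D} \leq C^{\pi^*_D} + \epsilon_0 \leq D + \epsilon_0$ and $\hat{V}_1^{\pi^*_D}(s_1) \geq V_1^{\pi^*_D}(s_1) - \epsilon_0$. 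Collecting the terms yields $V_1^{\hat\pi^\tau_D}(s_1) \geq V_1^{\pi^*_D}(s_1) - (2 + \hat\beta^*)\epsilon_0$, and running the exploration phase at a target accuracy small enough that $(2 + \hat\beta^*)\epsilon_0 \leq 2\epsilon$ -- feasible because $\hat\beta^* < H$ -- delivers both stated bounds.

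The step I expect to be the \emph{main obstacle} is the reward-side feasibility transfer: the true optimum $\pi^*_D$ need not be exactly feasible for the sample constraint, so the two constrained maxima cannot be compared directly at the common budget $D$. Sample-side strong duality is the device that resolves this, converting the budget slack $D - \hat{C}^{\pi^*_D}$ into a penalty weighted by the \emph{bounded} multiplier $\hat\beta^*$; this is exactly where the $[0,H)$ bound on the multiplier (via Proposition \ref{prop_gap}) is indispensable and where a factor scaling with $H$, hence a correspondingly finer exploration accuracy $\epsilon_0$, must enter. Subsidiary points I would check are that the sample CMDP attains its optimum with a finite multiplier (guaranteed by the LP reformulation) and that the uniform cost-transfer bound covers the possibly stochastic policies a CMDP optimum may require.
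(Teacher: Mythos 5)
Your proposal is correct and follows essentially the same route as the paper's proof: establish the cost bound directly from feasibility of $\hat{\pi}^{\tau}_D$ for the sample CMDP plus the $\epsilon/H$-accurate cost transfer, and handle the reward side via sample-side strong duality, converting the possible infeasibility of $\pi^*_D$ for the sample constraint into a slack term weighted by the optimal multiplier $\hat{\beta}^* \in [0,H)$, which the $\epsilon/H$-scale exploration accuracy absorbs into the final $2\epsilon$. The only cosmetic difference is that you work with the dual value identity directly, whereas the paper splits into the cases $\hat{\beta}_D = 0$ and $\hat{\beta}_D \in (0,H)$; the substance is the same.
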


Corollary 1 also implies that RFE-$\beta$ can compute near-optimal policies of CMDP \eqref{eqn_cmdp} for \textbf{all} the constraints $D \in [0,H)$, with a sample complexity of $O(H^5SA/\epsilon^2)$. Compared to other CMDP learning algorithms (for example, $O(H^2S^3A/\epsilon^2)$ in \cite{kalagarla2021sample}), the sample complexity of Corollary 1 features a lower order in $S$. Moreover, the near-optimal result holds for all constraints $D \in [0,H)$, and for other CMDP learning algorithms, the result only holds for a pre-specified $D$.


\section{Numerical Experiment}\label{sec:exp}
We perform numerical experiments under two environments: \textit{Flappy Bird} \citep{williams2023computational} and \textit{Car Driving} \cite{meresht2020learning}. Both Atari game-like environments are suitable and convenient for modeling human behavior while retaining the learning structure for the machine. We focus on the flappy bird environment here and defer the car driving environment to Appendix \ref{apnd:B}.

\textbf{Flappy Bird Environment.} We consider a game map of a 7-by-20 grid of cells. Each cell can be empty, contain a star, or act as a wall. The goal is to navigate the bird across the map from left to right and collect as many stars as possible. However, colliding with a wall or reaching the (upper and lower) boundaries leads to the end of the game. An example map is displayed in Figure \ref{fig_flappy}, which splits into three phases: the first phase contains almost only stars and no walls, the second phase contains almost only walls and very few stars, and the third phase contains both stars and walls.
\begin{figure}[ht!]
    \centering
\includegraphics[scale=0.25]{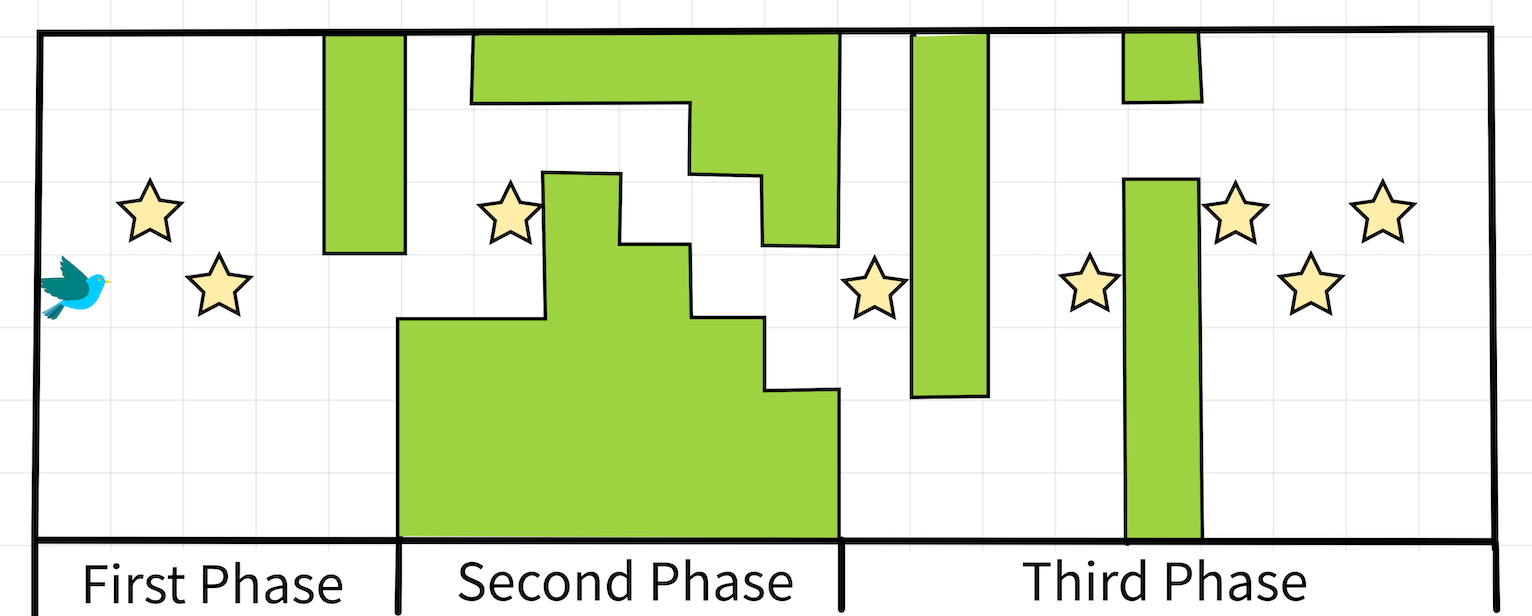}
\vspace{-0.2cm}
    \caption{Flappy Bird environment: player needs to navigate the bird to avoid walls and collect stars.}
    \label{fig_flappy}
\end{figure}

We define the state space as the current locations of the bird on the grid, represented by coordinates $(x, y) \in \mathbb{Z}^2$, with a total of $7 \times 20 = 140$ states.  Regarding the action space, we define it as $\mathcal{A} = \{\text{Up, Up-Up, Down}\}$. Each action causes the bird to move forward by one cell. In addition, the ``Up'' action moves the bird one cell upwards, the ``Up-Up'' action moves it two cells upwards, and the ``Down'' action moves it one cell downwards. The MDP has a reward as a function of state only. We will get a reward of $1$ when the current state (location) has a star and otherwise $0$. To model human behavior, we consider two sub-optimal human policies:  \textbf{Policy Greedy}, which prioritizes collecting stars in the next column, and \textbf{Policy Safe}, which focuses on avoiding walls in the next column.  If there is no preferred action available, both policies maintain a horizontal zig-zag line by alternating between ``Up'' and ``Down''. For adherence level $\theta$, we assume for all $s\in \mathcal{S}$ and $h=1,...,H$, the human will adhere to the advice with probability $0.9$ except the aggressive advice ``Up-up'' (which moves too fast vertically) with adherence level $0.7$. We compare the following algorithms:
\begin{itemize}
    \item UCB-ADherence (UCB-AD): Algorithm \ref{alg:alg1} that finds the $\epsilon$-optimal advice policy.
    \item RFE-ADvice (RFE-AD): Algorithm \ref{alg:alg3}, a variant of RFE-$\beta$ that finds the $\epsilon$-optimal policy. 
    \item RFE-$\beta$: Algorithm \ref{alg:alg2} that outputs pertinent advice policy by exploring then planning. 
    \item RFE-CMDP: A variant of RFE-$\beta$ that solves the CMDP \eqref{eqn_sample_cmdp} after exploring.
\end{itemize}

Figure \ref{fig:regret_bird_greedy} and \ref{fig:regret_bird_safe} present the results for the two algorithms UCB-AD and RFE-AD for the environment $\mathcal{E}_1$. It also includes the state-of-the-art algorithm EULER \citep{zanette2019tighter} that achieves a generic minimax optimal regret. From the regret plot, UCB-AD outperforms both RFE-AD and EULER. This advantage is attributed to UCB-AD's effective utilization of the information and structure of the underlying MDP. These results also show that our tailored algorithms UCB-AD and RFE-AD are much more efficient
than directly applying problem-agnostic RL algorithms in the adherence model. We further test UCB-AD with different $\theta$'s: with $\theta_1$, $\theta(a,s)\equiv 0.8$ and with $\theta_2$, $\theta(a,s)\equiv 0.4$. Figure \ref{fig:regret_UCB_AD_thetas} shows the relationship between the regret of UCB-AD and $\theta$: for both policies, UCB-AD can achieve smaller regret with higher $\theta$. Intuitively, a high adherence level implies a high probability of following the advice instead of taking $\pi^{\HH}$, which will reduce the regret caused by the suboptimality of $\pi^{\HH}$.  
\begin{figure}[ht!]
\centering
\begin{subfigure}[c]{0.32\textwidth}
\includegraphics[height=0.17\textheight]{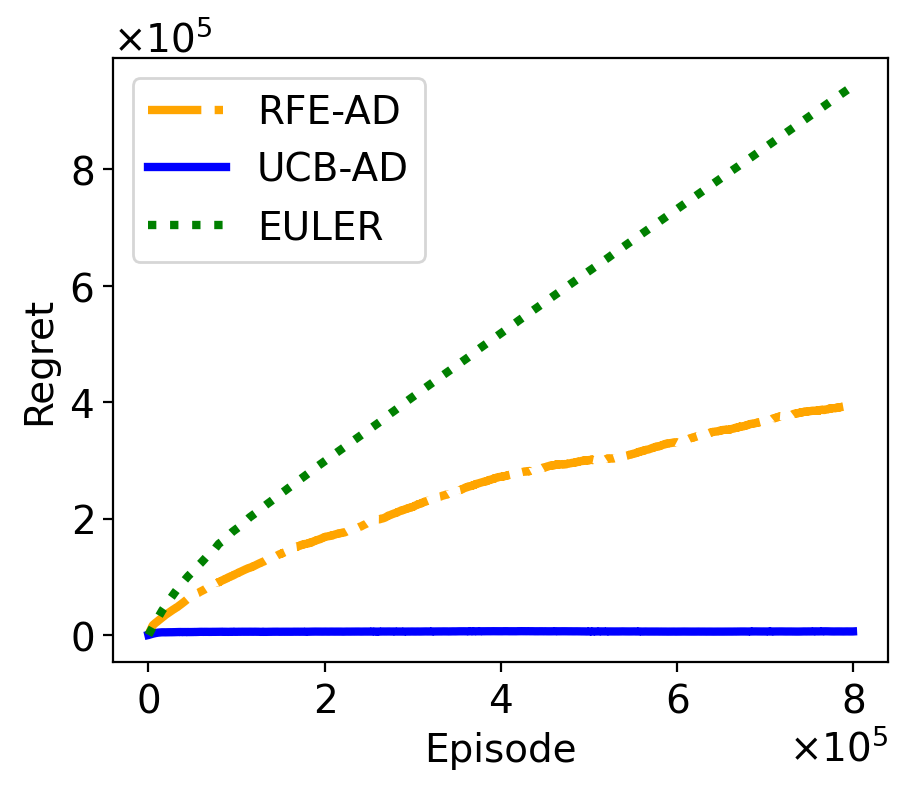}
\vspace{-0.2cm}
   \caption{Regrets of Policy Greedy.}
   \label{fig:regret_bird_greedy} 
\end{subfigure}
\hfill
\begin{subfigure}[c]{0.32\textwidth}
\includegraphics[height=0.17\textheight]{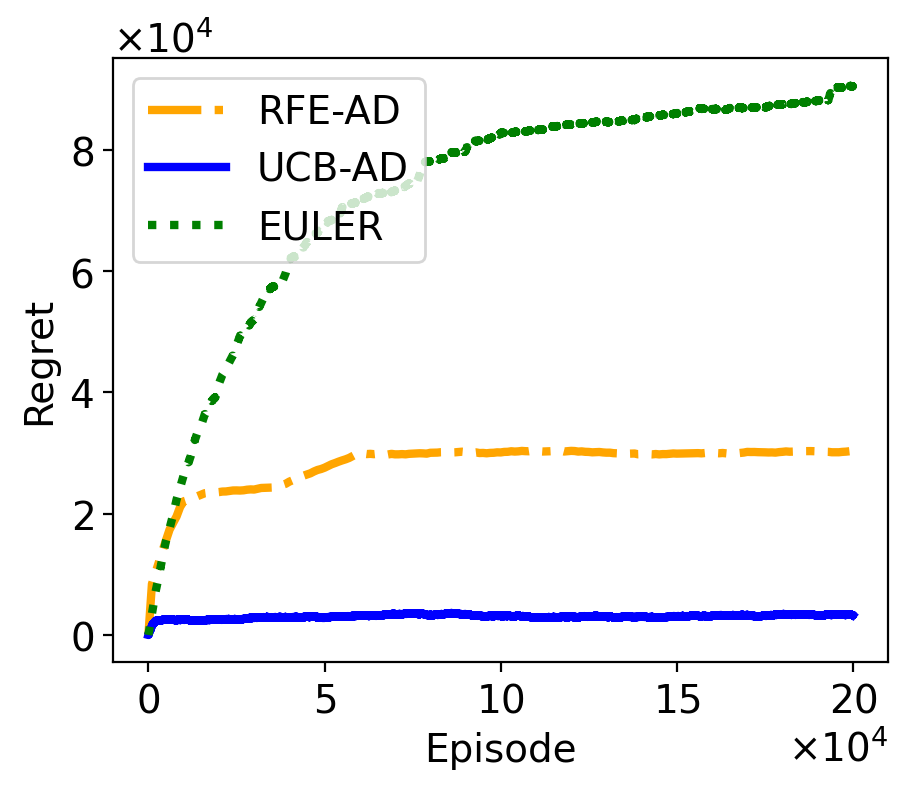}
\vspace{-0.2cm}
   \caption{Regrets of Policy Safe.}
   \label{fig:regret_bird_safe} 
\end{subfigure}
\hfill
\begin{subfigure}[c]{0.32\textwidth}
\includegraphics[height=0.17\textheight]{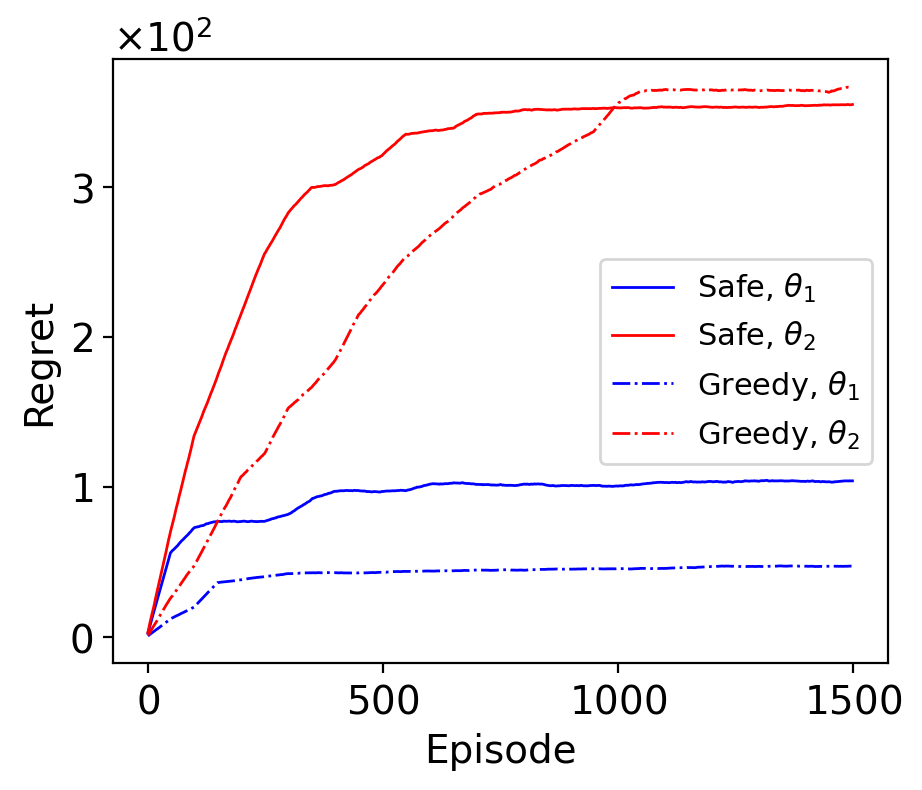}
\vspace{-0.2cm}
   \caption{Regrets of UCB-AD.}
   \label{fig:regret_UCB_AD_thetas} 
\end{subfigure}
\vspace{-0.2cm}
\caption{The regrets for learning the optimal advice for  Policy Greedy and Policy Safe. Figure \ref{fig:regret_bird_greedy}, \ref{fig:regret_bird_safe} show the regrets of RFE-AD, UCB-AD, and EULER for two policies respectively. Figure \ref{fig:regret_UCB_AD_thetas} shows the regrets of UCB-AD for two policies under different $\theta$'s. }
\label{fig:regret_bird}
\end{figure}

Figure \ref{fig:interp_figs} summarizes results for three policies under the environment $\mathcal{E}_2$, namely RFE-$\beta$, RFE-CMDP, and UC-CFH, a provably convergent CMDP algorithm \citep{kalagarla2021sample}, under Policy Safe. In Figure \ref{fig:cvg_beta}, we see that RFE-$\beta$ exhibits convergence for different $\beta$'s, and this empirically corroborates the theoretical finding. In Figure \ref{fig:cvg_CMDP_FR}, we compare RFE-CMDP and UC-CFH  under a simpler environment with the advice budget being 1 ($D = 1$). We observe that RFE-CMDP shows a marginal performance advantage over UC-CFH in terms of the convergence rate. More importantly, Figure \ref{fig:value_post_CMDP} shows by only using the estimated transition kernel after learning for $D = 1$ (Figure \ref{fig:cvg_CMDP_FR}), RFE-CMDP is able to obtain near-optimal policy for problem instances with different advice budgets ($D = 2,3,4$ and $5$). However, UC-CFH fails to explore the whole transition kernel sufficiently and can only output the near-optimal policy for the original problem instance. Moreover, RFE-CMDP is more sample efficient with respect to the advice budget, because for UC-CFH, we have to run multiple times with different advice budget parameters to get a near-optimal policy for all of them.

\begin{figure}[ht!]
\centering
\begin{subfigure}[c]{0.32\textwidth}
\includegraphics[height=0.16\textheight]{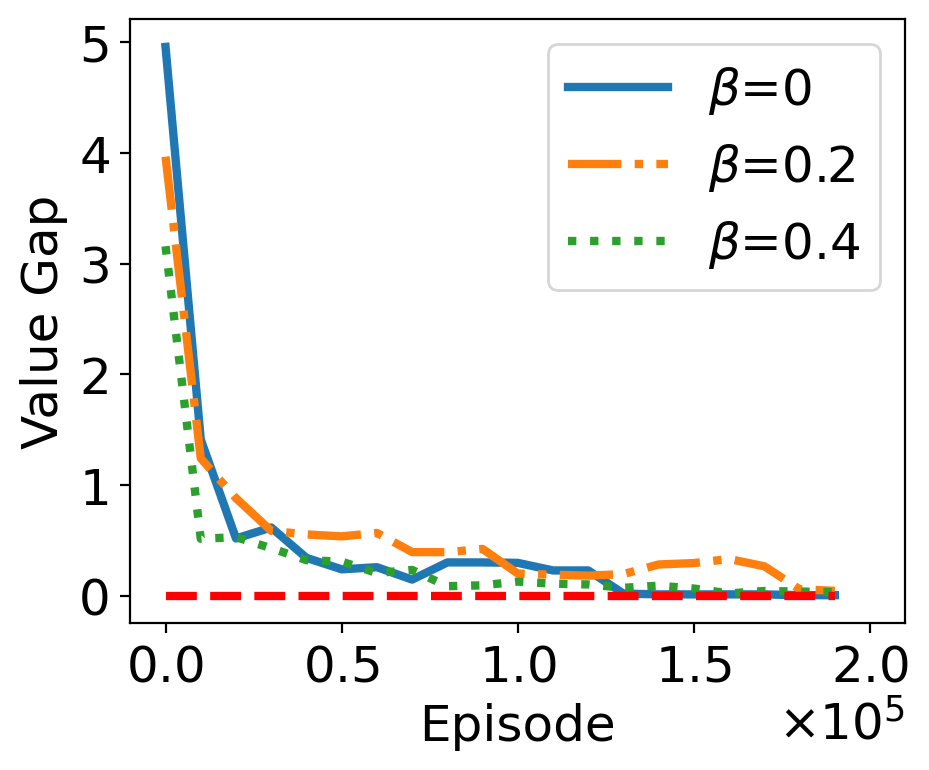}
\vspace{-0.3cm}
   \caption{Value gaps of RFE-$\beta$.}
   \label{fig:cvg_beta} 
\end{subfigure}
\hfill
\begin{subfigure}[c]{0.32\textwidth}
\includegraphics[height=0.16\textheight]{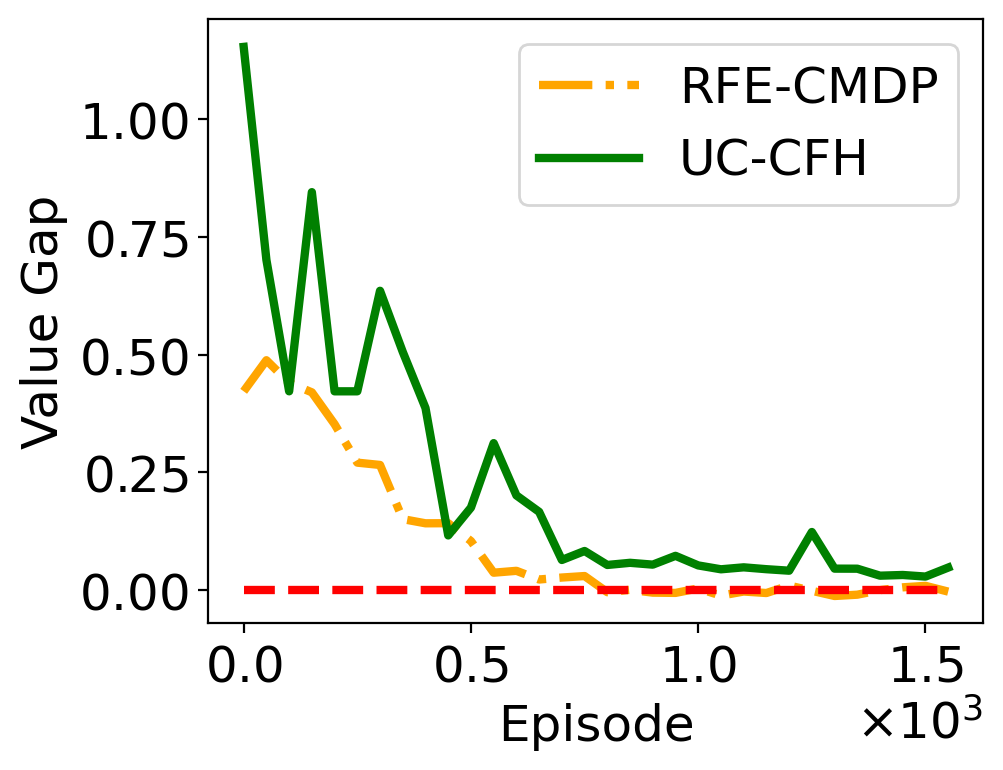}
\vspace{-0.3cm}
   \caption{Value gaps w.r.t. episode.}
   \label{fig:cvg_CMDP_FR} 
\end{subfigure}
\hfill
\begin{subfigure}[c]{0.32\textwidth}
\includegraphics[height=0.16\textheight]{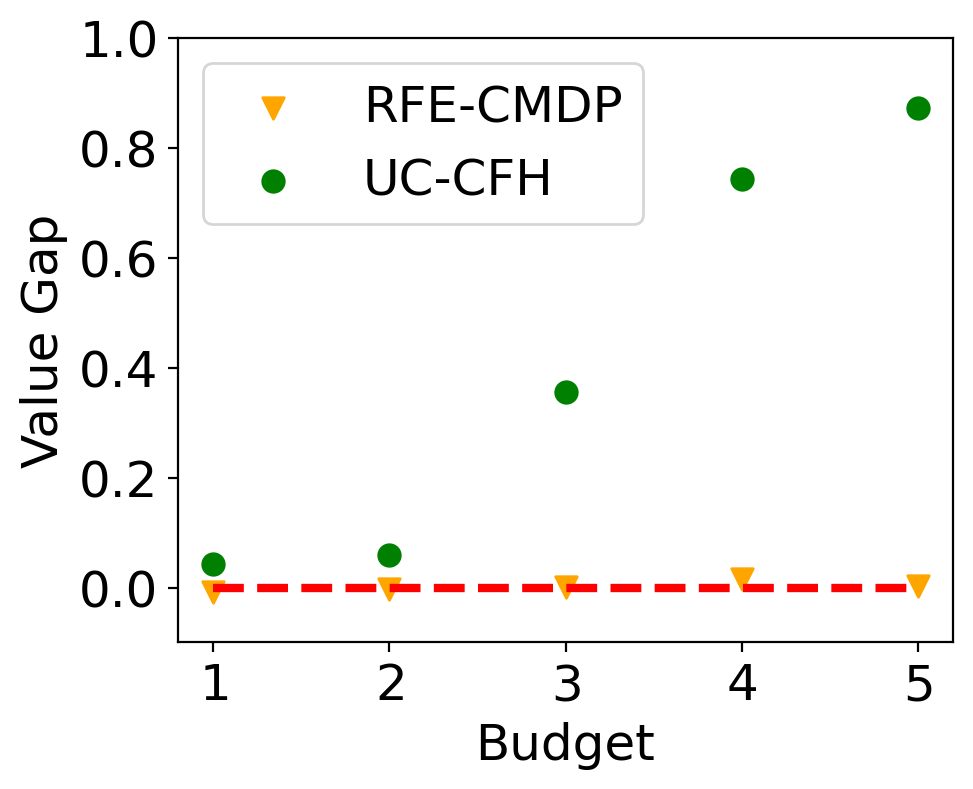}
\vspace{-0.3cm}
   \caption{Value gaps when evaluating.}
   \label{fig:value_post_CMDP} 
\end{subfigure}
\vspace{-0.2cm}
\caption{The performances of making pertinent advice. The value gap is defined as the difference between the value of current policy and the optimal values, with the red dashed
line as the benchmark for $0$ loss of the policy. Figure \ref{fig:cvg_beta} shows the convergence of RFE-$\beta$ under difference $\beta$'s. Figure \ref{fig:cvg_CMDP_FR} compares the convergences of RFE-CMDP and UC-CFH. Figure \ref{fig:value_post_CMDP} evaluates performance of policy learned from learning episodes in Figure \ref{fig:cvg_CMDP_FR}.}
\label{fig:interp_figs}
\end{figure}
\begin{figure}[ht!]
\centering
\begin{subfigure}{0.45\textwidth}
   \includegraphics[width=1\linewidth]{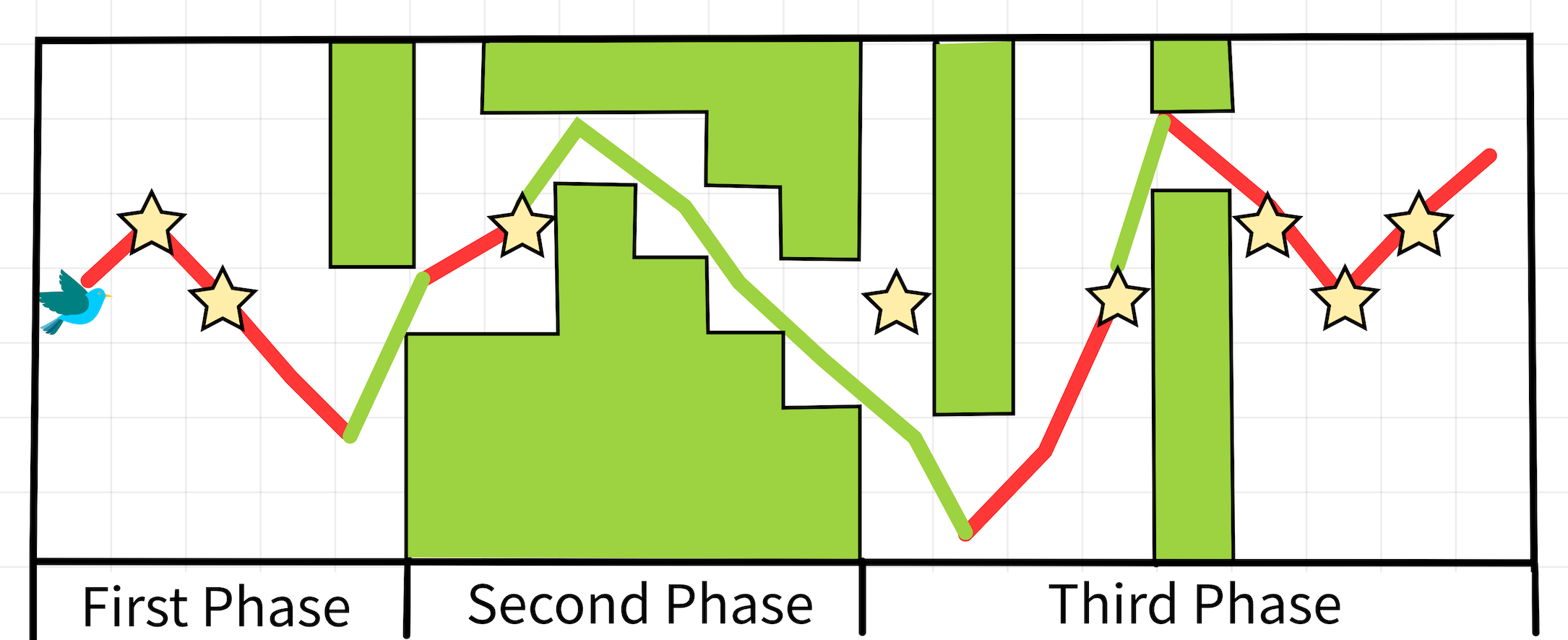}
   \vspace{-0.2cm}
   \caption{Trajectory of Policy Greedy.}
   \label{fig:ad_to_greedy} 
\end{subfigure}
\hfill
\begin{subfigure}{0.45\textwidth}
   \includegraphics[width=1\linewidth]{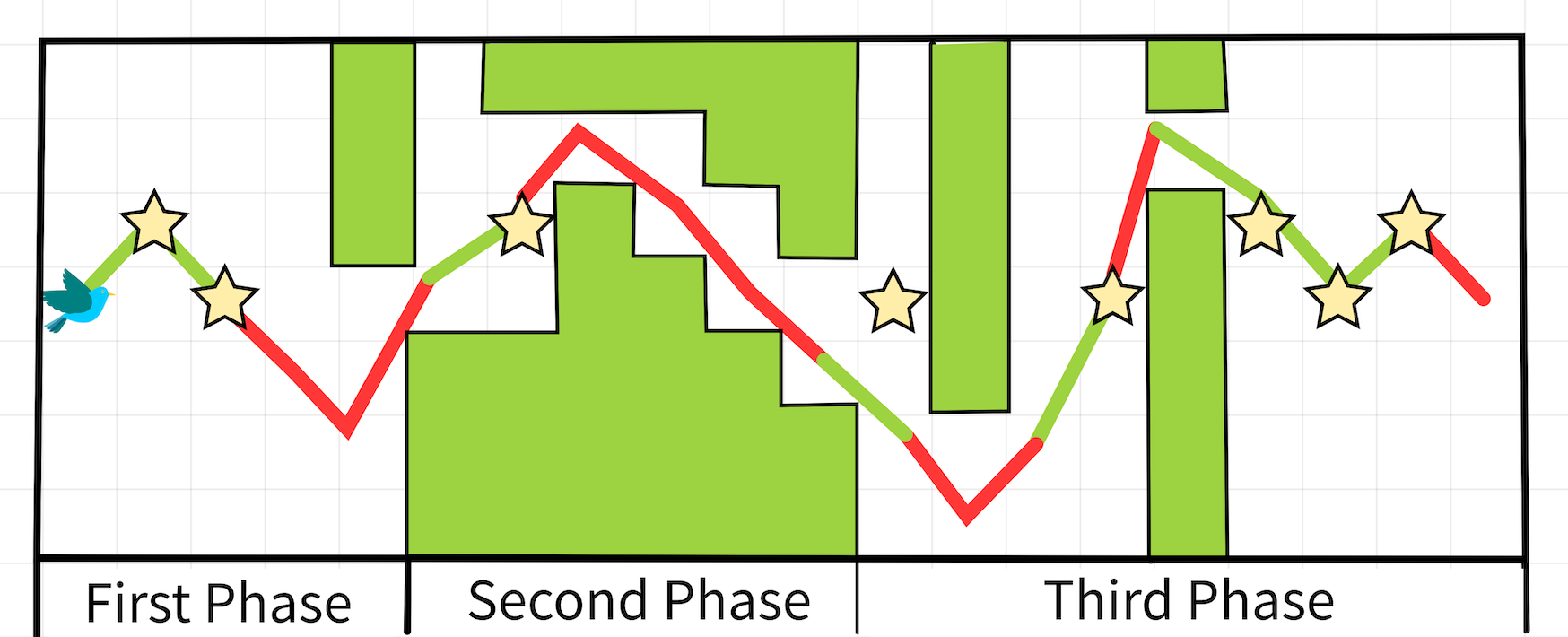}
   \vspace{-0.2cm}
   \caption{Trajectory of Policy Safe.}
   \label{fig:ad_to_safe}
\end{subfigure}
\vspace{-0.2cm}
\caption[Advice Trajectories.]{Typical trajectories of two policies' types. The red color means the machine defers and the green color means the machine advises.}
\label{fig:advice_traj_bird}
\end{figure}
Lastly, we show that RFE-$\beta$ is capable of generating pertinent advice for different policies. Figure  \ref{fig:advice_traj_bird} displays representative trajectories of two policies playing the game while receiving guidance from the machine, which follows $\hat{\pi}_{\beta}$ trained in the experiment of Figure \ref{fig:regret_bird}. By setting $\beta = 0.3$, the machine outputs a policy that only gives advice when necessary: Since Policy Greedy behaves well in the first phase, the machine almost only gives advice in the second phase and the third phase; Similarly, the machine almost only gives advice in the first phase and the third phase, and choose to defer most of the time when Policy Safe is in the second phase.

\newpage

\bibliographystyle{plainnat} 
\bibliography{sample.bib}

\newpage

\appendix
\section{Proofs}
In this appendix, we present the proofs for Section \ref{sec_model} in \ref{ap_sec2}, the proofs and other details for Algorithm \ref{alg:alg1} in Section \ref{ap_pf_alg1}, and the proofs and other details for Algorithm \ref{alg:alg2} in Section \ref{ap_pf_alg2}.

\subsection{Proofs for Section \ref{sec_model}}\label{ap_sec2}
\begin{proof}[Proof of Proposition \ref{prop_monotone}]
    To show $V^*(s|\theta_1) \geq V^*(s|\theta_2)$, it suffices to prove that there exists $\pi\in\Pi$ such that $V^{\pi}(s|\theta_1) \geq V^*(s|\theta_2)$. Denote $\pi^{*,\theta_2}$ the optimal deterministic policy such that $\pi^{*,\theta_2} = \argmax_{\pi\in\Pi}V^{\pi}(s|\theta_2)$. Based on $\pi^{*,\theta_2}$, we are going to construct a randomized policy $\tilde{\pi}^{\theta_1}$ such that $V^{\tilde{\pi}^{\theta_1}}(s|\theta_1) = V^*(s|\theta_2)$, and in the end, we will conclude the proof by showing $V^*(s|\theta_1)\geq V^{\tilde{\pi}^{\theta_1}}(s|\theta_1)$.

For any $h\in[H]$ and $s\in\mathcal{S}$, with a slight abuse of notation denote $a = \pi_h^{*,\theta_2}(s)$, the deterministic action of policy $\pi_h^{*,\theta_2}$. If $a = \text{defer}$, the randomized advice policy $\tilde{\pi}^{\theta_1}$ is defined as $\tilde{\pi}_h^{\theta_1}(s) = \text{defer}$; if $a \neq \text{defer}$, we have
\begin{equation*}
    \begin{aligned}
        \tilde{\pi}_h^{\theta_1}(s) = \begin{cases}
a, &\text{with probability $\frac{\theta_{2}(s,a)-\pi^{\HH}(a|s)}{\theta_{1}(s,a)-\pi^{\HH}(a|s)}$}\\
\text{defer},\,\,\,\,\,\,\, &\text{with probability $1-\frac{\theta_{2}(s,a)-\pi^{\HH}(a|s)}{\theta_{1}(s,a)-\pi^{\HH}(a|s)}$}\\
\end{cases}
    \end{aligned}
\end{equation*}
where we have assumed that $\theta_i(s,a) \geq \pi_h^{\HH}(a|s)$, i.e, the advice has no negative effect on the human's probability of taking action $a$ at state $s$ for $i=1,2$. Next, we have to verify $V^{\tilde{\pi}^{\theta_1}}(s|\theta_1) = V^*(s|\theta_2)$ by showing that for any $(a,s,h) \in (\bar{\mathcal{A}},\mathcal{S},[H])$,
\begin{align}\label{eqn_p1_dynamics}
    P^{\tilde{\pi}^{\theta_1}}_h(a^{\HH} = a|s,\theta_1) = P_h^{\pi_h^{*,\theta_2}}(a^{\HH} = a|s,\theta_2).
\end{align}
To see this, for $a = \pi_h^{*,\theta_2}(s)$ and $a \neq \text{defer}$, we have
\begin{equation*}
    \begin{aligned} P^{\tilde{\pi}^{\theta_1}}(a^{\HH}= a|s,\theta_1) &= \frac{\theta_2(s,a)-\pi_h^{\HH}(a|s)}{\theta_1(s,a)-\pi_h^{\HH}(a|s)}\theta_1(s,a) + \left(1-\frac{\theta_2(s,a)-\pi_h^{\HH}(a|s)}{\theta_1(s,a)-\pi_h^{\HH}(a|s)}\right)\pi_h^{\HH}(a|s)\\
    &= \theta_2(s,a)\\
    &= P_h^{\pi_h^{*,\theta_2}}(a^{\HH} = a|s,\theta_2).
    \end{aligned}
\end{equation*}
For $a' \neq \pi_h^{*,\theta_2}(s)$ and $a \neq \text{defer}$, we have
\begin{equation*}
    \begin{aligned} P^{\tilde{\pi}^{\theta_1}}(a^{\HH}= a'|s,\theta_1) &= \frac{\theta_2(s,a)-\pi_h^{\HH}(a|s)}{\theta_1(s,a)-\pi_h^{\HH}(a|s)}\cdot(1-\theta_1(s,a))\cdot \frac{\pi_h^{\HH}(a'|s)}{1-\pi_h^{\HH}(a|s)}\\
    &\hspace{4mm}+ \left(1-\frac{\theta_2(s,a)-\pi_h^{\HH}(a|s)}{\theta_1(s,a)-\pi_h^{\HH}(a|s)}\right)\pi_h^{\HH}(a'|s)\\
    &=(1-\theta_2(s,a))\frac{\pi_h^{\HH}(a'|s)}{1-\pi_h^{\HH}(a|s)}\\
    &= P_h^{\pi_h^{*,\theta_2}}(a^{\HH} = a'|s,\theta_2).
    \end{aligned}
\end{equation*}
For the last case, it is obvious that if $\pi_h^{*,\theta_2}(s) = \text{defer}$, the dynamics of choosing defer are independent of $\theta$, and this concludes proving \eqref{eqn_p1_dynamics}.

By showing \eqref{eqn_p1_dynamics}, we know that $V^{\tilde{\pi}^{\theta_1}}(s|\theta_1) = V^{\pi_h^{*,\theta_2}}(s|\theta_2) = V^*(s|\theta_2)$. Because we are working on a finite-horizon discrete MDP, from Bellman's equation, we know that the optimal value functions for the class of deterministic policies will be the same as those for the class of random policies. Therefore, we have $V^{*}(s|\theta_1) \geq V^{\tilde{\pi}^{\theta_1}}(s|\theta_1)$, and this concludes the proof.
\end{proof}

\begin{proof}[Proof of Proposition \ref{prop_gap}]
    
From the Bellman's equation, we know that if $a = \argmax_{a\in\bar{\mathcal{A}}}Q^*_{h,\beta}(s,a)$ and $a \neq \text{defer}$, we have $r^{\MM}_h(s,a) - \beta + \sum_{s'\in\mathcal{S}}p_h^{\MM}(s'|s,a)V_{h+1}^*(s') \geq V_{h,\beta}^{\pi^{\HH}}(s)$, and therefore
\begin{equation*}
    \begin{aligned}
        \beta &\leq  r^{\MM}_h(s,a) + \sum_{s'\in\mathcal{S}}p_h^{\MM}(s'|s,a)V_{h+1,\beta}^*(s') - V_{h,\beta}^{\pi^{\HH}}(s).
    \end{aligned}
\end{equation*}
By observing that $V_{\beta}^{\pi^{\HH}} = V^{\pi^{\HH}}$ (always deferring has no penalty), and $V_{\beta}^* \leq V^*$, we have 
\begin{equation*}
    \begin{aligned}
        \beta &\leq r^{\MM}_h(s,a) + \sum_{s'\in\mathcal{S}}p_h^{\MM}(s'|s,a)V_{h+1}^*(s') - V_h^{\pi^{\HH}}(s)\\
        &= Q_h^*(s,a) - V_h^{\pi^{\HH}}(s).\\
    \end{aligned}
\end{equation*}
\end{proof}

\subsection{Supplementary Materials for Algorithm \ref{alg:alg1}}
\label{ap_pf_alg1}
In this section, we first formally state the parameter updating rules in Algorithm \ref{alg:alg1} and define the related functions and parameters within the algorithm in Section \ref{sec:A.2.alg}. Subsequently, in Section \ref{sec:A.2.aa}, we present several useful lemmas to sketch the proof for Theorem \ref{thm:UCB-AD}. Then, in Section \ref{sec:A.2.pf}, we prove all statements under $\mathcal{E}_1$.

\subsubsection{Notations of Algorithm \ref{alg:alg1}}
\label{sec:A.2.alg}
In this section, we first state the parameter updating rules in Algorithm \ref{alg:alg1} under $\mathcal{E}_1$ as follows:
\begin{align*}
    \hat{\theta}^t(s,a)=\frac{1}{n^t(s,a)}\sum\limits_{i=1}^{t}\sum\limits_{h=1}^{H}\mathbb{I}(s_h^i=s,a_h^{\MM,i}=a,a^{\HH,i}_h=a),    
\end{align*}
where $n^t(s,a)=\sum\limits_{i=1}^{t}\sum\limits_{h=1}^{H}\mathbb{I}(s_h^i=s,a_h^{\MM,i}=a)$ for all $s\in\mathcal{S}$, $a\in\mathcal{A}$ and $t\leq T.$
\begin{align*}
    \bar{\theta}^t(s,a)
    &=\min\left\{1,
    \hat{\theta}^t(s,a)+\frac{C\left(\hat{\theta}^t(s,a),n^t(s,a),T,\delta\right)}{\sqrt{n^t(s,a)}}
    \right\},
\end{align*}
where
\begin{align*}
    C\left(\theta,n,T,\delta\right)=
    \min&\left\{
        2\sqrt{\log(12SAT/\delta)},
        \sqrt{2\theta(1-\theta)\log\frac{12SAT}{\delta}}+\frac{7\sqrt{n}}{3n-1}\cdot\log\frac{12SAT}{\delta},\right.\\
        &\left.\left(
            \frac{1+\sqrt{1+4\left(\max\left\{0, \sqrt{\theta(1-\theta)}-\sqrt{\frac{2\log(SAT/\delta)}{n-1}}\right\}\right)^2}}{2}-\theta
        \right)\cdot\sqrt{n}
    \right\}.
\end{align*}

Here, by definition $\bar{\theta}^t(s,a)$ becomes the largest element in the following set
\begin{align}
    \label{set:alg2_theta}
    \left\{
    \theta\in[0,1]: 
        \right.
    &|\theta- \hat{\theta}^t(s,a)|
    \leq2\sqrt{\frac{\log(12SAT/\delta)}{n^t(s,a)}},\\
    &|\theta- \hat{\theta}^t(s,a)|\leq\sqrt{\frac{2\hat{\theta}^t(s,a)(1-\hat{\theta}^t(s,a))}{n^t(s,a)}\log\frac{12SAT}{\delta}}+\frac{7}{3n^t(s,a)-1}\cdot\log\frac{12SAT}{\delta}\nonumber\\
    &\left.|\sqrt{\theta(1-\theta)}-\sqrt{\hat{\theta}^t(s,a)^t(1-\hat{\theta}^t(s,a)}|\leq
    \frac{2\log(SAT/\delta)}{n^t(s,a)-1}
    \right\}\nonumber,
\end{align}
for any state-action pair $(s,a)\in\mathcal{S}\times\mathcal{A}$ and $t\leq T$. Lemma \ref{lem:alg2hp}, which we will introduce later, states that the true adherence level $\theta^t(s,a)$ is also in this set with high probability, and thus, $\bar{\theta}^t(s,a)$ is an upper bound for the true adherence level for all $s,a,h$ and $t\leq T$.

We point out that our algorithm needs to have a pre-determined episode upper bound $T$ as the input. The reason is that Algorithm \ref{alg:alg1} updates the estimation of the rewards and the adherence levels at the end of all episodes, which requires us to take a union probability bound from Hoeffding's inequality for $T$ episodes. In order to alleviate the effect of $T$ in the probability union bound, we need to consider $T$ in the design of the algorithm.

\subsubsection{Algorithm Analysis}
\label{sec:A.2.aa}
In this section, we prove Theorem \ref{thm:UCB-AD} for Algorithm \ref{alg:alg1} under $\mathcal{E}_1$, where the human policy is known while the adherence level is unknown. While the proof is inspired by the proof of Theorem 1 in \cite{dann2015sample}, our proof can achieve a better sample complexity bound than theirs in our setting, where the transition probability $p_{h}^{\MM}$ is known but depends on the time horizon for all $h\in[H]$, and the adherence level $\theta$ is unknown but independent of $h$. Specifically, applying Theorem 1 in \cite{dann2015sample} can only achieve an $O(H^3S^2A/\epsilon^2)$ sample complexity bound. On the contrary, we establish an $O(H^2S^2A/\epsilon^2)$ sample complexity bound, which improves the order of the length of the time horizon in the bound.

First, we seek to build an analysis framework for a simpler problem: the reward function is deterministic. This is because if the reward function is stochastic, we can adopt the idea used in the proof of Lemma \ref{lem:ewbdd} and analyze an additional error term for the estimation of the reward. By doing so, we can obtain a similar result with the same order with respect to $H$, $S$, and $A$ but with different universal constants.

Therefore, in the following, we only analyze the case where the reward function of the machine and the human policy is deterministic. Then, the challenge lies in estimating the adherence levels. Recall that the estimator for the adherence level is
\begin{align*}
    \hat{\theta}^t(s,a)=\frac{1}{n^{t}(s,a)}\sum\limits_{i=1}^{t}\sum\limits_{h=1}^{H}\mathbb{I}(s_h^i=s,a_h^{\MM,i}=a,a^{\HH,i}_h=a),    
\end{align*}
where $n^t(s,a)=\sum\limits_{i=1}^{t}\sum\limits_{h=1}^{H}\mathbb{I}(s_h^i=s,a_h^{\MM,i}=a)$ for all $s\in\mathcal{S}$, $a\in\mathcal{A}$ and $t\leq T.$

Next, we briefly summarize our proof. In general, we follow a similar proof structure as in Theorem 1 of \cite{dann2015sample}. However, our setting features two differences from theirs as listed below. 
\paragraph{Higher updating frequency.}\

The first difference is that we update the estimation of the transition kernels, or the adherence levels at the end of each episode for all states and actions, while \cite{dann2015sample} only update the estimation once after several episodes for only one state-action pair. Although a slower updating rule makes their algorithm slow, the problem of frequent updates is that we are exposed to a larger probability of observing some outliers such that the true adherence level might not be captured by the confidence set \eqref{set:alg2_theta}. Therefore, the probability bound developed by Lemma 1 in \cite{dann2015sample} will fail in our setting.
\paragraph{Non-Stationarity.}\ 

Our setting is non-stationary in the sense that the transition probability $p_{h}^{\MM}$ is different for different $h=1,...,H$ even if the adherence level is independent of the horizon. However, \cite{dann2015sample} consider a stationary setting where the transition kernels depend only on the state-action pairs and are the same for all different $h=1,...,H$. As a result, we cannot directly apply their Lemma C.5 to bound several parameters (those two parameters are $c_1(s,a)$ and $c_2(s,a)$ on page 20 in \cite{dann2015sample}) directly.

To address those two problems, we show the following two lemmas. For the first problem, Lemma \ref{lem:alg2hp} states that even with higher frequent updates, the confidence set \eqref{set:alg2_theta} can still cover the true adherence level with high probability, and thus, Proposition \ref{prop_monotone} implies that our estimated $Q$-values are upper bounds for the corresponding true $Q$-values for all empirical optimal policy $\hat{\pi}^t$ at all episodes $t\leq T$ for any pre-determined constant $T$ in Algorithm \ref{alg:alg1}.
\begin{lemma}
    \label{lem:alg2hp}
    For any fixed $T\geq0$, with probability at least $1-\frac{\delta}{2}$, the set \eqref{set:alg2_theta} contains the true adherence level $\theta(s,a)$ for any state-action pairs $(s,a)\in\mathcal{S}\times{\mathcal{A}}$ and episode $t\leq T$.
\end{lemma}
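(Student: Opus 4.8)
The plan is to exploit the fact that, conditioned on the machine advising a fixed action $a$ at a fixed state $s$, the human's adherence is a single Bernoulli$(\theta(s,a))$ trial whose success probability is independent of the time index $h$ and of the entire past history. Concretely, fix $(s,a)$ and let $X_1, X_2, \ldots$ denote the adherence indicators $\mathbb{I}(a^{\HH}=a)$ recorded at the successive visits where the machine plays $a^{\MM}=a$ at $s$ (across all $h$ and all episodes, in chronological order). Because $\theta$ is stationary in $h$, each $X_k$ has conditional mean $\theta(s,a)$ given the past, so $S_m=\sum_{k=1}^m(X_k-\theta(s,a))$ is a martingale, and the estimator $\hat{\theta}^t(s,a)$ is exactly the average of the first $n^t(s,a)$ of these indicators. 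Thus the lemma reduces to a concentration statement for the running mean of a bounded martingale-difference sequence, holding uniformly over the (random) sample count $n^t(s,a)$.

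Next I would match each of the three defining inequalities of the set \eqref{set:alg2_theta} to a corresponding concentration bound for this sequence: the first is a Hoeffding bound on $|\hat{\theta}^t-\theta|$; the second is an empirical Bernstein bound, which replaces the worst-case variance by the empirical variance $\hat{\theta}^t(1-\hat{\theta}^t)$ and produces both the $\sqrt{2\hat{\theta}^t(1-\hat{\theta}^t)\log(\cdot)/n^t}$ term and the $\frac{7}{3n^t-1}\log(\cdot)$ term; and the third controls the deviation of the empirical standard deviation $\sqrt{\hat{\theta}^t(1-\hat{\theta}^t)}$ from the true $\sqrt{\theta(1-\theta)}$. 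Each bound is applied in its time-uniform form, i.e. made to hold simultaneously for every prefix length $m$ (up to the maximum number $n^T(s,a)\le HT$ of visits), for instance by a union/peeling argument over $m$; it is precisely this uniformization that produces the $\log(SAT/\delta)$-type factor appearing in \eqref{set:alg2_theta}. I would then allocate the budget $\delta/2$ across the three inequalities and across the $SA$ state-action pairs (the constant $12=2\cdot 6$ tracks this split), so that a final union bound yields all three inequalities holding for all $(s,a)$ and all prefix lengths with probability at least $1-\delta/2$. Since the checkpoint counts $\{n^t(s,a)\}_{t\le T}$ are particular prefix lengths, the true $\theta(s,a)$ then satisfies all three constraints defining \eqref{set:alg2_theta} at every episode $t\le T$, which is the claim; together with Proposition \ref{prop_monotone} this is what makes $\bar{\theta}^t$ a valid optimistic surrogate.

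The main obstacle is exactly the high update frequency emphasized in the text: since the estimates are refreshed every episode rather than on a sparse doubling schedule as in \cite{dann2015sample}, we cannot restrict attention to a small pre-specified set of sample sizes, and the count $n^t(s,a)$ at which the confidence set is evaluated is a data-dependent stopping time. Adherence outcomes influence the realized trajectory and hence future visit counts, so conditioning on $\{n^t(s,a)=n\}$ biases the observed indicators and a naive per-$(s,a,t)$ Hoeffding argument on a fixed sample size is invalid. The resolution is to prove the deviation bounds in their time-uniform (anytime) form via the martingale structure above, so that they hold simultaneously over all prefix lengths; this both legitimizes evaluating the bounds at the random checkpoints $n^t(s,a)$ and accounts for the extra logarithmic factor, and it is the step that replaces Lemma~1 of \cite{dann2015sample} in our frequently-updating setting.
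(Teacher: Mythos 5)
Your proposal is correct and follows essentially the same route as the paper: the paper also treats the adherence indicators at successive visits of $(s,a)$ as a martingale-difference sequence with conditional mean $\theta(s,a)$, applies Azuma--Hoeffding, Bernstein, and an empirical-standard-deviation concentration bound (Theorem~10 of \cite{dann2015sample}) to match the three constraints in \eqref{set:alg2_theta}, and takes a union bound over the three inequalities, the $SA$ pairs, and the episodes $t\leq T$ (whence the $\log(12SAT/\delta)$ factors). Your explicit uniformization over prefix lengths is a slightly more careful rendering of the step the paper performs by conditioning on the visit sequence and unioning over $t\leq T$, but it is the same argument in substance.
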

For the second problem, Lemma \ref{lem:station} transfers the concentration of the non-stationary transition kernels to that of the stationary adherence levels that are independent of the horizon $H$ and, establishes a Bernstein-like bound for all non-stationary transition kernels.  
\begin{lemma}
    \label{lem:station}
    The following inequalities
    \begin{align*}
        \left| 
            p_h^{\MM}(s'|s,a)-\bar{p}^{\MM,t}_h(s'|s,a)
        \right|
        &\leq
        \left| \left( \pi_h(s'|s,a)-\frac{\sum\limits_{a'\not=a}\pi_h(s'|s,a')\pi_h^{\HH}(a'|s)}{1-\pi_{h}^{\HH}(a|s)}\right)(\theta(s,a)-\hat{\theta}^t(s,a))\right|\\
        &\leq
        \sqrt{\frac{8\hat{p}^{\MM,t}_h(s'|s,a)(1-\hat{p}^{\MM,t}_h(s'|s,a))}{n^t(s,a)}\log(12SAT/\delta)}+\frac{26}{3n^t(s,a)-3}\log(12SAT/\delta)
    \end{align*}
    hold with probability no less than $1-\delta$ for all $s,s'\in\mathcal{S}$, $a\in\mathcal{A}$ and $t\leq T$. 
\end{lemma}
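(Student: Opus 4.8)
The plan is to exploit that in $\mathcal{E}_1$ the human transition kernel $p_h$ and behaviour policy $\pi_h^{\HH}$ are known exactly, so the only error in the estimated machine kernel comes from the plugged-in adherence estimate $\hat{\theta}^t$. First I would make the dependence on the adherence level explicit. Expanding the definition \eqref{eqn:pandr} together with \eqref{eqn:p_act} for a fixed advice $a\neq\text{defer}$ shows that
\begin{equation*}
p_h^{\MM}(s'|s,a) = \theta(s,a)\,p_h(s'|s,a) + (1-\theta(s,a))\,g(s'), \qquad g(s') := \frac{\sum_{a'\neq a} p_h(s'|s,a')\,\pi_h^{\HH}(a'|s)}{1-\pi_h^{\HH}(a|s)},
\end{equation*}
so $p_h^{\MM}(s'|s,a)$ is affine in $\theta(s,a)$ with slope $f(s') := p_h(s'|s,a)-g(s')$, which is exactly the bracketed coefficient in the statement. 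In $\mathcal{E}_1$ the machine kernel is estimated by substituting $\hat{\theta}^t$ into this known formula, so $\hat{p}^{\MM,t}_h$ and $\bar{p}^{\MM,t}_h$ coincide and are the same affine function evaluated at $\hat{\theta}^t(s,a)$. Subtracting gives the exact identity $p_h^{\MM}(s'|s,a)-\bar{p}^{\MM,t}_h(s'|s,a) = f(s')\big(\theta(s,a)-\hat{\theta}^t(s,a)\big)$, which establishes the first inequality (with equality).

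Next I would control $|\theta(s,a)-\hat{\theta}^t(s,a)|$ using the empirical-Bernstein arm of the confidence set \eqref{set:alg2_theta}; by Lemma \ref{lem:alg2hp} this holds simultaneously over all $(s,a,t)$ with probability at least $1-\delta/2$, yielding
\begin{equation*}
|\theta(s,a)-\hat{\theta}^t(s,a)| \le \sqrt{\frac{2\hat{\theta}^t(s,a)(1-\hat{\theta}^t(s,a))}{n^t(s,a)}\log\frac{12SAT}{\delta}} + \frac{7}{3n^t(s,a)-1}\log\frac{12SAT}{\delta}.
\end{equation*}
The crux is then to convert the adherence variance proxy $\hat{\theta}(1-\hat{\theta})$, once multiplied by the slope $f(s')^2$, into the transition variance proxy $\hat{p}^{\MM,t}_h(1-\hat{p}^{\MM,t}_h)$ on the right-hand side. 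This is where I would invoke a law-of-total-variance argument: view $\hat{p}^{\MM,t}_h(s'|s,a)$ as the mean of $\mathbb{I}\{s_{h+1}=s'\}$ under advice $a$ when the adherence probability is $\hat{\theta}^t(s,a)$, and condition on whether the human adheres. Since $\mathbb{E}[\mathbb{I}\{s_{h+1}=s'\}\mid\text{adhere}]=p_h(s'|s,a)$ and $\mathbb{E}[\mathbb{I}\{s_{h+1}=s'\}\mid\text{not adhere}]=g(s')$, the between-group variance equals $\hat{\theta}(1-\hat{\theta})\,f(s')^2$; dropping the nonnegative within-group term gives the key inequality $f(s')^2\,\hat{\theta}^t(1-\hat{\theta}^t) \le \hat{p}^{\MM,t}_h(s'|s,a)\big(1-\hat{p}^{\MM,t}_h(s'|s,a)\big)$.

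Finally I would assemble the pieces. Multiplying the empirical-Bernstein bound by $|f(s')|$ and pushing $|f(s')|$ under the square root turns the leading term into $\sqrt{2 f(s')^2\hat{\theta}(1-\hat{\theta})\log(12SAT/\delta)/n^t}$, which by the variance-transfer inequality is at most $\sqrt{8\,\hat{p}^{\MM,t}_h(1-\hat{p}^{\MM,t}_h)\log(12SAT/\delta)/n^t}$ (the constant $2$ fits comfortably under $8$). For the lower-order term I would use the crude bound $|f(s')|\le 1$, valid since both $p_h(s'|s,a)$ and $g(s')$ lie in $[0,1]$, together with $\tfrac{7}{3n^t-1}\le\tfrac{26}{3n^t-3}$. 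The union over the $SAT$ triples is already absorbed into the $\log(12SAT/\delta)$ factor through Lemma \ref{lem:alg2hp}. I expect the \textbf{variance-transfer step} to be the main obstacle and the key structural idea: it is precisely what lets the non-stationary machine transition inherit a variance-aware, Bernstein-type bound from the stationary, horizon-independent adherence level, which is what drives the improved dependence on $H$ in Theorem \ref{thm:UCB-AD}.
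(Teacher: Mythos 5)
Your proposal is correct and follows the same skeleton as the paper's proof: write $p_h^{\MM}(s'|s,a)$ and its estimate as the \emph{same} affine function of the adherence level (so the first inequality is in fact an identity), concentrate $|\theta(s,a)-\hat{\theta}^t(s,a)|$ using the Bernstein-type arm of the confidence set \eqref{set:alg2_theta} via Lemma \ref{lem:alg2hp}, transfer the variance proxy from $\hat{\theta}^t(1-\hat{\theta}^t)$ to $\hat{p}^{\MM,t}_h(1-\hat{p}^{\MM,t}_h)$, and use $|f(s')|\le 1$ for the lower-order term. The one place where you genuinely diverge is the variance-transfer step, which you correctly flag as the crux. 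The paper proves the first-power inequality $|\zeta_1|\,\bar{\theta}^t(s,a)(1-\bar{\theta}^t(s,a))\le \hat{p}^{\MM,t}_h(s'|s,a)(1-\hat{p}^{\MM,t}_h(s'|s,a))$ by expanding it into a quadratic in $\bar{\theta}^t$ and checking that its discriminant is non-positive, and then separately uses $|\zeta_1|\le\sqrt{|\zeta_1|}$ to push the slope under the square root. Your law-of-total-variance argument proves only the second-power version $f(s')^2\,\hat{\theta}^t(1-\hat{\theta}^t)\le \hat{p}^{\MM,t}_h(1-\hat{p}^{\MM,t}_h)$, but that is exactly what is needed once $|f(s')|$ is absorbed into the square root as $f(s')^2$, and it comes with a one-line conceptual justification (the between-group variance of the adhere/not-adhere mixture is dominated by the total Bernoulli variance) that generalizes to any mixture representation, whereas the discriminant computation is ad hoc. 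The constants also close: your empirical-Bernstein arm carries a factor $2$ under the root against the target $8$, and $\tfrac{7}{3n-1}\le\tfrac{26}{3n-3}$. One cosmetic remark: the paper's statement and proof mix $\bar{p}^{\MM,t}_h$, $\hat{p}^{\MM,t}_h$, $\hat{\theta}^t$ and $\bar{\theta}^t$ somewhat freely; your version, with everything built from $\hat{\theta}^t$, is at least internally consistent, and either convention yields the stated bound.
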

Then, based on those two above lemmas, the proof can be shown in a similar approach as in \cite{dann2015sample}.

Before we prove the main theorem, we need some notations. As in \cite{dann2015sample}, denote $w_{t}(s, a)$ the expected visitation frequency of the $(s, a)$-pair under policy $\pi^{t}$, i.e.,
$$
w_{t}(s, a)=\sum_{h=1}^{H} P\left(s_{h}=s, \pi_{h}^{t}\left(s_{h}\right)=a\right).
$$
Next, we denote $\iota_{t}(s, a)$ the importance of $(s,a)$: its relative weight compared to $w_{\min }:=\frac{\epsilon}{4 H|\mathcal{S}|}$ on a log-scale
$$
\iota_{t}(s, a):=\min \left\{z_{i}: z_{i} \geq \frac{w_{t}(s, a)}{w_{\min }}\right\}\in\{0,1,2,4,8, \ldots\} \quad \text { where } z_{1}=0 \text { and } z_{i}=2^{i-2} \,\,\,\,\,\forall i=2,3, \ldots
$$
Intuitively, $\iota_{t}(s, a)$ is an integer indicating the influence of the state-action pair on the value function of $\pi^{t}$. Similarly, we define the knownness
$$
\kappa_{t}(s, a):=\max \left\{z_{i}: z_{i} \leq \frac{n_{t}(s, a)}{m w_{t}(s, a)}\right\} \in\{0,1,2,4, \ldots\},
$$
which indicates how often $(s, a)$ has been observed relative to its importance. The constant $m$ is defined as
\[
    m = 512(\log_2 \log_2 H)^2 \frac{CH2}{\epsilon^2} \log^2_2\left(\frac{8H^2S^2}{\epsilon}\right) \log\left(
         \frac{6CSA\log_2^2(4S^2H2/\epsilon)}{\delta}
    \right) ,  
\]
where $C=\max_{s\in\mathcal{S},a\in\bar{\mathcal{A}}} C(s,a)$, and $C_(s,a)$ denotes possible successor states of state $s$ and action $a$ for $s\in\mathcal{S}$ and $a\in\mathcal{A}$. Thus, we also have $C\leq S$. We can now categorize $(s, a)$-pairs into subsets
$$
X_{t, \kappa, \iota}:=\left\{(s, a) \in X_{t}: \kappa_{t}(s, a)=\kappa, \iota_{t}(s, a)=\iota\right\} \quad \text { and } \quad \bar{X}_{t}=\mathcal{S} \times \mathcal{A} \backslash X_{t}
$$

where $X_{t}=\left\{(s, a) \in \mathcal{S} \times \mathcal{A}: \iota_{t}(s, a)>0\right\}$ is the active set and $\bar{X}_{t}$ the set of state-action pairs that are very unlikely under the current policy.

The proof can be summarized in a few steps:

\begin{enumerate}
  \item The true MDP is in the confidence set of MDPs (those with adherence level in \eqref{set:alg2_theta}) for all episodes $t<T$ with probability at least $1-\delta/2$ (we can ensure this property by Lemma \ref{lem:alg2hp}).

  \item In every episode $t$, the optimistic $Q$-functions $\hat{V}^{\hat{\pi}^t}(\cdot|\bar{\theta}^t)$ is higher than $V^{*}(\cdot|{\theta})$ at least $1-\delta / 2$, which is ensured by Proposition \ref{prop_monotone}.

  \item If  $m=\tilde{\Omega}\left(\frac{H^{2}}{\epsilon} \ln \frac{|\mathcal{S}|}{\delta}\right)$ (which is true by our definition of $m$), the number of episodes with $\left|X_{t, \kappa, \iota}\right|>\kappa$ for some $\kappa$ and $\iota$ are bounded by $\tilde{O}(|\mathcal{S} \times \mathcal{A}| m)$ with probability at least $1-\delta / 2$. To show this step, we can apply Lemma 2 in \cite{dann2015sample}.

  \item If $\left|X_{t, \kappa, \iota}\right| \leq \kappa$ for all $\kappa$, $\iota$, i.e., relevant state-action pairs are sufficiently known and $m=$ $\tilde{\Omega}\left(\frac{C H^{2}}{\epsilon^{2}} \ln \frac{1}{\delta_{1}}\right)$, then the optimistic values $\hat{V}^{\hat{\pi}^t}(s_1|\bar{\theta}^t)$ and ${V}^{\hat{\pi}^t}(s_1|{\theta}^t)$ are $\epsilon$-close to the true MDP value. Together with part 2, we get that with high probability, the policy $\hat{\pi}^{t}$ is $\epsilon$-optimal in this case. To prove this, we can use our Lemma \ref{lem:station} combined with Lemma 3 in \cite{dann2015sample} 

  \item From parts 3 and 4 , we can show that with probability $1-\delta$, at most $\tilde{O}\left(\frac{C|\mathcal{S} \times \mathcal{A}| H^{2}}{\epsilon^{2}} \ln \frac{1}{\delta}\right)=\tilde{O}\left(\frac{H^{2}S^2A}{\epsilon^{2}} \ln \frac{1}{\delta}\right)$ episodes are not $\epsilon$-optimal.
\end{enumerate}
In the following, we show the proof of Theorem \ref{thm:UCB-AD}, Lemmas \ref{lem:alg2hp} and \ref{lem:station}.

\subsubsection{Proof related to Theorem \ref{thm:UCB-AD}}
\label{sec:A.2.pf}
In this section, we first prove Theorem \ref{thm:UCB-AD}, and then prove Lemmas \ref{lem:alg2hp} and \ref{lem:station} that are used in the proof.

\begin{proof}[Proof of Theorem \ref{thm:UCB-AD}]
        In our case, the statements of Lemmas 2 and 3 in \cite{dann2015sample} still hold. Lemma 2 can be shown without any modifications, and for their Lemma 3, we can show the same statement by applying Lemma \ref{lem:station} to establish $c_1(s,a)$ and $c_2(s,a)$ on page 20 of \cite{dann2015sample} for our non-stationary case.
        
        Then, by Lemma 2 in \cite{dann2015sample}, we can bound the number of episodes satisfying $\left|X_{t,\kappa, \iota}\right|>\kappa$ for some $\kappa, \iota$ by $6mSA\cdot \log_2\frac{4H^2S}{\epsilon} \log_2 S =O(C\cdot H^2SA/\epsilon^2)$ with probability at least $1-\delta / 2$. Their Lemma 3 states that 
        \begin{align}
            \label{ieq:lem3}
            \left|V^{\hat{\pi}_{t}}(s_1)-\hat{V}^{\hat{\pi}_{t}}(s_1)\right|<\epsilon
        \end{align}
         for all other episodes. In addition, combining Lemma \ref{lem:alg2hp} and Proposition \ref{prop_monotone}, we know that 
         \begin{align}
            \label{ieq:mono}
             \hat{V}^{\hat{\pi}^{t}}\geq {V}^{\pi^*}\geq {V}^{\hat{\pi}^{t}} 
         \end{align} 
         holds with probability at least $1-\delta / 2$. Thus, we can draw the conclusion that $1-\delta$, with at least $T-O(C\cdot H^2SA/\epsilon^2)$ episodes, the corresponding policy $\hat{\pi}^t$ satisfies 
        \[
        {V}^{\hat{\pi}^{t}}(s_1)+\epsilon\geq\hat{V}^{\hat{\pi}^{t}}(s_1)\geq {V}^{\pi^*}(s_1) \geq {V}^{\hat{\pi}^{t}}(s_1),
        \]
        which implies that the corresponding $\hat{\pi}^t$ is $\epsilon$-optimal. Here, the first inequality comes from \eqref{ieq:lem3}, and others come from \eqref{ieq:mono}. Moreover, recall that $C$ is the maximum number of possible successor states from any state and action pair, implying $C\leq S$. Thus, at most $O(C\cdot H^2SA/\epsilon^2)=O( H^2S^2A/\epsilon^2)$ episodes are not $\epsilon$-optimal, as our statement of Theorem \ref{thm:UCB-AD}. 
\end{proof}

\begin{proof}[Proof of Lemma \ref{lem:alg2hp}]
    The proof is similar to the proof of Lemma 1 in \cite{dann2015sample}. Given the total visiting number $n^t(s,a)$ of a state-action pair $(s,a)$ and the corresponding visiting horizons and episodes $\{(h_l,t_l)\}_{l=1}^{n^t(s,a)}$ such that $s_h^t=s$ and $a_h^{\MM,i}=a$, we have 
    \begin{align}
        \label{eq:mean_theta}
        \mathbb{E}\left[
            \frac{1}{n^t(s,a)}\sum\limits_{l=1}^{n^t(s,a)} \mathbb{I}(s_{h_l}^{t_l}=s,a_{h_l}^{\MM,t_l}=a,a_h^{\HH,t_l}=a)
        \right]
        =
        \theta(s,a)
    \end{align}
    by the definition of $\theta(s,a)$ (recall the definition $n^t(s,a)=\sum\limits_{i=1}^{t}\sum\limits_{h=1}^{H}\mathbb{I}(s_h^i=s,a_h^{\MM,i}=a)$). Then, by the Azuma–Hoeffding's inequality, with given $n^t(s,a)$ and  $\{(h_l,t_l)\}_{l=1}^{n^t(s,a)}$, the following inequality holds with probability no less than $1- \frac{\delta}{12SAT}$
    \begin{align}
    \label{ieq:dtheta1}
        \left\vert\hat{\theta}^t(s,a)-{\theta}(s,a)\right\vert
        =
        \left\vert\frac{1}{n^t(s,a)}\sum\limits_{i=1}^{t}\sum\limits_{h=1}^{H} \mathbb{I}(s_h^i=s,a_h^{\MM,i}=a,a_h^{\HH,i}=a)-{\theta}(s,a)\right\vert
        \leq
        2\sqrt{\frac{\log (12SAT/\delta)}{n^t(s,a)}}
    \end{align}
    holds for all $s\in\mathcal{S}$, $a\in\mathcal{A}$ and $t$. Here, the first step comes from the definition of $\hat{\theta}^t(s,a)$, and the second line comes from Hoeffding's inequality and \eqref{eq:mean_theta}. Consequently, taking a union bound for all $s,a$ and $t\leq T$, we have that with probability no less than $1-\frac{\delta}{12}$
    \begin{align*}
    \label{ieq:dtheta2}
        \left\vert\hat{\theta}^t(s,a)-{\theta}(s,a)\right\vert
        \leq
        2\sqrt{\frac{\log (12SAT/\delta)}{n^t(s,a)}}
    \end{align*}
    holds for all $(s,a)\in\mathcal{S}\times\mathcal{A}$ at the end of any episode $t\leq T$. Similarly, by applying Bernstein’s inequality and taking union bound, we have
    \begin{align}
        \left\vert\hat{\theta}^t(s,a)-{\theta}(s,a)\right\vert
        \leq
        \sqrt{\frac{2{\theta}(s,a)(1-{\theta}(s,a))\log (12SAT/\delta)}{\sqrt{n^t(s,a)}}}
        +\frac{1}{3n^t(s,a)}\log(6SAT/\delta),
    \end{align}    
    which holds also for all $s,a$ and $t\leq T$ with probability no less than  $1-\frac{\delta}{12}$.
    In addition, by applying Theorem 10 in \cite{dann2015sample}, we can have that with probability no less than $1-\frac{\delta}{12HST}$,
    \begin{align}
        \label{ieq:dtheta3}
        \left\vert\sqrt{{\theta}(s,a)(1-{\theta}(s,a))} - 
        \sqrt{\hat{\theta}^t(s,a)(1-\hat{\theta}^t(s,a))}
        \right\vert
        \leq
        \sqrt{\frac{2\log(12SAT)}{n^t(s,a)}}.
    \end{align}
    Thus, combining the three probability bounds \eqref{ieq:dtheta1}, \eqref{ieq:dtheta2}, and \eqref{ieq:dtheta3}, we finally have that with probability no less than $1-\frac{\delta}{4}$, the true adherence level $\theta(s,a)$ satisfies all three inequalities in \eqref{set:alg2_theta} for all $s,a$ and $t\leq T$. 

    Additionally, we remark that our construction in Algorithm \ref{alg:alg1} is compatible with the setting where the reward function $r^{\MM}$ is unknown and can be random. Even if the reward function is random, we can follow the same proof as for the adherence level to show that the estimated average reward functions in Algorithm \ref{alg:alg1} is an upper bound of the true average reward function with probability no less than $1-\frac{\delta}{4}$, and the difference between them is in the order of $O(1/\sqrt{n^t(s,a)})$ for all $s,a$ and $t\leq T$. Therefore, with probability no less than $1-\frac{\delta}{2}$, our estimated reward function and the adherence level are larger than the corresponding true values, and the convergence rate is $O(1/\sqrt{n^t(s,a)})$ for all $s,a$ and $t\leq T$.
\end{proof}

\begin{proof}[Proof of Lemma \ref{lem:station}]
    By applying the definition of $\p_h^{\MM}$ and $\p_h^{\MM,t}$, we have
    \begin{align}
        \label{eq:pMMs}
        p^\MM_h(s'|s,a)
        =
        \left( \pi_h(s'|s,a)-\frac{\sum\limits_{a'\not=a}\pi_h(s'|s,a')\pi_h^{\HH}(a'|s)}{1-\pi_{h}^{\HH}(a|s)}\right)\theta(s,a)+\frac{\sum\limits_{a'\not=a}\pi_h(s'|s,a')\pi_h^{\HH}(a'|s)}{1-\pi_{h}^{\HH}(a|s)},\\
        \hat{p}^{\MM,t}_h(s'|s,a)
        =
        \left( \pi_h(s'|s,a)-\frac{\sum\limits_{a'\not=a}\pi_h(s'|s,a')\pi_h^{\HH}(a'|s)}{1-\pi_{h}^{\HH}(a|s)}\right)\bar{\theta}^t(s,a)+\frac{\sum\limits_{a'\not=a}\pi_h(s'|s,a')\pi_h^{\HH}(a'|s)}{1-\pi_{h}^{\HH}(a|s)}.\nonumber
    \end{align}
    Then, inequalities in \eqref{eq:pMMs} imply the first inequality in Lemma \ref{lem:station} for all $s',s\in\mathcal{S}$, $a\in\mathcal{A}$ and $t$.

    Next, we prove the second inequality. For reading convenience, we let $\zeta_1$ and $\zeta_2$ be
    \[
        \zeta_1= \left( \pi_h(s'|s,a)-\frac{\sum\limits_{a'\not=a}\pi_h(s'|s,a')\pi_h^{\HH}(a'|s)}{1-\pi_{h}^{\HH}(a|s)}\right),\ 
        \zeta_2=\frac{\sum\limits_{a'\not=a}\pi_h(s'|s,a')\pi_h^{\HH}(a'|s)}{1-\pi_{h}^{\HH}(a|s)},
    \]
    for any fixed state-action pair $s,a$ and fixed $t$. Here, by definition, we have $\zeta_1\in[-1,1]$ and $\zeta_2\in[0,1]$. Then, we have
    \begin{align*}
        \left|p^\MM_h(s'|s,a)-\hat{p}^{\MM,t}_h(s'|s,a)\right|
        =
        |\zeta_1||\theta(s,a)-\bar{\theta}^t(s,a)|.
    \end{align*}
    Now, we first show 
    \begin{align}
        \label{ieq:ptheta}
        |\zeta_1|\bar{\theta}^t(s,a)(1-\bar{\theta}^t(s,a))
        \leq
        \hat{p}^{\MM,t}_h(s'|s,a)(1-\hat{p}^{\MM,t}_h(s'|s,a)).
    \end{align}
    If $\zeta_1\geq0$, \eqref{ieq:ptheta} is equivalent to 
    \begin{align*}
        (\zeta_1^2-\zeta_1)(\bar{\theta}^t(s,a))^2+2\zeta_1\zeta_2\bar{\theta}^t(s,a)
        +
        \zeta_2^2-\zeta_1\leq 0,
    \end{align*}
    which can be obtained by checking the non-positivity of the discriminant for the quadratic equation. Specifically, the discriminant is
    \[
        -4\zeta_1-4\zeta_2+4\zeta_1\zeta_2^2+4\zeta_1^2\zeta_2,
    \]
    which is no more than 0 since $\zeta_1,\zeta_2\leq 1$. If $\zeta_1\leq0$, we can similarly prove that the discriminant is still non-positive. Therefore, \eqref{ieq:ptheta} holds. Then, by Lemma C.5 in \cite{dann2015sample}, we have on the event of Lemma \ref{lem:alg2hp}, 
    \begin{align}
        \label{ieq:mult_theta}
        |\theta(s,a)-\bar{\theta}^t(s,a)|
        \leq
        \sqrt{\frac{8\bar{\theta}^t(s,a)(1-\bar{\theta}^t(s,a))}{n^t(s,a)}\log(12SAT/\delta)}+\frac{26}{3n^t(s,a)-3}\log(12SAT/\delta).
    \end{align}
    Combining \eqref{ieq:mult_theta} and \eqref{ieq:ptheta}, and plugging them into \eqref{eq:pMMs}, we arrive to
    \begin{align*}
        \left|p^\MM_h(s'|s,a)-\hat{p}^{\MM,t}_h(s'|s,a)\right|
        &=
        |\zeta_1||\theta(s,a)-\bar{\theta}^t(s,a)|\\
        &\leq 
        |\zeta_1|\sqrt{\frac{8\bar{\theta}^t(s,a)(1-\bar{\theta}^t(s,a))}{n^t(s,a)}\log(12SAT/\delta)}+\frac{26|\zeta_1|}{3n^t(s,a)-3}\log(12SAT/\delta)\\
        &\leq
        \sqrt{\frac{8|\zeta_1|\bar{\theta}^t(s,a)(1-\bar{\theta}^t(s,a))}{n^t(s,a)}\log(12SAT/\delta)}+\frac{26}{3n^t(s,a)-3}\log(12SAT/\delta)\\
        &\leq
        \sqrt{\frac{8 \hat{p}^{\MM,t}_h(s'|s,a)(1-\hat{p}^{\MM,t}_h(s'|s,a))}{n^t(s,a)}\log(12SAT/\delta)}+\frac{26}{3n^t(s,a)-3}\log(12SAT/\delta),
    \end{align*}
    where the first line comes from \eqref{eq:pMMs}, the second line comes from \eqref{ieq:ptheta}, the third line comes from the fact that $|\zeta_1|\leq1$, and the last line comes from \eqref{ieq:mult_theta}. The proof of this inequality holds for all $s\in\mathcal{S}$, $a\in\mathcal{A}$ and $t\leq T$ on the event of Lemma \ref{lem:alg2hp}, which holds with probability no less than $1-\delta$. Thus, we finish the proof.
\end{proof}


\subsection{Supplementary Materials for Algorithm \ref{alg:alg2}}
\label{ap_pf_alg2}
In this section, we first provide definitions of the estimations of the transition kernels and rewards in Algorithm \ref{alg:alg2}, which can be found in Section \ref{apnd:A.1.def}. In Section \ref{apnd:A.1.thm}, we will present the proof for Theorem \ref{thm_RFE} and Corollary \ref{cor_CMDP}. In section \ref{apnd:A.1.eslem}, we prove important ancillary lemmas for Theorem \ref{thm_RFE}.

\subsubsection{Definitions in Algorithm \ref{alg:alg2}}
\label{apnd:A.1.def}
We first formally define the empirical estimation $\hat{p}^{\MM,t}_h$ in Algorithm \ref{alg:alg2} as follows:
\begin{equation*}
    \begin{aligned}
        \hat{p}^{\MM,t}_h(s'|s,a) &= \cfrac{n_h^t(s, a, s')}{n_h^t(s, a)} \text{ \,\,\,if $n_h^t(s, a) > 0$\,\,\,\,\,\,and\,\,\,\,\,\,} \hat{p}^{\MM,t}_h(s'|s,a) = \cfrac{1}{S} \text{ \,\,\, otherwise},
    \end{aligned}
\end{equation*}
where $n_h^t(s,a) = \sum_{i=1}^t\mathbb{I}{\left\{\left(s_h^i, a_h^{\MM, i}\right) = (s, a)\right\}}$ is the number of times in the first $t$ episodes at the time step $h$, state $s$, and the machine gives advice $a$; $n_h^t(s,a, s') = \sum_{i=1}^t\mathbb{I}{\left\{\left(s_h^i, a_h^{\MM, i}, s_{h+1}^i\right) = (s, a,s')\right\}}$ is the number of times at time $h$, state $s$, the machine gives advice $a$, and reached state $s'$ at time $h+1$ in the first $t$ episode. Similarly, the empirical reward is defined as 
\begin{equation*}
    \begin{aligned}
        \hat{r}^{\MM,t}_h(s,a) &= \cfrac{\sum_{i=1}^tr^i(s,a)\mathbb{I}{\left\{\left(s_h^i, a_h^{\MM, i}\right) = (s, a)\right\}}}{n_h^t(s, a)} \text{ \,\,\,if $n_h^t(s, a) > 0$\,\,\,\,\,\,and\,\,\,\,\,\,} \hat{r}^{\MM,t}_h(s,a) = 0 \text{ \,\,\, otherwise}.
    \end{aligned}
\end{equation*}

\subsubsection{Proof of Theorem \ref{thm_RFE} and Corollary \ref{cor_CMDP}}
\label{apnd:A.1.thm}
    To prove Theorem \ref{thm_RFE}, we first state the algorithm and develop the corresponding sample complexity bound (with probability $1-\delta$) for the no-penalty case, where the reward falls within the interval $[0,1]$. Next, we show Theorem \ref{thm_RFE} by extending the above results into the case with the penalty $\beta\in(0,H)$, which can be addressed by scaling the penalized reward in the range $(0, H)$ to the range $[0,1]$.

    To solve the no-penalty case, as mentioned in the remark after Theorem \ref{thm_RFE}, we change THRESHOLD from $\epsilon/H$ to $\epsilon/2$. The algorithm is summarized in Algorithm \ref{alg:alg3}, and we define $\tau_1$ as the corresponding stopping time. In the following, we characterize the upper bound of $\tau_1$ and $V_1^*(s_1) - V_1^{\hat{\pi}^{\tau_1}}(s_1)$ when there is no penalty.
    
    \begin{algorithm}[ht!]
    \caption{: RFE-ADvice}
    \label{alg:alg3}
    \begin{algorithmic}[1]
    \State Input: $\epsilon, \delta$
    \State \textbf{Stage 1: Reward-free exploration}
    \State Initialize $t = 1$, THRESHOLD = $\epsilon/2$, and $W^t_h(s,a) = H$ for all $(s,a) \in \mathcal{S}\times\mathcal{A}$ 
    \State Compute ${\pi}^t$ so that ${\pi}_h^t(s) = \argmax_{a\in\mathcal{A}}W_h^t(s, a)$ (see \eqref{eqn_W})
    \While{$W_1^t(s_1, \pi^t(s_1))+4e\sqrt{W_1^t(s_1, \pi^t(s_1))} > \text{THRESHOLD}$}
    \State Sample trajectory $z^t = \{s_1^t, a_1^{\MM,t}, a_1^{\HH,t},r_1^t, \cdots, s_H^t, a_H^{\MM,t}, a_H^{\HH,t}, r_H^t\}$ following $\pi^t$
    \State update $t \leftarrow t+1$, $\mathcal{D} \leftarrow \mathcal{D}\cup \{z^t\}$, $\hat{p}^{\MM,t}_h(s'|s,a)$, $\hat{r}^{\MM,t}_h(s,a)$, and $W_h^t(s,a)$
    \EndWhile
    \State \textbf{Stage 2: Policy identification}
    \State Use planning algorithms to output optimal advice policy $\hat{\pi}^{\tau_1}$ for $\left(\mathcal{S},\bar{\mathcal{A}},H,\hat{p}^{\MM}, \hat{r}^{\MM}\right)$
    \end{algorithmic}
    \end{algorithm}

\paragraph{Upper bounds of $\tau_1$ and $V_1^*(s_1) - V_1^{\hat{\pi}^{\tau_1}}(s_1)$.}\

To establish the upper bound for $\tau_1$, our approach is similar to the proof of Theorem 1 in \cite{menard2021fast}. First notice that 
$$
\begin{aligned}
V_{1}^{\star}\left(s_{1}\right)-V_{1}^{\widehat{\pi}^{\tau_1}}\left(s_{1} \right) & =V_{1}^{\star}\left(s_{1}\right)-\widehat{V}_{1}^{\tau_1, \pi^{\star}}\left(s_{1}\right)+\widehat{V}_{1}^{\tau_1, \pi^{\star}}\left(s_{1}\right)-\widehat{V}_{1}^{\tau_1, \widehat{\pi}^{\tau_1}}\left(s_{1}\right)+\widehat{V}_{1}^{\tau_1, \widehat{\pi}^{\tau_1}}\left(s_{1}\right)-V_{1}^{\widehat{\pi}^{\tau_1}}\left(s_{1}\right) \\
& \leq\left|V_{1}^{\star}\left(s_{1}\right)-\widehat{V}_{1}^{\tau_1, \pi^{\star}}\left(s_{1}\right)\right|+\left|\widehat{V}_{1}^{\tau_1, \widehat{\pi}^{\tau_1}}\left(s_{1}\right)-V_{1}^{\widehat{\pi}^{\tau_1}}\left(s_{1}\right) \right|,\end{aligned}
$$
where the second inequality is because $\widehat{V}_{1}^{\tau_1, \pi^{\star}}\left(s_{1}\right)-\widehat{V}_{1}^{\tau_1, \widehat{\pi}^{\tau_1}}\left(s_{1}\right) \leq 0$. Therefore, we need to show that the empirical MDP is close to the original MDP so that the value function of the same policy is bounded by $\epsilon/2$. To motivate this, recall the Bellman equation of the true $Q$-values and the empirical:
    \begin{align}
        \label{def:Qval}
        Q_h^{\pi}(s,a) = r^{\MM}_h(s,a) + \sum_{s'\in \mathcal{S}}p^{\MM}_h(s'|s,a) Q_{h+1}^{\pi}(s',\pi_{h+1}(s')), \\
        \hat{Q}_h^{t,\pi}(s,a) = \hat{r}^{\MM,t}_h(s,a) + \sum_{s'\in \mathcal{S}}\hat{p}^{\MM,t}_h(s'|s,a) \hat{Q}_{h+1}^{t,\pi}(s',\pi_{h+1}(s'))\nonumber,
    \end{align}
    where $Q_{H+1}^{\pi}(s,a)=\hat{Q}_{H+1}^{t,\pi}(s,a)=0$ for all staet $s$, action $a$, episode $t$, and policy $\pi$.
    
    Denote $\hat{e}_{h}^{t,\pi}(s,a;r)=|\hat{Q}^{t,\pi}_{h}(s,a;\hat{r})-{Q}^{\pi}_{h}(s,a;r)|$ the difference between the empirical and the real $Q$-value for the machine with respect to \textbf{any} policy $\pi$, state $s$, action $a$, reward $r$ ($\hat{r}$ is the sample estimation of $r$ in episode $t$), and horizon $h$ at the $t$-th episode, where the empirical $Q$-value is evaluated by the estimated transition kernels $\hat{p}_{h}^{\MM,t}$ and reward function $\hat{r}_{h}^{t}$ at the $t$-th episode. We immediately have the following bound
    $$\left|V_{1}^{\star}\left(s_{1}\right)-\widehat{V}_{1}^{\tau_1, \pi^{\star}}\left(s_{1}\right)\right|+\left|\widehat{V}_{1}^{\tau_1, \widehat{\pi}^{\tau_1}}\left(s_{1}\right)-V_{1}^{\widehat{\pi}^{\tau_1}}\left(s_{1}\right) \right| \leq \hat{e}_{1}^{t,\pi^*}(s_1,\pi^*(s_1);r^{\MM})+\hat{e}_{1}^{t,\hat{\pi}^{\tau_1}}(s_1,\hat{\pi}^{\tau_1}(s_1);r^{\MM}).$$ 
    
    Notice that here, our definition of $\hat{e}_{h}^{t,\pi}(s,a;r)$ incorporates random reward functions, and is different from that of \cite{menard2021fast}. Next, we show the following uniform bound for $\hat{e}_{1}^{t,\pi}(s_1,\pi(s_1);r)$.
\begin{lemma}
    \label{lem:ewbdd}
    With probability at least $1-\delta$, for any episode $t$, policy $\pi$, and reward function $r$ that is in $[0,1]$,
    \[
        \hat{e}_{1}^{t,\pi}(s_1,\pi(s_1);r)\leq 4e\sqrt{\max_{a\in\mathcal{A}}W_{1}^{t}(s_1,a)}+\max_{a\in\mathcal{A}}W_{1}^{t}(s_1,a).
    \]
\end{lemma}
Recall $W^t_1(\cdot,\cdot)$ is defined by equation \eqref{eqn_W} and it is a function of $\delta$ (we omit the dependence of $\delta$ in $W$ for notation simplicity). With Lemma \ref{lem:ewbdd}, we know that for our quantity of interest, we just need to bound $4e\sqrt{\max_{a\in\mathcal{A}}W_{1}^{t}(s_1,a)}+\max_{a\in\mathcal{A}}W_{1}^{t}(s_1,a)$ with the following lemma.

\begin{lemma} \label{lem_tau1}
For $\epsilon > 0$ and $\delta > 0$, with probability at least $1-\delta$, we have 
$$4e\sqrt{\max_{a\in\mathcal{A}}W_{1}^{\tau_1}(s_1,a)}+\max_{a\in\mathcal{A}}W_{1}^{\tau_1}(s_1,a) \leq \epsilon/2$$
and the terminating time $\tau_1$ is bounded by
$$
\tau_1 \leq \frac{H^{3} S A}{\varepsilon^{2}}(\log (4 S A H / \delta)+S) C_{1}+1,
$$
where $C_{1} = 9000e^6\log^2\left(e^{18}\left(\log(4HSAT/\delta)+S\right)\frac{H^4SA}{\epsilon}\right).$
\end{lemma}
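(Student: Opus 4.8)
The plan is to dispatch the two claims separately: the $\epsilon/2$ inequality is essentially definitional, while the bound on $\tau_1$ carries the real work and closely follows the RF-Express analysis of \cite{menard2021fast}. For the first inequality, note that Algorithm \ref{alg:alg3} halts at episode $\tau_1$ exactly when its while-loop guard fails. Since $\pi_h^t(s)=\argmax_{a\in\mathcal{A}}W_h^t(s,a)$, we have $W_1^{\tau_1}(s_1,\pi^{\tau_1}(s_1))=\max_{a\in\mathcal{A}}W_1^{\tau_1}(s_1,a)$, so the negated guard with THRESHOLD $=\epsilon/2$ gives $4e\sqrt{\max_a W_1^{\tau_1}(s_1,a)}+\max_a W_1^{\tau_1}(s_1,a)\le\epsilon/2$ deterministically. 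The probabilistic content is therefore entirely that the loop terminates early, i.e.\ the stated upper bound on $\tau_1$.

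For $\tau_1$, I would first unroll the recursion \eqref{eqn_W}. Working on the good event where the Bernstein-type concentrations underlying the choice of $\phi$ hold (probability $\ge 1-\delta$), dropping the outer $\min(H,\cdot)$ and iterating the Bellman-type recursion, together with the bound $(1+1/H)^{h-1}\le e$ on the amplification factor, yields
\[
W_1^t(s_1,\pi^t(s_1))\;\le\; e\sum_{h=1}^H \mathbb{E}^{\pi^t}_{\hat p^{\MM,t}}\!\left[\,16H^2\,\frac{\phi(n_h^t(s_h,a_h),\delta)}{n_h^t(s_h,a_h)}\;\middle|\;s_1\right],
\]
where the expectation is over trajectories of $\pi^t$ under the \emph{estimated} kernel $\hat p^{\MM,t}$. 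Separately, for every episode before termination the guard holds, and inverting $4e\sqrt{W}+W>\epsilon/2$ shows $\max_a W_1^t(s_1,a)>\epsilon'$ with $\epsilon'=\Theta(\epsilon^2/e^2)$; this per-episode lower bound is the source of the $\epsilon^{-2}$ scaling and, summed over $t<\tau_1$, gives $(\tau_1-1)\epsilon'<\sum_{t<\tau_1}\max_a W_1^t(s_1,a)$ (explaining the trailing $+1$).

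It then remains to upper-bound the cumulative sum. I would (i) transfer the pseudo-counts under $\hat p^{\MM,t}$ to the true visitation counts $n_h^t$ by a change-of-measure argument on the good event, and (ii) invoke a harmonic-sum/pigeonhole estimate $\sum_{t\le\tau_1}\sum_h \phi(n_h^t,\delta)/n_h^t \lesssim HSA\,\phi(\tau_1,\delta)\log\tau_1$, which together give $\sum_{t\le\tau_1}\max_a W_1^t(s_1,a)\lesssim e\,H^3SA\,\phi(\tau_1,\delta)\log\tau_1$. Combining with the lower bound produces the self-referential inequality $\tau_1\lesssim \tfrac{H^3SA}{\epsilon^2}(\log(4SAH/\delta)+S\log\tau_1)\log\tau_1$ after substituting the explicit form of $\phi$ and $\epsilon'$. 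I would close by resolving this implicit bound via the elementary fact that $x\le a(b+c\log x)$ forces $x\le O(a(b+c\log(ac)))$, which folds the residual $\log\tau_1$ into the $\log^2(\cdot)$ factor appearing in $C_1$ and recovers the stated explicit form.

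I expect the main obstacle to be the change-of-measure step: the $W$-recursion propagates uncertainty through the estimated kernel $\hat p^{\MM,t}$, whereas samples arrive under the true $p^{\MM}$, so the transfer between pseudo-counts and real counts must be carried out on the good event using exactly the concentration that the constants $16H^2$ and $(1+1/H)$ in \eqref{eqn_W} are tuned to make lossless. A secondary, purely bookkeeping obstacle is tracking constants through the self-bounding inversion to match the precise $C_1$; this is routine and mirrors the corresponding computation in \cite{menard2021fast}.
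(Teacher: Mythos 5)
Your proposal is correct and follows essentially the same route as the paper, which itself tracks the RF-Express analysis of \cite{menard2021fast}: the $\epsilon/2$ inequality is indeed a deterministic consequence of the negated loop guard together with $\pi^{\tau_1}_1(s_1)=\argmax_a W_1^{\tau_1}(s_1,a)$, and the $\tau_1$ bound comes from unrolling \eqref{eqn_W}, a pigeonhole/telescoping bound on $\sum_t\sum_h \phi(n_h^t,\delta)/n_h^t$ via pseudo-counts on the event $\mathcal{E}^{\mathrm{cnt}}$, and a self-bounding inversion. The only differences are cosmetic: the paper sums the stopping inequality $\epsilon\le 4e\sqrt{W_1^t}+W_1^t$ over $t\le T$ and applies Cauchy--Schwarz rather than inverting it per episode to $W_1^t\gtrsim\epsilon^2$, and it performs the $\hat p^{\MM,t}\to p^{\MM}$ change of measure \emph{inside} each level of the recursion (inflating $16H^2$ to $22H^2$ and $(1+1/H)$ to $(1+3/H)$, hence $e^3$ after unrolling) rather than globally after unrolling, which is the cleaner way to execute the step you correctly identify as the main obstacle.
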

With Lemma \ref{lem_tau1}, we know that with probability $1-\delta$, 
$$
\begin{aligned}
V_{1}^{\star}\left(s_{1}\right)-V_{1}^{\widehat{\pi}^{\tau_1}}\left(s_{1} \right) & \leq\left|V_{1}^{\star}\left(s_{1}\right)-\widehat{V}_{1}^{\tau_1, \pi^{\star}}\left(s_{1}\right)\right|+\left|\widehat{V}_{1}^{\tau_1, \widehat{\pi}^{\tau_1}}\left(s_{1}\right)-V_{1}^{\widehat{\pi}^{\tau_1}}\left(s_{1}\right) \right|\\ 
&\leq \hat{e}_{h}^{t,\pi^*}(s_1,\pi^*(s_1);r^{\MM})+\hat{e}_{h}^{t,\hat{\pi}^{\tau_1}}(s_1,\hat{\pi}^{\tau_1}(s_1);r^{\MM})\\
&\leq 2\left(4e\sqrt{\max_{a\in\mathcal{A}}W_{1}^{t}(s_1,a)}+\max_{a\in\mathcal{A}}W_{1}^{t}(s_1,a)\right)\leq \epsilon.\\
\end{aligned}
$$

\paragraph{Upper bounds of $\tau$ and $V_{1,\beta}^*(s_1) - V_{1,\beta}^{\hat{\pi}^{\tau}_{\beta}}(s_1)$.}\

To establish the result for $\tau$, we first scale the reward from $[0,H]$ to $[0,1]$. From Algorithm 2 and Lemma \ref{lem_tau1}, by setting the threshold to $\epsilon/(2H)$, we know that with probability $1-\delta$, $\tau = \tilde{O}(H^3S^2A/(\epsilon^2/ H^2)) = \tilde{O}(H^5S^2A/\epsilon^2)$. The rest is to show that we have for all $\beta \in (0, H)$,
$$V_{1,\beta}^*(s_1) - V_{1,\beta}^{\hat{\pi}^{\tau}_{\beta}}(s_1) \leq \epsilon.$$
With an analysis similar to that of $\tau_1$, for any $\beta \in [0,H)$, when we scale back the reward to $[0,H]$ (notice the multiplier $H$ in the last inequality), we have 
$$
\begin{aligned}
V_{1,\beta}^{\star}\left(s_{1}\right)-V_{1,\beta}^{\widehat{\pi}^{\tau}_{\beta}}\left(s_{1} \right) & =V_{1,\beta}^{\star}\left(s_{1}\right)-\widehat{V}_{1,\beta}^{\tau, \pi_{\beta}^{\star}}\left(s_{1}\right)+\widehat{V}_{1,\beta}^{\tau, \pi_{\beta}^{\star}}\left(s_{1}\right)-\widehat{V}_{1,\beta}^{\tau, {\widehat{\pi}^{\tau}_{\beta}}}\left(s_{1}\right)+\widehat{V}_{1,\beta}^{\tau, {\widehat{\pi}^{\tau}_{\beta}}}\left(s_{1}\right)-V_{1,\beta}^{{\widehat{\pi}^{\tau}_{\beta}}}\left(s_{1}\right) \\
& \leq\left|V_{1,\beta}^{\star}\left(s_{1}\right)-\widehat{V}_{1,\beta}^{\tau, \pi_{\beta}^{\star}}\left(s_{1}\right)\right|+\left|\widehat{V}_{1,\beta}^{\tau, {\widehat{\pi}^{\tau}_{\beta}}}\left(s_{1}\right)-V_{1,\beta}^{{\widehat{\pi}^{\tau}_{\beta}}}\left(s_{1}\right)\right|\\
&\leq \hat{e}_{1}^{\tau,\pi_{\beta}^*}(s_1,\pi_{\beta}^*(s_1);r^{\MM}_{\beta})+\hat{e}_{1}^{\tau,\hat{\pi}_{\beta}^{\tau}}(s_1,\hat{\pi}_{\beta}^{\tau}(s_1);r^{\MM}_{\beta})\\
&\leq 8e\sqrt{\max_{a\in\mathcal{A}}W_{1}^{\tau}(s_1,a)}+2\max_{a\in\mathcal{A}}W_{1}^{\tau}(s_1,a) \leq 2H\epsilon/(2H) = \epsilon.
\end{aligned}
$$
To conclude the proof, we just have to notice that $8e\sqrt{\max_{a\in\mathcal{A}}W_{1}^{\tau}(s_1,a)}+2\max_{a\in\mathcal{A}}W_{1}^{\tau}(s_1,a)$ is a bound for any reward function $r_{\beta}^{\MM}$ with $\beta\in[0,H)$.

\paragraph{Proof for Corollary \ref{cor_CMDP}.}\

Recall that we have the following CMDP and sample CMDP defined as
\begin{equation}\label{eqn_cmdp_ap}
    \begin{aligned}
        \max_{\pi} \,\,\,& \mathbb{E}^{\pi}\left[\sum_{h=1}^H r^{\MM}(s_h, a_h)\right]\,\,\,\,\,\,\,
        s.t.\,\,\, \mathbb{E}^{\pi}\left[\sum_{h=1}^H \mathbb{I}\{a_h \neq \text{defer}\}\right] \leq D,
    \end{aligned}
\end{equation}
\begin{equation}\label{eqn_sample_cmdp_ap}
    \begin{aligned}
        \max_{\pi} \,\,\,\hat{\mathbb{E}}^{\pi}\left[\sum_{h=1}^H \hat{r}^{\MM,\tau}(s_h, a_h)\right]\hspace{5mm}
        s.t.\,\,\, \hat{\mathbb{E}}^{\pi}\left[\sum_{h=1}^H \mathbb{I}\{a_h \neq \text{defer}\}\right] \leq D,
    \end{aligned}
\end{equation}
where $\hat{\mathbb{E}}$ denotes that the transition kernel follows $\hat{p}^{\MM,\tau}$, and $\pi^*_D$ and $\hat{\pi}^{\tau}_D$ are the corresponding solutions for the above CMDP problems.

From the standard primal-dual theorem, there exists non-negative $\beta_D^*$, which is the optimal dual variable for the constraint $\mathbb{E}[\sum_{h=1}^H\mathbb{I}{\{a_h \neq \text{defer}\}}] \leq D$, such that $\beta_D^*$ and the optimal policy for \eqref{eqn_cmdp_ap} solve the following saddle point problem
\begin{equation*}
    \begin{aligned}
        &\hspace{4mm}\min_{\beta \geq 0} \max_{\pi\in\Pi} \left(\mathbb{E}^{\pi}\left[\sum_{h=1}^H r^{\MM}(s_h, a_h)\right] +\beta \left(D - \mathbb{E}^{\pi}\left[\sum_{h=1}^H \mathbb{I}\{a_h \neq \text{defer}\}\right]\right)\right)\\
        &=\min_{\beta \geq 0} \max_{\pi\in\Pi} \left(V^{\pi}_{\beta}(s_1)+\beta D\right).
    \end{aligned}
\end{equation*}
We observe that the optimal policy for \eqref{eqn_cmdp_ap} is the same as the optimal policy that maximize $V_{\beta_{D}^*}^{\pi}$. Therefore, adding penalty $\beta$ for advice and solving $V_{\beta}^*$ are equivalent to imposing constraint to the original MDP and solving the corresponding CMDP problem.

Now let us start the proof. For the proof of constraint violation, from the results for $\tau_1$, we can view $\mathbb{E}^{\pi}\left[\sum_{h=1}^H \mathbb{I}\{a_h \neq \text{defer}\}\right]$ and $\hat{\mathbb{E}}^{\pi}\left[\sum_{h=1}^H \mathbb{I}\{a_h \neq \text{defer}\}\right]$ as value functions under transition kernels $\hat{p}^{\MM,\tau}$ and $p^{\MM}$. Therefore by knowing that
$$\left|\mathbb{E}^{\hat{\pi}^{\tau}_D}\left[\sum_{h=1}^H \mathbb{I}\{a_h \neq \text{defer}\}\right] - \hat{\mathbb{E}}^{\hat{\pi}^{\tau}_D}\left[\sum_{h=1}^H \mathbb{I}\{a_h \neq \text{defer}\}\right]\right| \leq \frac{\epsilon}{H}, \,\,\,\text{ and }\,\,\,\hat{\mathbb{E}}^{\hat{\pi}^{\tau}_D}\left[\sum_{h=1}^H \mathbb{I}\{a_h \neq \text{defer}\}\right] \leq D,$$
we have
$$\mathbb{E}^{\hat{\pi}^{\tau}_D} \left[\sum_{h=1}^H \mathbb{I}\{a_h \neq \text{defer}\}\right] \leq D+\epsilon.$$
To show the $\epsilon$-optimal property of the objective function, by observing that $\mathbb{E}^{\pi}\left[\sum_{h=1}^H r^{\MM}(s_h, a_h)\right] = V_1^{\pi}(s_1)$, we can get the following decomposition 
$$
\begin{aligned}
V_{1}^{\pi^*_D}\left(s_{1}\right)-V_{1}^{\widehat{\pi}^{\tau}_D}\left(s_{1} \right) & =V_{1}^{\pi^*_D}\left(s_{1}\right) -\widehat{V}_{1}^{{\pi^*_D}}\left(s_{1}\right)+\widehat{V}_{1}^{{\pi^*_D}}\left(s_{1}\right) -\widehat{V}_{1}^{\widehat{\pi}^{\tau}_D}\left(s_{1}\right)+\widehat{V}_{1}^{\widehat{\pi}^{\tau}_D}\left(s_{1}\right)-V_{1}^{\widehat{\pi}^{\tau}_D}\left(s_{1}\right) \\
& \leq\left|V_{1}^{\pi^*_D}\left(s_{1}\right) -\widehat{V}_{1}^{{\pi^*_D}}\left(s_{1}\right)\right|+\left|\widehat{V}_{1}^{\widehat{\pi}^{\tau}_D}\left(s_{1}\right)-V_{1}^{\widehat{\pi}^{\tau}_D}\left(s_{1}\right)\right|
+\widehat{V}_{1}^{{\pi^*_D}}\left(s_{1}\right) -\widehat{V}_{1}^{\widehat{\pi}^{\tau}_D}\left(s_{1}\right)\\
&\leq \frac{2\epsilon}{H} + \widehat{V}_{1}^{{\pi^*_D}}\left(s_{1}\right) -\widehat{V}_{1}^{\widehat{\pi}^{\tau}_D}\left(s_{1}\right),
\end{aligned}
$$
where in the last inequality, we have the bound $\epsilon/H$ because the reward is in the scale $(0,1)$ and the terminating time is $\tau$. For the term $\widehat{V}_{1}^{{\pi^*_D}}\left(s_{1}\right) -\widehat{V}_{1}^{\widehat{\pi}^{\tau}_D}\left(s_{1}\right)$, we note that from the primal-dual property, based on the primal problem \eqref{eqn_sample_cmdp_ap}, there exists $\hat{\beta}_D\in[0,H)$ such that $\widehat{\pi}^{\tau}_D = \argmax_{\pi\in\Pi} \hat{V}^{\pi}_{\hat{\beta}_{D}}(s)$. Therefore, we know that 
$$\hat{V}^{\hat{\pi}_D^{\tau}}_{\hat{\beta}_{D}}(s_1) \geq \hat{V}^{\pi^*_D}_{\hat{\beta}_{D}}(s_1),$$
which implies
\begin{equation*}
    \begin{aligned}
        \widehat{V}_{1}^{{\pi^*_D}}\left(s_{1}\right) -\widehat{V}_{1}^{\widehat{\pi}^{\tau}_D}\left(s_{1}\right)\leq \hat{\beta}_D\left(\hat{\mathbb{E}}^{\pi^*_D} \left[\sum_{h=1}^H \mathbb{I}\{a_h \neq \text{defer}\}\right] - \hat{\mathbb{E}}^{\hat{\pi}^{\tau}_D} \left[\sum_{h=1}^H \mathbb{I}\{a_h \neq \text{defer}\}\right]\right).
    \end{aligned}
\end{equation*}
Next, we discuss different cases on $\hat{\beta}_D$. If $\hat{\beta}_D = 0$, this means that $\hat{\mathbb{E}}^{\hat{\pi}^{\tau}_D} \left[\sum_{h=1}^H \mathbb{I}\{a_h \neq \text{defer}\}\right] < D$ and the constrained optimization problem can be treated as an unconstrained one. Therefore we have $\widehat{V}_{1}^{\widehat{\pi}^{\tau}_D}\left(s_{1}\right)$ being the optimal solution for the unconstrained problem, and $\widehat{V}_{1}^{{\pi^*_D}}\left(s_{1}\right) -\widehat{V}_{1}^{\widehat{\pi}^{\tau}_D}\left(s_{1}\right) \leq 0$. For the case where $\hat{\beta}_D \in (0,H)$, we have $\hat{\mathbb{E}}^{\hat{\pi}^{\tau}_D} \left[\sum_{h=1}^H \mathbb{I}\{a_h \neq \text{defer}\}\right] = D$ and 
\begin{equation*}
    \begin{aligned}
        \widehat{V}_{1}^{{\pi^*_D}}\left(s_{1}\right) -\widehat{V}_{1}^{\widehat{\pi}^{\tau}_D}\left(s_{1}\right)\leq \hat{\beta}_D\left(\hat{\mathbb{E}}^{\pi^*_D} \left[\sum_{h=1}^H \mathbb{I}\{a_h \neq \text{defer}\}\right] - D\right).
    \end{aligned}
\end{equation*}
By viewing $\hat{\mathbb{E}}^{\pi^*_D} \left[\sum_{h=1}^H \mathbb{I}\{a_h \neq \text{defer}\}\right]$ as value function, from the property of $\tau$ and the original CMDP \eqref{eqn_cmdp_ap} we know that  
$$\left|\hat{\mathbb{E}}^{\pi^*_D} \left[\sum_{h=1}^H \mathbb{I}\{a_h \neq \text{defer}\}\right] - \mathbb{E}^{\pi^*_D} \left[\sum_{h=1}^H \mathbb{I}\{a_h \neq \text{defer}\}\right]\right| \leq \frac{\epsilon}{H},\,\,\,\text{ and }\,\,\,\mathbb{E}^{\pi^*_D} \left[\sum_{h=1}^H \mathbb{I}\{a_h \neq \text{defer}\}\right] \leq D.$$
Therefore, we have $\hat{\mathbb{E}}^{\pi^*_D} \left[\sum_{h=1}^H \mathbb{I}\{a_h \neq \text{defer}\}\right] \leq D + \epsilon/H$, and $\widehat{V}_{1}^{{\pi^*_D}}\left(s_{1}\right) -\widehat{V}_{1}^{\widehat{\pi}^{\tau}_D}\left(s_{1}\right) \leq H(\epsilon/H) = \epsilon$. 
Finally, we conclude the proof by observing that (using the convention $H > 1$)
$$
\begin{aligned}
V_{1}^{\pi^*_D}\left(s_{1}\right)-V_{1}^{\widehat{\pi}^{\tau}_D}\left(s_{1} \right) \leq \frac{2\epsilon}{H} + \widehat{V}_{1}^{{\pi^*_D}}\left(s_{1}\right) -\widehat{V}_{1}^{\widehat{\pi}^{\tau}_D}\left(s_{1}\right) \leq \frac{2\epsilon}{H} + \epsilon \leq 2\epsilon.
\end{aligned}
$$

\subsubsection{Proof of Essential Lemmas for Theorem \ref{thm_RFE}} \label{apnd:A.1.eslem}
In this section, we list the proof of lemmas for Theorem \ref{thm_RFE}. First, we need to introduce the high-probability event to characterize the high-probability bound.

Denote $p_h^{\MM,\pi}(s,a)$ the probability that the pair $(s,a)$ is visited under the model's transition kernel at time $t$ following policy $\pi$, and $\pi^t$ the policy of Algorithm 2 at episode $t$. We define the pseudo-counts to be 
$$\bar{n}_{h}^{t}(s, a) = \sum_{l=1}^{t} p_{h}^{\MM,\pi^l}(s, a).$$
We define the following events that are favorable: $\mathcal{E}$, the event where the empirical transition probabilities are close to the true ones; $\mathcal{E}^{\text {cnt }}$, the event where the counts are close to pseudo-counts, their expectations; and $\mathcal{E}^r$, the event where the estimation of reward function is close to the expected reward function within sufficient large episodes. More specifically
\begin{equation}
\begin{aligned}
& \mathcal{E} \triangleq\left\{\forall t \in \mathbb{N}, \forall h \in[H], \forall(s, a) \in \mathcal{S} \times \mathcal{A}: \operatorname{KL}\left(\widehat{p}_{h}^{\MM,t}(\cdot \mid s, a), p^{\MM}_{h}(\cdot \mid s, a)\right) \leq \frac{\beta\left(n_{h}^{t}(s, a), \delta\right)}{n_{h}^{t}(s, a)}\right\}, \\
& \mathcal{E}^{\mathrm{cnt}} \triangleq\left\{\forall t \in \mathbb{N}, \forall h \in[H], \forall(s, a) \in \mathcal{S} \times \mathcal{A}: n_{h}^{t}(s, a) \geq \frac{1}{2} \bar{n}_{h}^{t}(s, a)-\beta^{\mathrm{cnt}}(\delta)\right\},\\
&\mathcal{E}^r\triangleq\left\{
    \forall t\leq 4^{10}e^{12} S^{11}A^{11}H^{11}/(\epsilon^{12}\delta^{12}), \forall h \in[H], \forall(s, a) \in \mathcal{S} \times \mathcal{A}: |r_h^{\MM}(s,a)-\hat{r}_h^{\MM}(s,a)|\leq\sqrt{\frac{\beta^{r}\left(n_{h}^{t}(s, a)\right)}{n_h^t(s,a)}}
\right\}
\end{aligned}
\end{equation}
where $\text{KL}(\cdot,\cdot)$ is the KL-divergence of two distributions. We define the $\beta$ functions and show the high probability events in the following lemma
\begin{lemma}
    \label{lem:hp} For the following choices of functions $\beta$,

$$
\begin{aligned}
\phi(n, \delta) & = 6\log (4e S A H / (\epsilon\delta))+S \log (8 e(n+1)), \\
\beta^{\mathrm{cnt}}(\delta) & = 6\log (4e S A H / (\epsilon\delta)),\\
\beta^{r}\left(\delta\right) & = 6\log (4e S A H / (\epsilon\delta)),
\end{aligned}
$$
it holds that
$$
P(\mathcal{E}) \geq 1-\delta/3, \quad P\left(\mathcal{E}^{\mathrm{cnt}}\right) \geq 1-\delta/3, 
\text{ and } 
P\left(\mathcal{E}^{\mathrm{r}}\right)  \geq 1-\delta/3
$$
\end{lemma}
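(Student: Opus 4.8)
The plan is to prove the three concentration statements separately, allocating a failure probability of $\delta/3$ to each so that they hold jointly with probability $1-\delta$ (the lemma asserts them only individually). The architecture follows the high-probability analysis of \cite{menard2021fast}: the events $\mathcal{E}$ and $\mathcal{E}^{\mathrm{cnt}}$ on the transition kernel and the visitation counts are established exactly as there, whereas $\mathcal{E}^r$, which controls the reward estimation error, is the ingredient that must be added because our $r^{\MM}$ is stochastic rather than known.

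For $\mathcal{E}$, I would fix a triple $(h,s,a)$ and apply a time-uniform deviation bound for the KL divergence between the empirical categorical distribution $\widehat{p}_h^{\MM,t}(\cdot|s,a)$ over the $S$ successor states and the true kernel $p_h^{\MM}(\cdot|s,a)$. The standard method-of-mixtures (Laplace) argument gives, simultaneously for all $t$, a bound of the form $n_h^t(s,a)\,\mathrm{KL}\le \log(1/\delta')+S\log(8e(n_h^t(s,a)+1))$; the additive $S\log(8e(n+1))$ is precisely the prior-volume contribution of the $(S-1)$-dimensional simplex, which is why it reappears inside $\phi$. Choosing $\delta'$ of order $\delta/(SAH)$ and taking a union bound over $h\in[H]$ and $(s,a)\in\mathcal{S}\times\mathcal{A}$ folds the remaining $\log(SAH)$ and absolute constants into the leading term $6\log(4eSAH/(\epsilon\delta))$, yielding $P(\mathcal{E})\ge 1-\delta/3$. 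For $\mathcal{E}^{\mathrm{cnt}}$, I would view $n_h^t(s,a)-\bar{n}_h^t(s,a)$ as a martingale, since $p_h^{\MM,\pi^l}(s,a)$ is the conditional expectation of the $l$-th visitation indicator given the history; a time-uniform Bernstein inequality, together with the fact that the accumulated conditional variance is dominated by $\bar{n}_h^t(s,a)$, yields $n_h^t(s,a)\ge \tfrac12\bar{n}_h^t(s,a)-\beta^{\mathrm{cnt}}(\delta)$ with $\beta^{\mathrm{cnt}}(\delta)=6\log(4eSAH/(\epsilon\delta))$ after a union bound over $(h,s,a)$. Both arguments are taken essentially verbatim from \cite{menard2021fast}.

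For $\mathcal{E}^r$, I would use that each observed reward lies in $[0,1]$, so $\hat{r}_h^{\MM,t}(s,a)$ is an empirical mean of bounded samples (conditionally on the count), and apply Hoeffding's inequality to obtain $|r_h^{\MM}(s,a)-\hat{r}_h^{\MM}(s,a)|\le \sqrt{\beta^r(n_h^t(s,a))/n_h^t(s,a)}$. The decisive simplification is that $\mathcal{E}^r$ quantifies only over episodes $t$ up to the explicit bound $4^{10}e^{12}S^{11}A^{11}H^{11}/(\epsilon^{12}\delta^{12})$, which is a crude but valid a-priori upper bound on the stopping time $\tau$; restricting to this finite window legitimizes a plain union bound over $t$, $h$, and $(s,a)$, and since the window size enters the union bound only logarithmically it is absorbed into the constant $6\log(4eSAH/(\epsilon\delta))$. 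The main obstacle is the transition event $\mathcal{E}$: making the KL bound uniform over the unbounded episode index (rather than at a single fixed $t$) is what forces the method-of-mixtures machinery and produces the $S\log(8e(n+1))$ term; the count and reward events are comparatively routine once the failure-probability bookkeeping — and, for $\mathcal{E}^r$, the finite-window reduction — is in place.
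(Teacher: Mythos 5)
Your proposal is correct and follows essentially the same route as the paper: the events $\mathcal{E}$ and $\mathcal{E}^{\mathrm{cnt}}$ are handled by invoking the corresponding concentration results of \cite{menard2021fast} (which the paper simply cites as their Lemma 3, whereas you sketch the underlying method-of-mixtures and martingale arguments), and $\mathcal{E}^r$ is handled by Hoeffding's inequality plus a union bound over the finite episode window $t\le 4^{10}e^{12}S^{11}A^{11}H^{11}/(\epsilon^{12}\delta^{12})$. You also correctly identify the one point the paper must (and does) verify separately, namely that this window is an a priori upper bound on the algorithm's stopping time, so restricting $\mathcal{E}^r$ to it is harmless.
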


With Lemma \ref{lem:hp} that characterize the high probability event $\mathcal{E}\cap\mathcal{E}^{cnt}\cap\mathcal{E}^{r}$, we are able to prove lemma \ref{lem:ewbdd} and \ref{lem_tau1}.

    \begin{proof}[Proof of Lemma \ref{lem:ewbdd}]
        The proof is similar to the proof of Lemma 1 in \cite{menard2021fast}, and the difference is that we further analyze the estimation error for the rewards while they assume that the reward function is known and deterministic. From lemma \ref{lem:hp}, the event $\mathcal{E}\cap\mathcal{E}^r$ has probability at least $1-\delta$, and in this proof every computation is carried out assuming under $\mathcal{E}\cap\mathcal{E}^r$. For any policy $\pi$, we have
        \begin{align}
            \label{ieq:up_e}
            \hat{e}_{h}^{t,\pi}(s,a;r)
            =&
            |\hat{Q}^{t,\pi}_{h}(s,a;r)-{Q}^{\pi}_{h}(s,a;r)|\nonumber\\
            \leq&
            |r^{\MM}_h(s,a)-\hat{r}^{\MM,t}_h(s,a)|+\left\vert\sum_{s'\in \mathcal{S}}p^{\MM}_h(s'|s,a) Q_{h+1}^{\pi}(s',\pi_{h+1}(s'))-
            \sum_{s'\in \mathcal{S}}\hat{p}^{\MM,t}_h(s'|s,a) \hat{Q}_{h+1}^{t,\pi}(s',\pi_{h+1}(s'))\right\vert,\nonumber\\
            \leq&
            |r^{\MM}_h(s,a)-\hat{r}^{\MM,t}_h(s,a)|
            +
            \left\vert\sum_{s'\in \mathcal{S}}(p^{\MM}_h(s'|s,a) -\hat{p}^{\MM,t}_h(s'|s,a) ) Q_{h+1}^{\pi}(s',\pi_{h+1}(s'))\right\vert\\
            &+\left\vert
            \sum_{s'\in \mathcal{S}}\hat{p}^{\MM,t}_h(s'|s,a)( Q_{h+1}^{\pi}(s',\pi_{h+1}(s'))-\hat{Q}_{h+1}^{t,\pi}(s',\pi_{h+1}(s')))\right\vert\nonumber\\
            \leq&
            |r^{\MM}_h(s,a)-\hat{r}^{\MM,t}_h(s,a)|
            +
            \sum_{s'\in \mathcal{S}}\left\vert p^{\MM}_h(s'|s,a) -\hat{p}^{\MM,t}_h(s'|s,a) \right\vert Q_{h+1}^{\pi}(s',\pi_{h+1}(s'))\nonumber\\
            &+
            \sum_{s'\in \mathcal{S}}\hat{p}^{\MM,t}_h(s'|s,a)\left\vert Q_{h+1}^{\pi}(s',\pi_{h+1}(s'))-\hat{Q}_{h+1}^{t,\pi}(s',\pi_{h+1}(s'))\right\vert,\nonumber\\
            \end{align}
            where the first step comes from the definition of $\hat{e}_{h}^{t,\pi}(s,a;r)$, the second step comes from the recursive definition of the $Q$-values \eqref{def:Qval}, and the last step comes from the triangle inequality for the absolute value. Next, we will apply empirical Bernstein's inequality to the terms above. Specifically,
            \begin{align}
            &\leq |r^{\MM}_h(s,a)-\hat{r}^{\MM,t}_h(s,a)| +  3\sqrt{\frac{\text{Var}_{\hat{p}_{h}^{\MM,t}(\cdot|s,a)}(\hat{V}_{h+1}^{t,\pi})}{H^2}\left(\frac{H^{2} \beta\left(n_{h}^{t}(s, a), \delta\right)}{n_{h}^{t}(s, a)} \wedge 1\right)}+15H^2\frac{\phi(n_h^t(s,a),\delta)}{n_h^t(s,a)}\nonumber\\
            &+
            \left(1+\frac{1}{H}\right)\sum_{s'\in \mathcal{S}}\hat{p}^{\MM,t}_h(s'|s,a)\hat{e}_{h+1}^{t,\pi}(s',\pi_{h+1}(s');r).\nonumber\\
            &\leq \sqrt{\frac{\beta^r(n_h^t(s,a))}{n_h^t(s,a)}} +  3\sqrt{\frac{\text{Var}_{\hat{p}_{h}^{\MM,t}(\cdot|s,a)}(\hat{V}_{h+1}^{t,\pi})}{H^2}\left(\frac{H^{2} \beta\left(n_{h}^{t}(s, a), \delta\right)}{n_{h}^{t}(s, a)} \wedge 1\right)}+15H^2\frac{\phi(n_h^t(s,a),\delta)}{n_h^t(s,a)}\nonumber\\
            &+
            \left(1+\frac{1}{H}\right)\sum_{s'\in \mathcal{S}}\hat{p}^{\MM,t}_h(s'|s,a)\hat{e}_{h+1}^{t,\pi}(s',\pi_{h+1}(s');r),
        \end{align}
        where the first inequality follows in \cite{menard2021fast} page 8 (the readers can also see there how the high probability event $\mathcal{E}$ is used in page 8 and Lemma 10 there), and the second comes from $\mathcal{E}^r$. Then, observe that $\beta^r(n) \leq \phi(n,\delta)$, and we have
        $$\sqrt{\frac{\phi(n_h^t(s,a),\delta)}{n_h^t(s,a)}} = \sqrt{\frac{1}{H^2}\frac{H^2\phi(n_h^t(s,a),\delta)}{n_h^t(s,a)}}\leq \sqrt{\frac{1}{H^2}\left(\frac{H^{2} \beta\left(n_{h}^{t}(s, a), \delta\right)}{n_{h}^{t}(s, a)} \wedge 1\right)}+\frac{1}{H}\frac{H^2\phi(n_h^t(s,a),\delta)}{n_h^t(s,a)},$$
        for the reason that $\sqrt{x}\leq x$ if $x \geq 1$. Therefore, we have
        \begin{equation}\label{ieq:upe_recur}
            \begin{aligned}
            \hat{e}_{h}^{t,\pi}(s,a;r)
            \leq&
            \left(\frac{1}{H}+3\sqrt{\frac{\text{Var}_{\hat{p}_{h}^{\MM,t}(\cdot|s,a)}(\hat{V}_{h+1}^{t,\pi})}{H^2}}\right)\sqrt{\left(\frac{H^{2} \beta\left(n_{h}^{t}(s, a), \delta\right)}{n_{h}^{t}(s, a)} \wedge 1\right)}+16H^2\frac{\phi(n_h^t(s,a),\delta)}{n_h^t(s,a)}\\
            &+
            \left(1+\frac{1}{H}\right)\sum_{s'\in \mathcal{S}}\hat{p}^{\MM,t}_h(s'|s,a)\hat{e}_{h+1}^{t,\pi}(s',\pi_{h+1}(s');r).
            \end{aligned}
        \end{equation}
        Then, we can recursively define an \textbf{upper bound} for $\hat{e}_{h}^{t,\pi}(s,a;r)$ based on \eqref{ieq:upe_recur} as follows:
        \begin{align*}
            Z_{h}^{t,\pi}(s,a;r)
            =&
            \min\left\{
            H, \left(\frac{1}{H}+3\sqrt{\frac{\text{Var}_{\hat{p}_{h}^{\MM,t}(\cdot|s,a)}(\hat{V}_{h+1}^{t,\pi})}{H^2}}\right)\sqrt{\left(\frac{H^{2} \beta\left(n_{h}^{t}(s, a), \delta\right)}{n_{h}^{t}(s, a)} \wedge 1\right)}\right.\\
            &\left.
            +16H^2\frac{\phi(n_h^t(s,a),\delta)}{n_h^t(s,a)}+\left(1+\frac{1}{H}\right)\sum_{s'\in \mathcal{S}}\hat{p}^{\MM,t}_h(s'|s,a)Z_{h+1}^{t,\pi}(s',\pi_{h+1}(s');r)
            \right\}.
        \end{align*}
        Then, consider the following two sequences
        \begin{align*}
            \label{def:wy}
            Y_{h}^{t,\pi}(s,a;r)
            =&
            \left(\frac{1}{H}+3\sqrt{\frac{\text{Var}_{\hat{p}_{h}^{\MM,t}(\cdot|s,a)}(\hat{V}_{h+1}^{t,\pi})}{H^2}}\right)\sqrt{\left(\frac{H^{2} \beta\left(n_{h}^{t}(s, a), \delta\right)}{n_{h}^{t}(s, a)} \wedge 1\right)}\\
            &+
            \left(1+\frac{1}{H}\right)\sum_{s'\in \mathcal{S}}\hat{p}^{\MM,t}_h(s'|s,a)Y_{h+1}^{t,\pi}(s',\pi_{h+1}(s');r),\\
            W_{h}^{t,\pi}(s,a;r)
            =&
            \min\left\{
            H, 16H^2\frac{\phi(n_h^t(s,a),\delta)}{n_h^t(s,a)}
            \left(1+\frac{1}{H}\right)\sum_{s'\in \mathcal{S}}\hat{p}^{\MM,t}_h(s'|s,a)Z_{h+1}^{t,\pi}(s',\pi_{h+1}(s');r)
            \right\}.
        \end{align*}
We can prove by induction that for all $h,s,a$, 
$$
\hat{e}_{h}^{t,\pi}(s,a;r) \leq Z_{h}^{t, \pi}(s, a ; r) \leq Y_{h}^{t, \pi}(s, a ; r)+W_{h}^{t, \pi}(s, a).
$$
Therefore, to bound $\hat{e}_{1}^{t,\pi}(s_1,\pi(s_1);r)$, it suffices to bound $Y_{h}^{t, \pi}(s_1, \pi(s_1) ; r)+W_{h}^{t, \pi}(s_1, \pi(s_1))$. Denote $\hat{p}_h^{\MM,t,\pi}(s,a)$ the probability that the pair $(s,a)$ is visited under the estimated transition kernel following policy $\pi$, we have 

\begin{equation*}
    \begin{aligned}
&\hspace{5mm}Y_{1}^{t, \pi}\left(s_{1},\pi(s_1) ; r\right)\\ & = \sum_{s, a} \sum_{h=1}^{H} \widehat{p}_{h}^{\MM, t, \pi}(s, a)\left(1+\frac{1}{H}\right)^{h-1} \left(\frac{1}{H}+3\sqrt{\frac{\text{Var}_{\hat{p}_{h}^{\MM,t}(\cdot|s,a)}(\hat{V}_{h+1}^{t,\pi})}{H^2}}\right)\sqrt{\left(\frac{H^{2} \beta\left(n_{h}^{t}(s, a), \delta\right)}{n_{h}^{t}(s, a)} \wedge 1\right)}\\
& \leq 3e \sqrt{\sum_{s, a} \sum_{h=1}^{H} \widehat{p}_{h}^{\MM, t, \pi}(s, a) \frac{\text{Var}_{\hat{p}_{h}^{\MM,t}(\cdot|s,a)}(\hat{V}_{h+1}^{t,\pi})}{H^2}} \sqrt{\sum_{s, a} \widehat{p}_{h=1}^{t, \pi}(s, a)\left(\frac{H^{2} \beta\left(n_{h}^{t}(s, a), \delta\right)}{n_{h}^{t}(s, a)} \wedge 1\right)} \\
&\hspace{4mm} + e\sqrt{\sum_{s, a} \sum_{h=1}^{H} \widehat{p}_{h}^{\MM, t, \pi}(s, a) \frac{1}{H^2}} \sqrt{\sum_{s, a} \widehat{p}_{h=1}^{t, \pi}(s, a)\left(\frac{H^{2} \beta\left(n_{h}^{t}(s, a), \delta\right)}{n_{h}^{t}(s, a)} \wedge 1\right)}\\
    \end{aligned}
\end{equation*}

\begin{equation*}
    \begin{aligned}
& \leq \left(3 e \sqrt{\frac{1}{H^{2}} \mathbb{E}_{\pi, \widehat{p}_{h}^{\MM,t}}\left[\left(\sum_{h=1}^{H} r_{h}\left(s_{h}, a_{h}\right)-\widehat{V}_{1}^{\pi}\left(s_{1} ; r\right)\right)^2\right]}+\frac{e}{\sqrt{H}}\right) \sqrt{\sum_{s, a} \sum_{h=1}^{H} \widehat{p}_{h}^{\MM, t, \pi}(s, a)\left(\frac{H^{2} \beta\left(n_{h}^{t}(s, a), \delta\right)}{n_{h}^{t}(s, a)} \wedge 1\right)} \\
& \leq 4e \sqrt{\sum_{s, a} \sum_{h=1}^{H} \widehat{p}_{h}^{\MM,t,\pi}(s, a)\left(\frac{H^{2} \beta\left(n_{h}^{t}(s, a), \delta\right)}{n_{h}^{t}(s, a)} \wedge 1\right)} \leq 4 e \sqrt{W_{1}^{t, \pi}\left(s_{1},\pi(s_1)\right)},
    \end{aligned}
\end{equation*}
Where the second inequality comes from the law of total variance (\cite{menard2021fast} Lemma 7), and the last inequality comes from page 24 (Step 3) in \cite{menard2021fast}. Therefore, we have 
    \[
        \hat{e}_{h}^{t,\pi}(s_1,\pi(s_1);r)\leq 4e\sqrt{\max_{a\in\mathcal{A}}W_{1}^{t}(s_1,a)}+\max_{a\in\mathcal{A}}W_{1}^{t}(s_1,a).
    \]

\end{proof}
\begin{proof}[Proof of Lemma \ref{lem_tau1}]
 We first provide an upper bound on $W_{h}^{t}(s, a)$ for all $(s, a, h)$ and $t$. By definition \eqref{eqn_W}, if $n_{h}^{t}(s, a)>0$, we have
$$
\begin{aligned}
& W_{h}^{t}(s, a) \leq 16 H^{2} \frac{\beta\left(n_{h}^{t}(s, a), \delta\right)}{n_{h}^{t}(s, a)}+\left(1+\frac{1}{H}\right) \sum_{s^{\prime}} \widehat{p}_{h}^{\MM,t}\left(s^{\prime} \mid s, a\right) \max _{a^{\prime}} W_{h+1}^{t}\left(s^{\prime}, a^{\prime}\right) \\
&=16 H^{2} \frac{\beta\left(n_{h}^{t}(s, a), \delta\right)}{n_{h}^{t}(s, a)}+\left(1+\frac{1}{H}\right)\sum_{s'\in\mathcal{S}}\left(\widehat{p}_{h}^{\MM,t}(s'|s,a)-p^{\MM}_{h}(s'|s,a)\right) W_{h+1}^{t}(s', \pi^{t+1}_{h+1}(s')) \\
&+\left(1+\frac{1}{H}\right)\sum_{s'\in\mathcal{S}} p^{\MM}_{h}(s'|s,a)W_{h+1}^{t}(s', \pi_{h+1}^{t+1}(s'))
\end{aligned}
$$
From Lemma 10 in \cite{menard2021fast} and the Bernstein inequality we get (see page 25 in \cite{menard2021fast} for more details)
$$
W_{h}^{t}(s, a) \leq 22 H^{2}\left(\frac{\beta\left(n_{h}^{t}(s, a), \delta\right)}{n_{h}^{t}(s, a)} \wedge 1\right)+\left(1+\frac{3}{H}\right) \sum_{s'\in \mathcal{S}}p_{h}(s'|s,a)  W_{h+1}^{t}(s, \pi_{h+1}^{t+1}(s))
$$
Unfolding the above equation and using $(1+3/H)^H\leq e^3$ we have
$$
W_{1}^{t}\left(s_1,\pi_{1}^{t+1}(s_1)\right) \leq 22 e^{3} H^{2} \sum_{h=1}^{H} \sum_{s, a} p_{h}^{\MM,t+1}(s, a)\left(\frac{\beta\left(n_{h}^{t}(s, a), \delta\right)}{n_{h}^{t}(s, a)} \wedge 1\right)
$$
In this proof, we choose the high probability event to be $\mathcal{E}^{cnt}$, under which we have (see \cite{menard2021fast} lemma 8)
\begin{align}\label{eqn_w1bound}
    W_{1}^{t}\left(s_1,\pi_{1}^{t+1}(s_1)\right) \leq 88 e^{3} H^{2} \sum_{h=1}^{H} \sum_{s, a} p_{h}^{\MM,t+1}(s, a) \frac{\beta\left(\bar{n}_{h}^{t}(s, a), \delta\right)}{\bar{n}_{h}^{t}(s, a) \vee 1},
\end{align}
where we recall that $\bar{n}_{h}^{t}(s, a) = \sum_{l=1}^{t} p_{h}^{\MM,\pi^l}(s, a)$ is the pseudo-count.

Next, we are going to sum the above inequality over $t \leq T$ for $T<\tau$. Due to the stopping rule, we have 
$$
\varepsilon \leq 4e \sqrt{ W_{1}^{t}\left(s_{1},\pi_{1}^{t+1}(s_1)\right)}+ W_{1}^{t}\left(s_{1},\pi_{1}^{t+1}(s_1)\right).
$$
Summing the over the above inequalities for $0 \leq t \leq T$, followed by Cauchy-Schwarz inequality, we have
$$
\begin{aligned}
(T+1) \varepsilon & \leq \sum_{t=0}^{T}\left( 4e \sqrt{ W_{1}^{t}\left(s_{1},\pi_{1}^{t+1}(s_1)\right)}+ W_{1}^{t}\left(s_{1},\pi_{1}^{t+1}(s_1)\right)\right) \\
& \leq 4e \sqrt{(T+1) \sum_{t=0}^{T} W_{1}^{t}\left(s_{1},\pi_{1}^{t+1}(s_1)\right)}+\sum_{t=0}^{T} W_{1}^{t}\left(s_{1},\pi_{1}^{t+1}(s_1)\right) .
\end{aligned}
$$
Next, from \eqref{eqn_w1bound}, the property that $\phi(\cdot, \delta)$ is increasing and Lemma 9 in \cite{menard2021fast}, we have
$$
\begin{aligned}
\sum_{t=0}^{T} W_{1}^{t}\left(s_{1},\pi_{1}^{t+1}(s_1)\right) & \leq 88 e^{3} H^{2} \sum_{t=0}^{T} \sum_{h=1}^{H} \sum_{s, a} p_{h}^{\MM,t+1}(s, a) \frac{\beta\left(\bar{n}_{h}^{t}(s, a), \delta\right)}{\bar{n}_{h}^{t}(s, a) \vee 1} \\
& \leq 88 e^{3} H^{2} \phi(T, \delta) \sum_{t=0}^{T} \sum_{h=1}^{H} \sum_{s, a} p_{h}^{\MM,t+1}(s, a) \frac{1}{\bar{n}_{h}^{t}(s, a) \vee 1} \\
& =88 e^{3} H^{2} \phi(T, \delta) \sum_{h=1}^{H} \sum_{s, a} \sum_{t=0}^{T} \frac{\bar{n}_{h}^{t+1}(s, a)-\bar{n}_{h}^{t}(s, a)}{\bar{n}_{h}^{t}(s, a) \vee 1} \\
& \leq 352 e^{3} H^{3} S A \log (T+2) \phi(T, \delta)
\end{aligned}
$$
Therefore, we have
$$
(T+1) \varepsilon \leq 76 e^{3} \sqrt{(T+1) H^{3} S A \log (T+2) \phi(T, \delta)}+352 e^{3} H^{3} S A \log (T+2) \phi(T, \delta)
$$
Lastly, using Lemma 13 in \cite{menard2021fast} we have 
$$
\tau \leq \frac{H^{3} S A}{\varepsilon^{2}}(\log (4 S A H / \delta)+S) C_{1}+1
$$
where $C_{1} = 9000e^6\log^2\left(e^{18}\left(\log(4HSAT/\delta)+S\right)\frac{H^4SA}{\epsilon}\right)$.

\end{proof}

\begin{proof}[Proof of Lemma \ref{lem:hp}]
    To prove the probability bound for the first two sets $\mathcal{E}$ and $\mathcal{E}^{cnt}$, we refer to Lemma 3 in \cite{menard2021fast}. To show the probability bound for $\mathcal{E}^r$, by Hoeffding's inequality, we have with probability at least $1-\frac{\delta}{4^{11}e^{12}S^{12}A^{12}H^{12}}$
    \begin{align}
        \label{ieq:rbdd}
        |r_h^{\MM}(s,a)-\hat{r}_h^{\MM}(s,a)|\leq\frac{\beta^{r}\left(n_{h}^{t}(s, a)\right)}{\sqrt{n_h^t(s,a)}}        
    \end{align}
    for any fixed episode $t$, step $h$, and state-action pair $(s,a)\in\mathcal{S}\times\mathcal{A}$. Then, taking a union bound we have \eqref{ieq:rbdd} holds for all state-action pairs, and $t\leq 4^{10}e^{12} S^{11}A^{11}H^{11}/(\epsilon^{12}\delta^{12})$ with probability no less than $1-\frac{\delta}{3}$.

    We remark here that although we have this episode's upper bound, our algorithm will stop before reaching this upper bound. Specifically, by calculations, we have for all $H,A,S\geq1$
    \begin{align*}
        \frac{4^{10} e^{12} S^{11}A^{11}H^{11}}{\epsilon^{12}\delta^{12}}
        &\geq
        1048576e^{12} \frac{S^{11}A^{11}H^{11}}{\epsilon^{12}\delta^{12}}\\
        &\geq
        \left(36\cdot 10^4e^6+32\cdot10^4e^8\right)\frac{H^7S^4A^4}{\epsilon^4\delta^2}\\
        &\geq
        C_1\frac{H^3SA}{\epsilon^2}\cdot\frac{4SAH}{\delta},
    \end{align*}
    which is larger than the bound in Theorem \ref{thm_RFE}. Thus, our choice of $\beta^r$ guarantees that with high probability, the algorithm will find an $\epsilon$-optimal solution before reaching the maximum number of episodes.
\end{proof}

\section{Supplementary Materials for Experiment}
\label{apnd:B}
\subsection{Flappy Bird Experiment}
\paragraph{More details for Figure \ref{fig:regret_bird}.}\

 We implement three algorithms in Figure \ref{fig:regret_bird}. For each algorithm, due to the computation efficiency, the policy will be updated every 5000 episode in Figure \ref{fig:regret_bird_greedy} (Policy Greedy) and 1000 episode in  Figure \ref{fig:regret_bird_safe} (Policy Safe), and then apply the updated policies for the next 1000/5000 episode respectively. All three algorithms are trained by $8\times 10^5$ for Policy Greedy and $2\times 10^5$ for Policy Safe. All experiments are repeated $5$ times with the mean of results being reported.  More details about implementations are as follows:
\begin{itemize}
    \item RFE-ADvice (FRE-AD): Based on the estimated transition kernel and reward from Algorithm \ref{alg:alg2}. At episode $t$, the algorithm uses planning oracles to obtain the optimal policy $\hat{\pi}^{t}$ for the MDP $\{\mathcal{S},\bar{\mathcal{A}}, H, \hat{p}^{\MM,t}, r\}$. Note the rewards are known, we replace the reward's estimators $\hat{r}$ used in Algorithm \ref{alg:alg2} by the true reward $r$ of the environment. For the parameter of the algorithm, we set $\delta=0.1$ and we do not set the convergence checking step so no need to input $\epsilon$. We also use $0.1\cdot \beta$ instead of $\beta$ to control the "exploration" bonus (note $\beta$ function measures the uncertainty of the current state). In Figure \ref{fig:regret_bird_greedy}, \ref{fig:regret_bird_safe}, we evaluate the value gap between the (estimated) optimal policy $\hat{\pi}^{t}$  and the optimal policy in every $1000$ episode and use the cumulative sum of $1000\times \text{value gap}$ as the "regret" of RFE-AD. 
    \item UCB-ADherence (UCB-AD): This algorithm is the Algorithm \ref{alg:alg1} that outputs policy at different episode $t$ based on the upper confidence bound of the adherence level $\theta$. When implementing, we use $$C(\theta,n,T,\delta)=0.4*\sqrt{\frac{2\log(n)}{n}},$$
    as the function to measure uncertainty. Three subfigures of Figure \ref{fig:regret_bird} show the (cumulative) regret of UCB-AD. To test the robustness of UCB-AD, Figure \ref{fig:regret_UCB_AD_thetas} "zooms in" its regret under different values of $\theta$ for both policies.  
    \item EULER: The algorithm is mainly based on \cite{zanette2019tighter} with two differences. (1) Since the reward is known, we replace all estimates of reward $r$ by their true values (and set the uncertainty bonus as $0$) in the algorithm; (2) The algorithm will fail when the unreachable states, where can not be arrived with probability $1$ in the MDP, are not explicitly revealed. Since such states can not be known because of the unknown policy of humans, we feed additional
     $3\times 10^5$ episodes for both policies as the exploration period (and thus the total number of training episodes is $11\times 10^5$ and $5\times 10^5$ for EULER), after which we set the states with $0$ observation/count as unreachable states. We further select the parameter $\delta=0.1$ used in EULER. In Figure \ref{fig:regret_bird}, we show the (cumulative) regret only after the exploration period ($3\times 10^5$ episode).   
\end{itemize}

\paragraph{More details for Figure \ref{fig:cvg_beta}.}\

We implement RFE-$\beta$ in Figure \ref{fig:cvg_beta}, which is based on the estimated transition kernel and reward from Algorithm \ref{alg:alg2} at episode $t$, the algorithm uses planning oracles to obtain the optimal policy $\{\mathcal{S},\bar{\mathcal{A}}, H, \hat{p}^{\MM,t}, r_{\beta}\}$ with $\beta\in\{0,0.2,0.4\}$, and all the other settings are same as Figure \ref{fig:regret_bird}. Figure \ref{fig_apnd:cvg_beta} shows the value gaps of RFE-$\beta$, where for completeness we also include the copy of \ref{fig:cvg_beta}.

\begin{figure}[ht!]
\centering
\begin{subfigure}{0.49\textwidth}
\includegraphics[height=0.22\textheight]{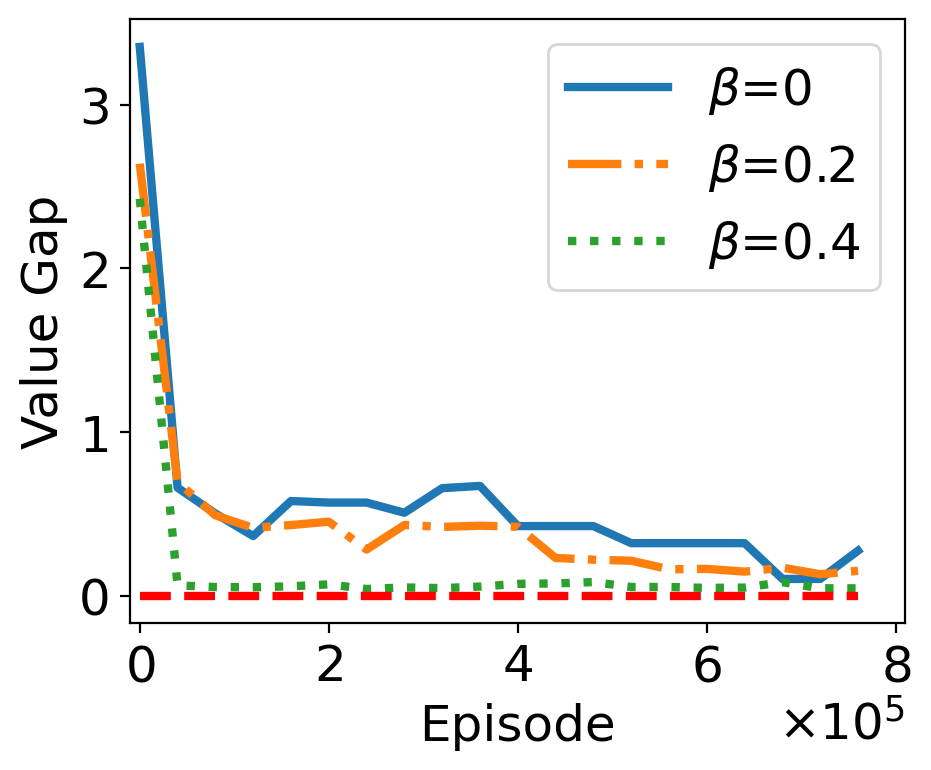}
   \caption{Policy Greedy.}
\end{subfigure}
\begin{subfigure}{0.49\textwidth}
\includegraphics[height=0.22\textheight]{pics/reward_beta_safe.png}
   \caption{Policy Safe.}
\end{subfigure}
\caption{Value Gaps of RFE-$\beta$.}
\label{fig_apnd:cvg_beta}
\end{figure}

\paragraph{More details for Figure \ref{fig:cvg_CMDP_FR}, \ref{fig:value_post_CMDP}.}\

\textbf{Environment.} We change our environment to illustrate the challenge with advice budget, which is shown in Figure \ref{fig_apnd:bird_CMDP_env}. We should note to achieve the plotted advice policy, which is optimal without advice budget constraint for Policy Greedy, we should at least advise twice to make the bird go through the wall for two stars instead of getting the star at the beginning and hitting the wall. Thus, setting the advice budget as $1$ will make both the policy itself and the learning process harder than with enough advice budget.

\textbf{Algorithm Implementation.} We implement two algorithms for this environment. For each algorithm, due to the computation efficiency, the policy will be updated every $50$ episodes, and then apply the updated policies for the next $50$ episode. For Figure \ref{fig:cvg_CMDP_FR} and \ref{fig_apnd:CMDP_budget_cvg}, we evaluate both algorithms for every $50$ episodes to compute the value gap and advice count gap. Both algorithms are trained by $1500$ episode and all experiments are repeated $5$ times with the mean of the results being reported. Details about implementations are as follows:
\begin{itemize}
    \item  RFE-CMDP: The algorithm is based on the estimated transition kernel and reward from Algorithm \ref{alg:alg2} at episode $t$ and it uses planning oracles to obtain the optimal policy $\hat{\pi}^{t}_{D}$ for the CMDP \eqref{eqn_sample_cmdp}. For the algorithm's parameters, we set them as the same values as in \ref{fig:regret_bird}.
    \item UC-CFH: The algorithm is based on \cite{kalagarla2021sample}. We set its parameter with $\epsilon=1$ and $\delta=0.5$ in this experiment.
\end{itemize}

\begin{figure}[ht!]
\centering
\begin{subfigure}[b]{0.31\textwidth}
\centering
\includegraphics[height=0.12\textheight]{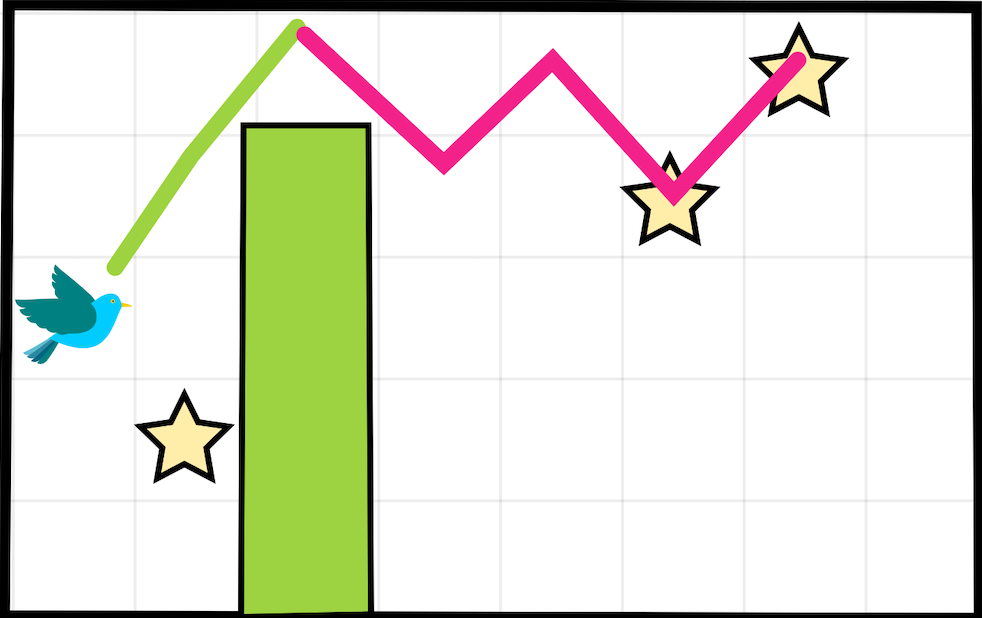}
   \caption{Environment and optimal trajectory without advice budget.}
   \label{fig_apnd:bird_CMDP_env}
\end{subfigure}
\hfill 
\begin{subfigure}[b]{0.31\textwidth}
\centering
\includegraphics[height=0.15\textheight]{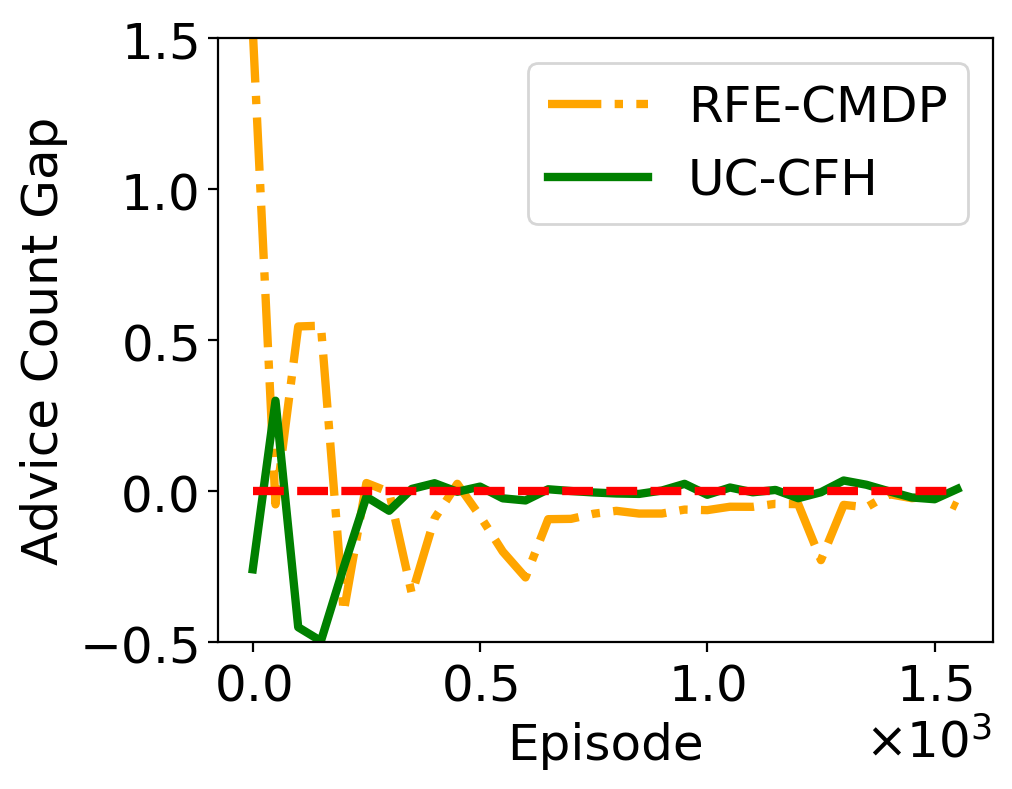}
   \caption{Advice count gaps w.r.t. episode.}
   \label{fig_apnd:CMDP_budget_cvg}
\end{subfigure}
\hfill 
\begin{subfigure}[b]{0.31\textwidth}
\centering
\includegraphics[height=0.15\textheight]{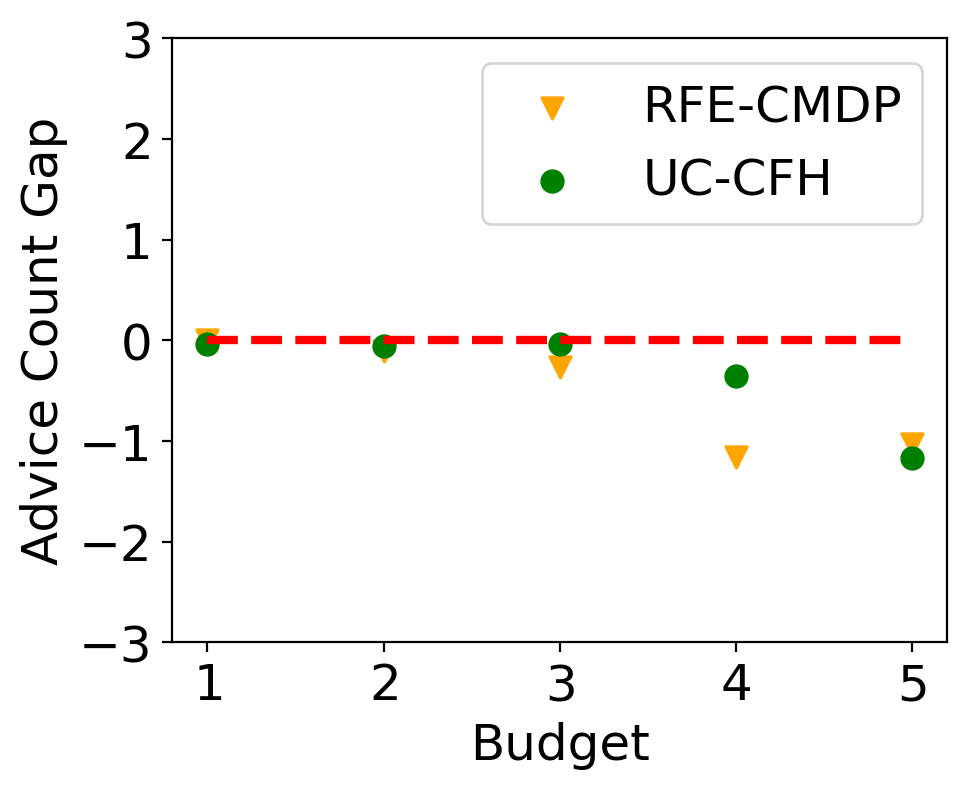}
   \caption{Advice count gaps when evaluating.}
\end{subfigure}
\caption{Additional results for RFE-CMDP. (a) shows the environment for testing the algorithms and one trajectory of optimal advice policy with advising twice: the red line means the machine defers and the green line means the machine advises. (b) and (c) show the budget violations through the advice count gap, which is computed by using the advice budget minus expected advising times of the policy, w.r.t. training episode and when evaluating respectively.}
\label{fig_apnd:bird_CMDP}
\end{figure}

\subsection{Car Driving Environment}
In our car driving environment, we have three lanes and a horizon of 10. Each cell within the environment can either be empty, contain a stone, or have a car present. The types of cells follow independent and identically distributed (i.i.d.) distributions, which will be specified later. The objective for the driver is to maximize the distance covered by the car. However, colliding with another car or reaching the boundaries results in the destruction of the car, terminating the episode. Encountering a stone can also cause minor damage to the car, but the car will still be operating. The driver's goal is to drive on the empty road and avoid any obstacles. The experiment is adapted from \cite{meresht2020learning}.

\paragraph{Car Driving MDP.}\ 

We assume the machine is myopic up to two rows: It can observe the next two rows at most. Thus, the state space can be defined by the 9 cells' types in the current three rows with a total of $3^9=19683$ states. The environment can be represented by Figure \ref{fig:car_env}.

The action space is $\mathcal{A}=\{\text{Left, Straight, Right}\}$. The car will always keep moving to the next row unless it hits another car or the boundary. Further, the ``Left'' action will move the car to the left of the current lane, the ``Right'' action will move to the right, and the ``Straight'' action will keep the car in its current lane.

The car will get a reward of $1$ when the current cell where it is located is empty, a reward of $0.5$ when it has a stone, and a reward $0$ when it has a car or is out of the boundary (also, the environment will be terminated). The cell's type is sampled by $(0.4,0.3,0.3)$ for the empty, stone, and car respectively. 

\paragraph{Human behavior: $\pi^{\HH}$ and $\theta$.}\

We consider a myopic driver who is only aware of other cars in the next row. Therefore, the driver's policy is to avoid the cars in the next row, and if there are multiple equivalent actions it will select them randomly with an equal probability.
We set the adherence level $\theta$ as the follows
\begin{equation*}
    \theta(a,s)=\begin{cases}
        0.9 &\quad \text{if $a=$\text{Straight},}\\
        0.7 &\quad \text{otherwise.}
    \end{cases}
\end{equation*}
Thus, the driver will adhere to the advice with probability $0.9$ if the advice is ``Straight'' and otherwise $0.7$, and the intuition is that the driver wants to avoid changing the lane too often.

\paragraph{Experiment setting and results.}\

We train RFE-AD and UCB-AD for the \textit{Car Driving} environment, with $4\times10^5$ training episodes and the parameters same as the Flappy Bird environment.  We note that because the environment is $\mathcal{E}_1$, UCB-AD is trained with the knowledge of the distribution of the cell's type. Due to the large state space of this environment, it makes any planning algorithm computationally intensive. Thus we update and evaluate the policy for RFE-AD and UCB-AD every $8000$ episode. The algorithms' parameters are the same as Flappy Bird environment. All experiments
are repeated $5$ times with the mean of results being reported.

As demonstrated in Figure \ref{fig:regret_car}, UCB-AD not only outperforms RFE-AD but also achieves a near-optimal policy at a very early stage. We believe the strong performance of UCB-AD comes from the monotone property of Proposition \ref{prop_monotone} and that the optimistic policy happens to be the optimal policy in this setting. In Figure \ref{fig:beta_car}, we further evaluate the RFE-$\beta$ algorithm for different $\beta$'s based on the estimated transition kernel from \ref{fig:regret_car} for $\beta=0.1,\ldots,1$, and find $\{V_{\beta}^{\hat{\pi}_{\beta}}\}_{\beta >0}$ close to $\{V_{\beta}^*\}_{\beta>0}$ in a consistent manner.

\begin{figure}[ht!]
\centering
    \begin{subfigure}[b]{0.31\textwidth}
    \hspace{1.3cm}
\includegraphics[height=0.16\textheight]{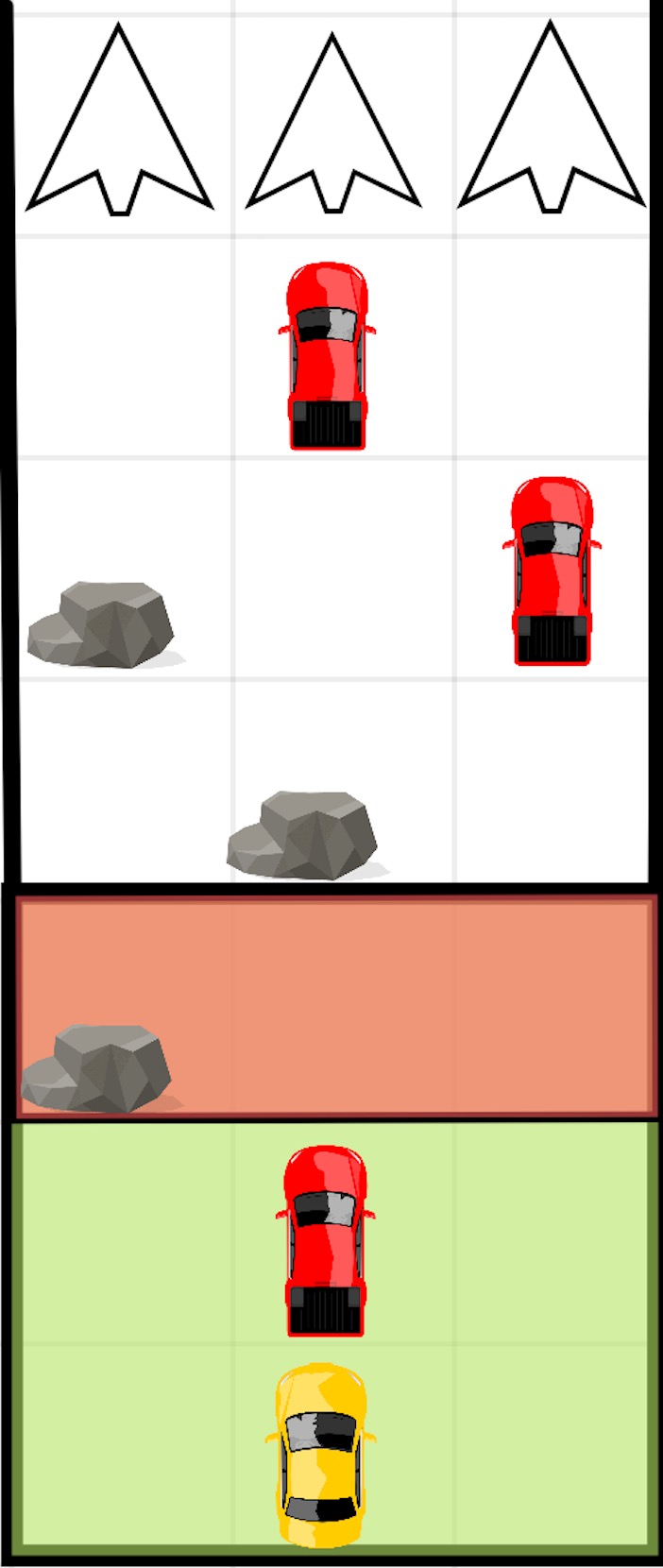}
    \caption{Car Driving Environment.}
    \label{fig:car_env}
    \end{subfigure}
\begin{subfigure}[b]{0.31\textwidth}
\includegraphics[height=0.16\textheight]{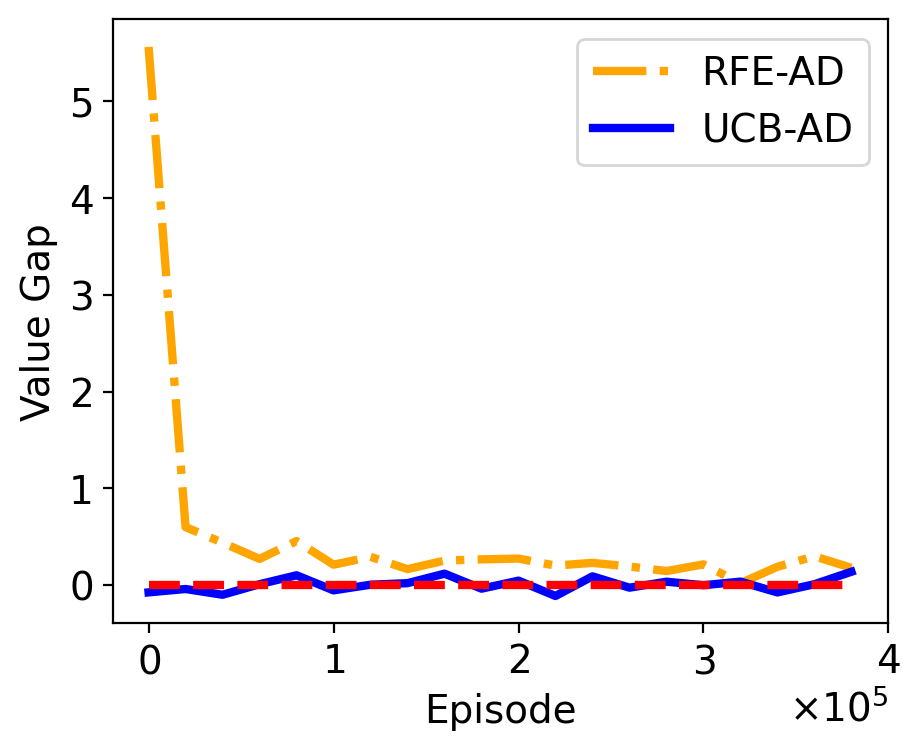}
   \caption{Value gaps w.r.t. episode.}
   \label{fig:regret_car} 
\end{subfigure}
\begin{subfigure}[b]{0.31\textwidth}
\includegraphics[height=0.16\textheight]{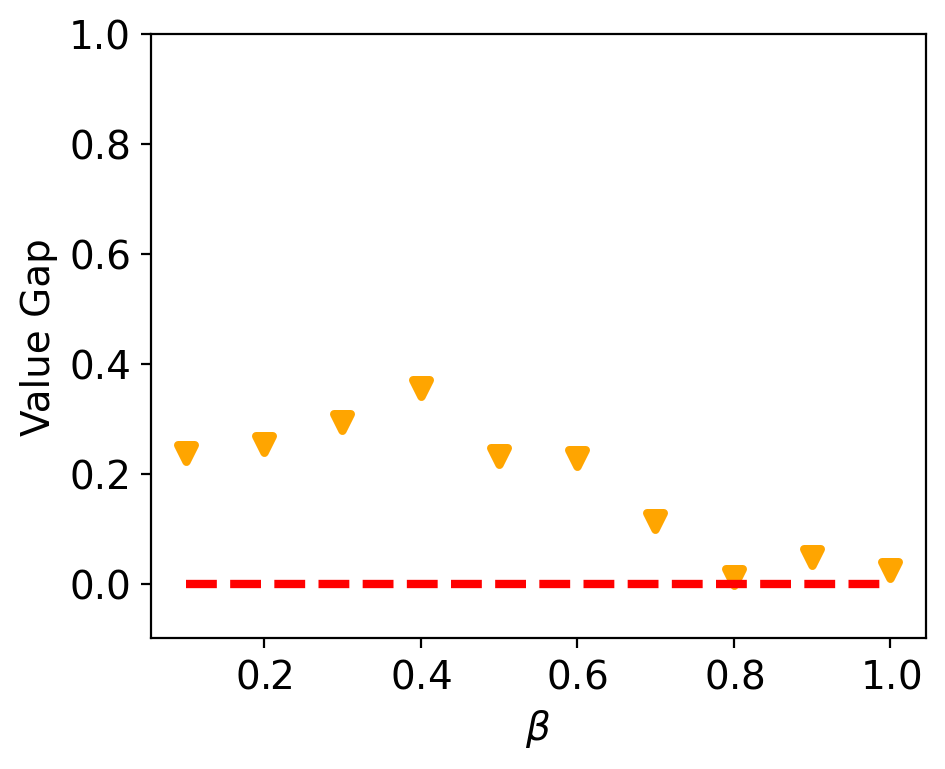}
   \caption{Value gaps when evaluating.}
   \label{fig:beta_car} 
\end{subfigure}
\caption{Environment and value gaps of algorithms. Car Driving environment (a), the driver needs to move forward as far as possible. The green cells are observable to the driver and the red cells (with the green cells) are observable to the machine.  For (b) and (c), the value gap is defined as the difference between optimal value $V^*$ (or $V^*_{\beta}$) and $V^{\hat{\pi}}$ (or $V^{\hat{\pi}_{\beta}}_{\beta}$), with the red dashed line as the benchmark for $0$ loss of the policy.  (b) shows the value gaps of RFE-AD and UCB-AD with respect to the training episode. (c) shows the value gaps when evaluating RFE-$\beta$ with $\beta=\{0.1,0.2,...,1\}$. }
\label{fig:car}
\end{figure}

\end{document}